\def\eqref#1{equation~\ref{#1}}
\def\Eqref#1{Equation~\ref{#1}}
\def\1{\bm{1}}
\def\mI{{\bm{I}}}
\DeclareMathAlphabet{\mathsfit}{\encodingdefault}{\sfdefault}{m}{sl}
\SetMathAlphabet{\mathsfit}{bold}{\encodingdefault}{\sfdefault}{bx}{n}
\newcommand{\Cov}{\mathrm{Cov}}
\newtheorem{remark}{Remark}
\newcommand*\diff{\mathop{}\!\mathrm{d}}
\title{Differentially private learners for heterogeneous treatment effects}
\author{Maresa Schröder, Valentyn Melnychuk \& Stefan Feuerriegel\\
LMU Munich\\
Munich Center for Machine Learning (MCML)\\
\texttt{\{maresa.schroeder,melnychuk,feuerriegel\}@lmu.de} 
}
\newcommand{\indep}{\perp \!\!\! \perp}
\definecolor{lightgray}{HTML}{F5F5F5}
\definecolor{lightblue}{HTML}{DAE8FC}
\definecolor{lightred}{HTML}{F8CECC}
\definecolor{darkred}{HTML}{B85450}
\renewcommand{\Eqref}[1]{Eq.~(\ref{#1})}
\DeclareRobustCommand\cases[1]{\tikz[baseline=(char.base)]{\node[shape=circle,draw=black,minimum size=0.35cm,inner sep=0pt,fill=lightblue] (char) {\fontfamily{phv}\selectfont \scriptsize \textbf{#1}};}}
\newcommand{\abs}[1]{\left| #1 \right|}
\newcommand{\framework}{\textsf{\small DP-CATE}\xspace}
\begin{document}

\maketitle

\begin{abstract}
Patient data is widely used to estimate heterogeneous treatment effects and thus understand the effectiveness and safety of drugs. Yet, patient data includes highly sensitive information that must be kept private. In this work, we aim to estimate the conditional average treatment effect (CATE) from observational data under differential privacy. Specifically, we present \framework, a novel framework for CATE estimation that is \emph{Neyman-orthogonal} and further ensures \emph{differential privacy} of the estimates. Our framework is highly general: it applies to any two-stage CATE meta-learner with a Neyman-orthogonal loss function, and any machine learning model can be used for nuisance estimation. We further provide an extension of our \framework, where we employ RKHS regression to release the complete CATE function while ensuring differential privacy. We demonstrate our \framework across various experiments using synthetic and real-world datasets. To the best of our knowledge, we are the first to provide a framework for CATE estimation that is Neyman-orthogonal and differentially private.
\end{abstract}


\section{Introduction}
\label{sec:Introduction}

\vspace{-0.2cm}
Machine learning (ML) is increasingly used for estimating treatment effects from observational data \citep[e.g.,][]{Baiardi.2024, Braun.2024, Ellickson.2023, Feuerriegel.2024}. Yet, this involves sensitive information about individuals, and, hence, methods are often needed to ensure privacy. 

\begin{wrapfigure}[9]{r}{0.5\textwidth}
    \vspace{-1.1cm}
    \includegraphics[width=1\linewidth]{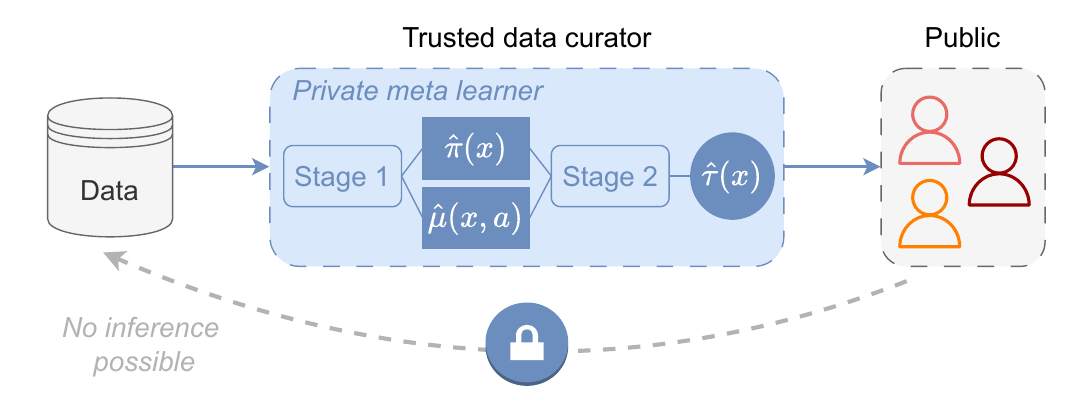}
    \vspace{-0.8cm}
    \caption{\textbf{Setting: CATE estimation under DP.} Only the trusted data curator can access the data, while published CATE estimates do not allow private information about individuals to be inferred.}
    \label{fig:privacy}    
\end{wrapfigure}

\textbf{Motivating example:} \emph{Electronic health records (EHRs) are commonly used to estimate treatment effects and thus to personalize care. Yet, EHRs capture highly sensitive data about patients \citep{Brothers.2015}. Hence, many regulations, such as the US Health Insurance Portability and Accountability Act (HIPAA), mandate strong privacy guarantees for ML in medicine.}

To ensure the privacy of information contained in the training data of ML models, multiple \emph{privacy mechanisms} have been introduced. Arguably, the most common mechanism is \emph{differential privacy} (DP) \citep{Dwork.2006, Dwork.2009}. DP builds upon the idea of injecting noise into algorithms so that sufficient information about the complete population in a dataset is kept while safeguarding sensitive information about individuals. Importantly, DP enjoys stringent theoretical guarantees and is widely used across different fields \citep[e.g.,][]{Abadi.2016, Bassily.2014, Wang.2019}.

However, methods for treatment effect estimation under DP are scarce. Existing work has primarily focused on the \emph{average treatment effect} (ATE) \citep[e.g.,][]{Lee.2019, Ohnishi.2023}. However, the ATE fails to capture important variations in how different subgroups or individuals respond to treatments. Therefore, many applications such as personalized medicine are interested in the \emph{conditional average treatment effect} (CATE) \citep[e.g.,][]{Ballmann.2015, Feuerriegel.2024}.  

In this paper, we estimate the CATE from observational data under DP (Fig.~\ref{fig:privacy}). Specifically, we propose \framework, an output perturbation mechanism for Neyman-orthogonal CATE estimators that satisfies DP. Neyman-orthogonal estimators are generally preferred over standard plug-in estimators, as they are less dependent on the estimation errors of nuisance functions \citep{Morzywolek.2023}.

Our DP framework is highly flexible and can be combined with all weighted Neyman-orthogonal two-stage CATE learners, such as the R-learner \citep{Nie.2020}. Further, our framework is model-agnostic and can be used with any ML model as a base learner. To the best of our knowledge, we are the first to provide a framework for Neyman-orthogonal CATE estimation under differential privacy. Our \framework is designed for two use cases relevant to medical practice: 
\begin{itemize}[leftmargin=0.5cm]
\vspace{-0.2cm}
    \item[\cases{1}] \emph{Finitely many queries:} Reporting research findings about medical studies that involve sensitive data requires that finitely many CATE values are estimated, such as treatment effects of a drug for various patient subgroups. In this setting, we treat the different CATE estimates as a potentially high-dimensional vector, for which we derive DP guarantees. Interestingly, we later employ a largely unexplored connection between Neyman-orthogonality and privacy, which allows us to base our \framework on efficient influence functions ($\rightarrow$~our Theorem~\ref{thm:finite_queries}).
    \vspace{-0.1cm}
    \item[\cases{2}] \emph{Functional query:} Medical researchers may want to have access to the \emph{complete CATE function}. This is relevant when deploying a CATE function in clinical decision support systems where predictions about treatment effects are made for every incoming patient. Hence, this requires querying the CATE function a large number of times, but where the exact number is a~priori unknown. In this case, the respective CATE vector would have an infinite dimension, and, as a result, privately releasing the complete CATE function cannot be performed in the same manner as in the first use case. As a remedy, we derive a tailored privacy framework for functional queries, where we make use of tools from functional analysis to calibrate a Gaussian process, which we add to the CATE function estimated through RKHS regression ($\rightarrow$~our Theorem~\ref{thm:infinite_queries}).
\end{itemize}\vspace{-0.2cm}

\emph{Why is it non-trivial to derive privacy mechanisms for CATE estimation?} Common DP strategies include perturbations of either the data, model, or output \citep[e.g.,][]{Abadi.2016, Chaudhuri.2011}. Yet, a na{\"i}ve application of such perturbations would naturally violate causal assumptions and/or lead to CATE estimates that are biased. Furthermore, the CATE is an unobservable, functional quantity. However, common privatization mechanisms are only developed for vector-valued quantities. Thus, it is \textbf{not} possible to follow the standard procedure of adding calibrated noise to the algorithm. Rather, we have to derive a novel, non-trivial framework that is tailored to our setting. 

\textbf{Our contributions:}\footnote{The source code is available at our \href{https://github.com/m-schroder/DP-CATE}{GitHub} repository.}
(1)~We propose a novel framework for CATE estimation that is differentially private and Neyman-orthogonal. (2)~We extend our framework to privately release both CATE estimates and even the complete CATE function. (3)~We demonstrate our proposed framework for differentially private CATE estimation in experiments across various datasets.


\vspace{-0.2cm}
\section{Related work}
\label{sec:related_work}
\vspace{-0.2cm}
We provide a brief overview of the different literature streams relevant to our work, namely, (i)~CATE estimation, (ii)~differential privacy, and (iii)~works that adapt DP to treatment effect estimation. 

\textbf{CATE estimation:} Popular methods for estimating CATE from observational data are the Neyman-orthogonal meta-learners, such as the DR-learner \citep{Kennedy.2023, vanderLaan.2006} and the R-learner \citep{Nie.2020}. A strength of meta-learners is that these are model-agnostic and can thus be instantiated with arbitrary ML models (e.g., neural networks). Neyman-orthogonal learners (also known as {debiased} learners) have several additional benefits. (i)~The learners achieve quasi-oracle efficiency\footnote{Informally, quasi-oracle efficiency means that the target model is learned almost equally well using either the estimated nuisance functions or the ground truth.}, which is guaranteed due to the first-order insensitivity to errors in the estimation of nuisance functions. As a result, the estimators are asymptotically equivalent to the oracle estimator (= the estimator that has access to the oracle nuisance functions), thereby mitigating the finite-sample bias arising from the misspecification of the nuisance functions \citep{Mackey.2018, Morzywolek.2023}. (ii)~The DR-learner is further doubly robust: it yields consistent estimation even when either the outcome or the propensity model is not correctly specified. (iii)~The R-learner is less sensitive to overlap violations due to its overlap weighting of the loss.

In this paper, we focus on DP for Neyman-orthogonal meta-learners. We derive a framework for the R-learner in the main paper and provide an extension to the DR-learner in Supplement~\ref{sec:appendix_dr_learner}.

\textbf{Differential privacy:} DP ensures that the release of aggregated results does not reveal information about individual data samples, typically with strict theoretical results \citep{Dwork.2006,Dwork.2009}. As a result, DP has been employed in various fields of machine learning \citep[e.g.,][]{Abadi.2016, Bassily.2014, Wang.2019}, but typically \emph{outside} of CATE estimation. We discuss different strategies on how DP can be achieved in Supplement~\ref{sec:appendix_background}. For example, one strategy is output perturbation, where one adds calibrated random noise to the non-private model prediction prior to its release in order to ensure DP \citep[e.g.,][]{Chaudhuri.2011, Zhang.2022c}.

In our setting, we later adopt output perturbation to CATE estimation. Output perturbation has two clear advantages in our task: (i)~it can be applied to \emph{any} ML model after training, which naturally fits the idea of meta-learners from above as model-agnostic approaches; and (ii)~it leaves the original objective and the data unchanged. The latter is crucial because changes to the input or the objective (i.e., the estimand) arguably could violate causal assumptions and lead to biased results. 

\begin{wrapfigure}[12]{r}{0.5\textwidth}
    \vspace{-0.8cm}
    \hspace{-0.5cm}
    \includegraphics[width=1.1\linewidth]{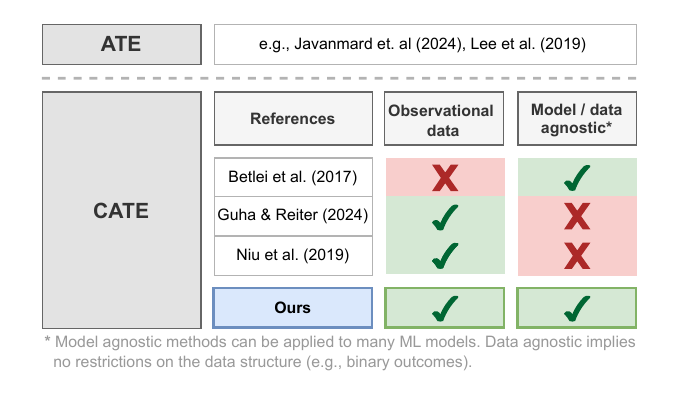}
    \vspace{-1cm}
    \caption{Comparison of relevant literature.}
    \label{fig:literature}    
\end{wrapfigure}

\textbf{DP in treatment effect estimation:} The existing literature on DP methods for treatment effect estimation is sparse. We provide an overview in Fig.~\ref{fig:literature}, where we group prior works by the underlying estimand: $\bullet$\,\emph{Average treatment effect (ATE)}. Many works focus on privately estimating the ATE \citep[e.g.,][]{Javanmard.2024, Lee.2019, Yao.2024}. Yet, the ATE is a much simpler causal quantity than the CATE. It makes population-wide estimates and, thus, unlike the CATE, does \emph{not} allow to make individualized predictions about treatment effects.
$\bullet$\,\emph{Conditional average treatment effect (CATE)}. The few works aimed at DP for CATE estimation have clear \emph{limitations} (Fig.~\ref{fig:literature}): they (i)~are either restricted to interventional data from RCTs and thus \emph{not} applicable to observational data \citep{Betlei.2021}; (ii)~are \emph{only} applicable to binary outcomes \citep{Guha.2024}; or (iii)~require special private base learners and are thus \emph{not} model-agnostic \citep{Niu.2022}. In particular, the latter work \citep{Niu.2022} is restricted to explainable boosting machines. Therefore, it is \emph{not} applicable to other models such as neural networks. In sum, \underline{none} of the above works provide a method for CATE estimation under DP where both observational data and different types of ML models can be used. 

\textbf{Research gap:} So far, a DP framework for CATE estimation from observational data with meta-learners is missing. We are thus the first to propose a Neyman-orthogonal framework for CATE estimation that fulfills DP.


\vspace{-0.2cm}
\section{Problem formulation}
\label{sec:problem_formulation}
\vspace{-0.2cm}
\textbf{Notation:} We write random variables as capital letters $X$ with realizations $x$. We denote the probability distribution over $X$ by $P_X$, where we omit the subscript whenever it is apparent from the context. We denote the probability mass function / density function by $P(x) = P(X=x)$. We rely on the potential outcomes framework \citep{Rubin.2005} and denote the outcome under intervention $a$ by $Y(a)$. Finally, $\lVert \mathbf{z} \rVert_2 = \sqrt{z_1^2 + \dots + z_d^2}$ is an $l_2$ norm for $\mathbf{z} \in \mathcal{Z}^d$; $\lVert f\rVert_{L_k} = (\mathbb{E}\abs{f(Z)}^k)^{1/k}$ is an $L_k$ norm; $a \lesssim b$ means there exists $C \ge 0$ such that $a \le C \cdot b$; and $X_n \in o_{P}(r_n)$ means $X_n/r_n \stackrel{p}{\to} 0$. 

\textbf{Setting:} We consider a dataset $\bar{D} := \{(X_i, A_i, Y_i)\}_{i=1, \ldots , 2n}$, consisting of observed confounders $X$ in a bounded domain $\mathcal{X} \subseteq \mathbb{R}^q$, a binary treatment $A \in \{0,1\}$, and a bounded discrete or continuous outcome $Y \in \mathcal{Y}$, where $Z_i := (X_i, A_i, Y_i) \sim P$ i.i.d., $Z_i \in \mathcal{Z}$. Let $\pi(x) := P(A=1\mid X=x)$ define the propensity score and $\mu(x,a) := \mathbb{E}[Y \mid X=x, A=a]$ the outcome function.

\textbf{Estimand:} Our objective is to estimate the conditional average treatment effect (CATE) $\tau(x)$ for individuals with covariates $X=x$. We make the standard assumptions for causal treatment effect estimation: positivity, consistency, and unconfoundedness \citep[e.g.,][]{Curth.2021, Rubin.2005}. \footnote{We give more details on the standard causal assumptions and CATE estimation in Supplement~\ref{sec:appendix_cate_estimation}.} Then, the CATE is identifiable as
\begin{equation}\label{eq:cate}
    \tau(x) := \mathbb{E}[Y(1) -Y(0) \mid X = x] = \mu(x, 1) - \mu(x, 0).
\end{equation}
In our work, we aim to estimate $\tau(x)$ by (i) using Neyman-orthogonal meta-learners (Sec.~\ref{sec:meta-learners}) while (ii) ensuring differential privacy (Sec.~\ref{sec:differential_privacy}), which we briefly review in the following.

\subsection{Neyman-orthogonal meta-learners for CATE estimation}
\label{sec:meta-learners}
\vspace{-0.2cm}
To estimate the CATE from \Eqref{eq:cate}, the common approach is to regress the difference between the potential outcomes (i.e., $Y(1)-Y(0)$) on the covariates $X$. Thus, one considers the population risk for a working model $g \in \mathcal{G}: \mathcal{X} \mapsto \mathbb{R}$ via
\begin{align}
    R_P(g, \eta, \lambda(\pi)) = \mathbb{E}[\lambda(\pi(X)) \, ((\mu(X, 1) - \mu(X, 0))- g(X))^2] \, + \, \Lambda(g),
\end{align}
where $\eta = (\mu, \pi)$ are nuisance functions, $\lambda(\cdot) > 0$ is a \emph{weight function}, and $\Lambda(g)$ is a regularization term. We denote a population minimizer of $R_P(g, \eta, \lambda(\pi))$ with $g^\ast(\cdot; \eta) = \arg \min_{g \in \mathcal{G}} R_P(g, \eta, \lambda(\pi))$ \citep{Hirano.2003, Morzywolek.2023}. However, $R_P$ cannot be directly estimated and subsequently minimized given the data $\bar{D}$, as it depends on the unknown nuisance functions $\pi$ and $\mu$. One could employ the estimated nuisance functions (i.e., $\hat{\pi}_{\bar{D}}$ and $\hat{\mu}_{\bar{D}}$) here, but then, their estimation errors propagate into the minimization of $R_P$.   

A popular approach to circumvent the above problem is to use Neyman-orthogonal meta-learners. Formally, such meta-learners operate in two stages \citep[e.g.,][]{Kennedy.2023b, Nie.2020}. Let the dataset $\bar{D}$ be a disjoint union of the two subsets $\tilde{D}$ and $D$ of size $n$, i.e., $\bar{D} = \tilde{D} \cup D$.
In the first stage, the meta-learners estimate \emph{nuisance functions} $\hat{\eta}_{\tilde{D}} = (\hat{\pi}_{\tilde{D}},\hat{\mu}_{\tilde{D}})$ on dataset $\tilde{D}$, and, in the second stage, we minimize the adapted Neyman-orthogonal population risk function
\begin{align}\label{eq:orthogonal_loss}
    && R_P(g,\eta, \lambda(\pi)) & = \mathbb{E}\left[\rho(A,\pi(X)) \, (\phi(Z,\eta,\lambda(\pi(X)))- g(X))^2\right] \, + \, \Lambda(g)\\
\text{with} \quad && 
    \rho(a,\pi) & := (a-\pi(x)) \, \lambda^{'}(\pi(x)) + \lambda(\pi(x)) \quad \text{and} \\
&& 
    \phi(z,\eta,\lambda(\pi)) & := \frac {\lambda(\pi(x))}{\rho(a,\pi(x))}\frac{a-\pi(x)}{\pi(x) \, (1-\pi(x))}(y -\mu(x,a)) + \mu(x,1) - \mu(x,0)
\end{align}
on dataset $D$, where $\hat{\eta}_{\tilde{D}}$ is used in place of ${\eta}$ \citep{Morzywolek.2023}. The use of this loss function allows for quasi-oracle efficiency of the final estimator. In the following, we denote the estimated population risk $R_P$ via a loss $R_D$ that is dependent on the data $D$, namely
\begin{align}
    && R_D(g,\eta, \lambda(\pi)) & = \frac{1}{n}\sum_{i=1}^n\rho(A_i,\pi(X_i)) \, (\phi(Z_i,\eta,\lambda(\pi(X_i)))- g(X_i))^2 \, + \, \Lambda(g).
\end{align}
Later, we use the R-learner \citep{Nie.2020}, which is given by $\lambda^\mathrm{R}(\pi) = \pi(x) \, (1-\pi(x))$, due to its theoretical advantages (e.g., Neyman-orthogonality and oracle-efficiency). Importantly, Neyman-orthogonal meta-learners such as the R-learner achieve state-of-the-art performance, and the orthogonality property makes the models less sensitive to the misspecification of the nuisance functions \citep[e.g.,][]{Curth.2021, Kennedy.2023b, Melnychuk.2025}.

\subsection{Differential privacy}
\label{sec:differential_privacy}
\vspace{-0.2cm}
Differential privacy (DP) ensures that the inclusion or exclusion of data from any individual does not significantly affect the estimated outcome \citep{Dwork.2006,Dwork.2009}. Specifically, for a given \emph{privacy budget} $\varepsilon$, DP ensures that the probability density of any outcome $y$ on dataset $D \in \mathcal{Z}^n$ is \emph{$\varepsilon$-indistinguishable} from the probability density of the same outcome $y$ stemming from a neighboring dataset $D^{'} \in \mathcal{Z}^n$ with a probability of at least $1-\delta$. The datasets $D$ and $D^{'}$ are called \emph{neighbors}, denoted as $D \sim D^{'}$, if their Hamming distance equals one, i.e., $d_\mathrm{H} (D, D^{'})=1$. 

\begin{definition}[Differential privacy \citep{Dwork.2009}]\label{def:diff_privacy}
    A function $\hat{f}_D(\mathbf{x}): \mathcal{X}^d \mapsto \mathbb{R}^d$ trained on a dataset $D$ is $(\varepsilon, \delta)$-\emph{differentially private} if, for all neighboring datasets $D$, $D^{'} \in \mathcal{Z}^n$ and all measurable $S \subseteq \mathbb{R}^d$ , it holds that
    \begin{align}
        P(\hat{f}_D(\mathbf{x}) \in S) \leq \exp(\varepsilon) \cdot P(\hat{f}_{D^{'}}(\mathbf{x}) \in S) + \delta \qquad \text{for all } \mathbf{x} \in \mathcal{X}^d.
    \end{align}
\end{definition}
\vspace{-0.2cm}
One common strategy to ensure DP is \emph{output perturbation} \citep{Chaudhuri.2011,Zhang.2022c}, which we later tailor to CATE estimation as part of our framework. Intuitively, one perturbs the prediction in a way that the predictions resulting from two neighboring databases cannot be differentiated. It has been shown \citep[e.g.,][]{Dwork.2014} that adding appropriately calibrated zero-centered noise (e.g., Gaussian noise) to the prediction is sufficient to ensure differential privacy for \emph{traditional}, supervised machine learning tasks (but not for CATE estimation, as we discuss later). This is stated in the following \emph{Gaussian noise privacy mechanism}.

\begin{definition}[Gaussian noise privacy mechanism \citep{Dwork.2014}]
\label{def:gaussian}
    Let $\hat{f}_D(\mathbf{x}): \mathcal{X}^d \mapsto \mathbb{R}^d$ be a function on dataset $D$ with $l_2$-sensitivity $\Delta_2(\hat{f}) = \sup _{D \sim D^{'}\!\!,\, \mathbf{x} \in \mathcal{X}^d} \lVert\hat{f}_D(\mathbf{x}) - \hat{f}_{D^{'}}(\mathbf{x})\rVert_2$ and  $\mathbf{U} \sim \mathcal{N}(0, \sigma \mI_d)$ for $\sigma \geq \frac{1}{\varepsilon}\sqrt{2\ln{(1.25/\delta)}} \  \Delta_2(\hat{f})$. Then, the output perturbation mechanism returns $\mathcal{M}: \hat{f}_\mathrm{DP}(\mathbf{x}) = \hat{f}_D(\mathbf{x}) + \mathbf{U}$ that preserves $(\varepsilon, \delta)$-differential privacy.
\end{definition}

Definition~\ref{def:gaussian} describes how to ensure DP for a given prediction. However, this requires estimating the training sensitivity $\Delta_2(\hat{f})$ of the employed model $\hat{f}$, which, for general function classes such as neural networks, is infeasible. Hence, this motivates our custom framework later.

\vspace{-0.2cm}
\subsection{Problem statement}
\vspace{-0.2cm}
In our work, we aim at Neyman-orthogonal CATE estimation under differential privacy. Specifically, we aim to derive an $(\varepsilon, \delta)$-differentially private version of $\hat{g}_D(\cdot; \eta)= \arg \min_{g \in \mathcal{G}} R_D(g, \eta, \lambda(\pi))$ of the form
\begin{align}
    \hat{g}_\mathrm{DP}(\mathbf{x}; \eta) = \hat{g}_D(\mathbf{x}; \eta) + r(\varepsilon, \delta, \hat{g}_D, \eta) \cdot \mathbf{U},
\end{align}
where $\hat{g}_D(\mathbf{x}; \eta) = (\hat{g}_D(x_1; \eta), \dots, \hat{g}_D(x_d; \eta))^\top$,  $\mathbf{U} \sim \mathcal{N}(0, \mI_d)$, and where $r(\cdot)$ is a \emph{calibration function}. Importantly, we consider \emph{arbitrary} working model classes $\mathcal{G}$.

Our problem statement -- and thus our framework -- is intentionally flexible. (1)~We assume that the propensity score is \emph{not} known and, instead, is estimated from observational data. (2)~We focus on Neyman-orthogonal meta-learners because these are \emph{model-agnostic} and can thus be seamlessly instantiated with any machine learning model, including neural networks. (3)~Our derivations are general and, therefore, apply to \emph{any} orthogonal loss of the form in \Eqref{eq:orthogonal_loss}. Below, we present our \framework framework for the R-learner due to its state-of-the-art performance. We additionally provide an extension of our framework for the DR-learner in Supplement~\ref{sec:appendix_dr_learner}. 

The above task is highly \emph{non-trivial} as the identification of CATE relies on the standard causal assumptions of {positivity}, {consistency}, and {unconfoundedness} \citep{Rubin.2005}. Yet, privacy mechanisms perturb different parts of the data (or the model), which could arguably violate the causal assumptions (in the case of the input perturbation) or lead to \emph{biased} estimates (in the case of the model perturbation). Thus, instead of blindly introducing noise to the estimation setup to guarantee DP, we must cautiously calibrate the noise in a targeted setup to retain the consistency and quasi-oracle efficiency of the privatized CATE estimators.

\vspace{-0.4cm}
\section{Our framework: \framework}
\label{sec:method}
\vspace{-0.3cm}
\textbf{Overview:} We now present our \framework framework aimed at CATE estimation for Neyman-orthogonal meta-learners under $(\varepsilon,\delta)$-\emph{differential privacy}. To address the challenges from above, we employ {output perturbation}, which is highly suitable to our purpose for two reasons. First, it allows us to ensure that causal assumptions are fulfilled even after perturbation. Second, we retain the favorable Neyman-orthogonality properties of the existing CATE estimation methods.

\textbf{Use cases:} Our framework comes in two variants, relevant for different use cases in medical practice: 
\begin{itemize}[leftmargin=.6cm]
\vspace{-0.2cm}
\item[\cases{1}] \framework \textbf{for finite queries ($\rightarrow$~Sec.~\ref{sec:method_finite_queries}):} We aim to report a number $d$ of CATE estimates (e.g., treatment effects across different age groups). $\Rightarrow$~\emph{How do we solve this?} We draw upon the shared property of robustness and privacy of ML models in terms of insensitivity to outliers and small measurement errors~\citep{Dwork.2009}. We then propose calibrating the noise added with a function $r(\cdot)$ that depends on the meta-learner's second-stage influence function. Importantly, our model-agnostic approach preserves the quasi-oracle guarantees of the non-private model stemming from the Neyman-orthogonal loss function. 
\item[\cases{2}] \framework \textbf{for functional queries ($\rightarrow$~Sec.~\ref{sec:functional_queries}):} We release an estimate $\hat{g}_\mathrm{DP}$ of the complete CATE function $\tau$, which can then be queried arbitrarily often (e.g, as in clinical decision support systems). Existing output perturbation mechanisms only apply to scalar or finite-dimensional vector-valued outputs. Therefore, the above is only valid if the overall number of queries made to the algorithm is both finite and known before the perturbation. $\Rightarrow$~\emph{How do we solve this?} We derive an output perturbation method based on Gaussian processes that is valid for all functional CATE estimates solving~\Eqref{eq:orthogonal_loss}, as long as the estimation in the second stage is performed through a Gaussian kernel regression.
\end{itemize}

\subsection{\framework for a fixed number of queries}
\label{sec:method_finite_queries}
\vspace{-0.2cm}
Here, a total number of $d$ CATE estimates should be released. The number of queries, $d$, to the CATE function is known a~priori. For notational simplicity, we thus rewrite the $d$ separate CATE estimates as a $d$-dimensional vector. We employ bold letters in the following to emphasize that we are interested in a vectorized version of the CATE meta-learner output $\hat{g}_D(\cdot; \eta)$, i.e., $\hat{g}_D(\mathbf{x}; \eta) \in \mathbb{R}^d$.

We now derive a calibration function $r(\cdot)$ that is applicable to any Neyman-orthogonal CATE meta-learner. We employ $r(\cdot)$ to calibrate a noise vector $\mathbf{U}$ with respect to the privacy budget $(\varepsilon, \delta)$ and the model sensitivity. Finally, we perturb $\hat{g}_D(\mathbf{x}; \eta)$ to fulfill DP through ${\hat{g}_\mathrm{DP}(\mathbf{x}; \eta)} = \hat{g}_D(\mathbf{x}; \eta) + r(\varepsilon, \delta, \hat{g}_D, \eta) \cdot \mathbf{U}$.

For this, we borrow ideas from the literature that observed similarities between differential privacy and robust statistics \citep[e.g.,][]{AvellaMedina.2021, Dwork.2009} but which we carefully tailor to our setting in the following. Our idea is to employ the so-called \emph{influence function} (IF) of the second-stage CATE model to calibrate the noise. The IF allows us to quantify how a single observation in $D$ influences CATE estimation and the model output. Intuitively, the IF describes the effect of an infinitesimally small perturbation of the input $z$ on the model output.

\begin{definition}\label{def:IF}
    Let $T$ be a functional of a distribution that defines the parameter of interest, $T =T(P) \in \mathbb{R}^d$. 
    The \emph{gross-error sensitivity} of $T$ at $z$ under $P$ is given by the supremum of the $l_2$ norm of the influence function, i.e., 
    \begin{align} \label{eq:gross-error}
        \gamma(T,P) := \sup_{z \in \mathcal{Z}} \lVert \mathrm{IF}(z,T;P) \rVert_2,
    \end{align}
    where $\mathrm{IF}(z,T;P) = \frac{\diff}{\diff t} \big[T((1-t)P + t\delta_z)\big] \big\vert_{t=0}$ is an influence function (IF) of $T$ at $z$ under a distribution with the density/probability mass function $P$, and $\delta_z$ denotes the Dirac-delta function.
\end{definition}

\vspace{-0.2cm}
Next, we derive the IF of the second-stage models for CATE meta-learners. Observe that $T(P)$ depends on the data distribution directly through $P$ and indirectly through the first-stage estimation. Specifically, we have $T(P) = g^{\ast}(\mathbf{x}; \eta)$ and $T(D) = \hat{g}_D(\mathbf{x}; \eta)$ if the nuisance functions were known; and $T(P) = {g}^\ast(\mathbf{x}; \hat{\eta}_{\tilde{D}})$ and $T(D) = \hat{g}_D(\mathbf{x}; \hat{\eta}_{\tilde{D}})$ if the nuisance functions are estimated.

We now state our main theorem for differentially private CATE estimation with a known number $d$ of queries. The intuition behind constructing the calibration function $r(\cdot)$ builds upon a result in \citet{Nissim.2007} in which $(\varepsilon, \delta)$-differential privacy is achieved through calibrating noise with respect to the smooth sensitivity of the prediction model in comparison to the commonly employed global sensitivity from Definition~\ref{def:gaussian}. 
However, calculating the smooth sensitivity is still difficult or even infeasible for general function classes. Nevertheless, we show that the smooth sensitivity of the second-stage model can be upper bounded by the gross-error sensitivity $\gamma$ of the second-stage regression.\footnote{See Lemma~\ref{lem:smooth_sensitivity_bound} in Supplement~\ref{sec:appendix_lemmas}}

\begin{restatable}[\framework for finite queries]{theorem}{finitequeries}
\label{thm:finite_queries}
    Let $z:= (a,x,y)$ define a data sample following the joint distribution $\mathcal{Z}$ and ${\hat{\eta}_\mathrm{DP} = (\hat{\mu}_\mathrm{DP},\hat{\pi}_\mathrm{DP})}$ the nuisance functions estimated on dataset $\tilde{D}$ in a $(\varepsilon/2, \delta/2)$-differentially private manner of choice.\footnote{As we need to query the nuisance functions multiple times, the privatization method needs to be suitable for functions, such as gradient perturbation through DP-SGD \citep{Abadi.2016}.} Furthermore, let $D$ be the training dataset for the second-stage model with $\vert D \vert$ = $n$. For $z=(a,x,y) \in \mathcal{Z}$, we define
    \footnotesize{
    \begin{align}
        {\hat{g}_\mathrm{DP}(\mathbf{x}; \hat{\eta}_\mathrm{DP})} &:= \hat{{g}}_D(\mathbf{x}; \hat{\eta}_\mathrm{DP}) + \underbrace{\gamma\big(T, D\big) \cdot c(\varepsilon, \delta,n)}_{r(\varepsilon, \delta, \hat{g}_D, \hat{\eta}_\mathrm{DP})} \cdot \mathbf{U}, \qquad T(P) = {g}^\ast(\mathbf{x}; \hat{\eta}_\mathrm{DP}), \\
        \gamma\big(T, P\big) &=\sup_{z \in \mathcal{Z}} \bigg \lVert {h({{g}}^\ast, \mathbf{x}, x, \hat{\eta}_\mathrm{DP}) \, \rho(a,{\hat{\pi}_\mathrm{DP}(x)})}
        \big(\phi(z,{\hat{\eta}_\mathrm{DP}},\lambda({\hat{\pi}_\mathrm{DP}(x)}))- {{{g}}}^\ast(x; \hat{\eta}_\mathrm{DP})\big) \bigg \rVert_2,
    \end{align}
    }\normalsize 
    where $\gamma\big(T, D\big)$ is a sample gross-error sensitivity with $T(D)$ instead of $T(P)$ in Eq.~(\ref{eq:gross-error}), $\mathbf{U} \sim \mathcal{N}(0, \mI_d)$, $c(\varepsilon, \delta,n):= {5\sqrt{2\ln(n)\ln{(2/\delta)}}} \big/(\varepsilon n)$, and where $h({{g}}^\ast, \mathbf{x}, z, \hat{\eta}_\mathrm{DP}) \in \mathbb{R}^d$ and ${{{g}}}^\ast(\cdot; \hat{\eta}_{\mathrm{DP}})$
    depend on the machine learning model employed for the second-stage regression.
    Then, $\hat{g}_\mathrm{DP}(\mathbf{x}; \hat{\eta}_\mathrm{DP})$ is $(\varepsilon, \delta)$-differentially private. 
\end{restatable}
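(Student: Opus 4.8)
The plan is to view the full pipeline as an adaptive composition of two differentially private releases and to reduce the second one to the smooth-sensitivity Gaussian mechanism of \citet{Nissim.2007}. First I would invoke the composition property of DP: the first-stage nuisance estimation is $(\varepsilon/2,\delta/2)$-DP by hypothesis, so it suffices to establish that the second-stage output-perturbation step is $(\varepsilon/2,\delta/2)$-DP; the composition theorem then yields the claimed $(\varepsilon,\delta)$ guarantee. Two structural facts make this clean: $\hat\eta_\mathrm{DP}$ is computed from the disjoint subsample $\tilde D$ and is merely reused downstream, so by the post-processing property it retains its privacy with respect to $\tilde D$; and since $\tilde D\cap D=\emptyset$, any single changed record perturbs only one of the two stages. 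For the second stage I would therefore hold $\hat\eta_\mathrm{DP}$ fixed and treat $T(D)=\hat g_D(\mathbf{x};\hat\eta_\mathrm{DP})$ as a deterministic functional of $D$ alone, so that the neighboring relation $D\sim D'$ acts solely through the second-stage regression and its calibrated noise.

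Next I would apply the smooth-sensitivity mechanism: releasing $T(D)+\big(S^\ast_{T,\beta}(D)/\alpha\big)\,\mathbf U$ with $\mathbf U\sim\mathcal N(0,\mI_d)$ is $(\varepsilon/2,\delta/2)$-DP provided $(\alpha,\beta)$ satisfy the admissible-noise conditions of \citet{Nissim.2007} for this budget; the $\varepsilon^{-1}$ scaling and the $\sqrt{2\ln(2/\delta)}$ dependence (with $\ln(2/\delta)=\ln(1/(\delta/2))$ reflecting the even split of $\delta$) are exactly those appearing in $c(\varepsilon,\delta,n)$. The crucial step is then to replace the intractable smooth sensitivity $S^\ast_{T,\beta}(D)$ by the computable gross-error sensitivity $\gamma(T,D)$ of Definition~\ref{def:IF}, using Lemma~\ref{lem:smooth_sensitivity_bound}, which I may assume: it supplies a bound of the form $S^\ast_{T,\beta}(D)\lesssim \gamma(T,D)\,\sqrt{\ln n}/n$. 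Substituting this bound converts the required noise scale $S^\ast_{T,\beta}(D)/\alpha$ into precisely $\gamma(T,D)\cdot c(\varepsilon,\delta,n)$, the calibration used in the theorem, which completes the privacy argument for the second stage.

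It then remains to verify that the stated supremum really is the gross-error sensitivity of $T(P)=g^\ast(\mathbf{x};\hat\eta_\mathrm{DP})$. For this I would differentiate the first-order optimality condition of the weighted orthogonal risk in \Eqref{eq:orthogonal_loss} implicitly along the contaminated path $(1-t)P+t\delta_z$ and evaluate at $t=0$, as prescribed in \Eqref{eq:gross-error}. Since $g^\ast$ solves an estimating equation of the form $\mathbb{E}_P[\psi(Z,g^\ast)]=0$, its influence function is $-M^{-1}\psi(z,g^\ast)$ with $M$ the derivative of the estimating equation; here $\psi$ is the gradient of the squared orthogonal loss in $g$, namely the per-sample score $\rho(a,\hat\pi_\mathrm{DP}(x))\,(\phi(z,\hat\eta_\mathrm{DP},\lambda(\hat\pi_\mathrm{DP}(x)))-g^\ast(x;\hat\eta_\mathrm{DP}))$, while the inverse-Hessian factor and the evaluation at the $d$ query points $\mathbf{x}$ are absorbed into the model-dependent term $h(g^\ast,\mathbf{x},x,\hat\eta_\mathrm{DP})\in\mathbb R^d$. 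Taking the supremum over $z\in\mathcal Z$ recovers the stated formula, and the sample version $\gamma(T,D)$ follows by substituting the empirical distribution for $P$.

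The main obstacle is Lemma~\ref{lem:smooth_sensitivity_bound}, i.e., controlling the smooth sensitivity by the gross-error sensitivity. The influence function captures only the first-order, infinitesimal effect of contaminating $P$, whereas the local sensitivity $\mathrm{LS}_T(D)$ is a genuine finite difference arising from swapping one of $n$ records, and the smooth sensitivity further maximizes this over all neighbors weighted by $e^{-\beta k}$ at Hamming distance $k$. Making the heuristic $\mathrm{LS}_T(D)\approx \gamma(T,D)/n$ rigorous requires a uniform bound on the higher-order remainder of the von Mises expansion together with a summation of the exponentially down-weighted local sensitivities, and it is this summation that produces the extra $\sqrt{\ln n}$ factor. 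This is the technical heart of the argument and the step I would expect to demand the most care.
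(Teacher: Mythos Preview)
Your plan follows the paper's proof almost exactly: privacy of the nuisances via composition and post-processing, the smooth-sensitivity Gaussian mechanism of \citet{Nissim.2007} for the second stage, the bound $SS_\xi\le\tfrac{\sqrt{\ln n}}{n}\gamma(T,D)$ from Lemma~\ref{lem:smooth_sensitivity_bound}, and the influence-function computation by implicitly differentiating the first-order condition of the weighted orthogonal loss (the paper writes this via the implicit function theorem, which is exactly your $-M^{-1}\psi$ with $\psi$ the per-sample score and $M$ the Hessian).

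The one slip is the budget accounting. You ask the second-stage perturbation to be $(\varepsilon/2,\delta/2)$-DP and then invoke sequential composition, reading $\ln(2/\delta)=\ln(1/(\delta/2))$ as evidence that the constants in $c(\varepsilon,\delta,n)$ reflect the halved $\delta$. That reading is off: the ``2'' inside $\ln(2/\delta)$ is intrinsic to the smooth-sensitivity Gaussian bound (Lemma~\ref{lem:smooth_sensitivity_dp}) at level $(\varepsilon,\delta)$, and calibrating that mechanism at $(\varepsilon/2,\delta/2)$ would instead produce $\tfrac{10\sqrt{2\ln(4/\delta)\ln n}}{\varepsilon n}\,\gamma(T,D)\neq c(\varepsilon,\delta,n)\,\gamma(T,D)$, so your argument would not deliver the constant in the theorem. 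The paper uses the disjointness $\tilde D\cap D=\emptyset$ that you already noted, but through \emph{parallel} (not sequential) composition: the two $(\varepsilon/2,\delta/2)$-DP nuisances compose sequentially on $\tilde D$ to an $(\varepsilon,\delta)$-DP first stage, the second-stage mechanism spends the \emph{full} budget $(\varepsilon,\delta)$ with respect to $D$, and parallel composition over the disjoint subsamples gives $(\varepsilon,\delta)$-DP on $\bar D=\tilde D\cup D$. With that allocation, Lemma~\ref{lem:smooth_sensitivity_dp} combined with Lemma~\ref{lem:smooth_sensitivity_bound} yields precisely $c(\varepsilon,\delta,n)=\tfrac{5\sqrt{2\ln(2/\delta)\ln n}}{\varepsilon n}$.
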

\vspace{-0.2cm}
\begin{proof}
    We prove Theorem~\ref{thm:finite_queries} in Supplement~\ref{sec:appendix_proofs}. To do so, we first state the IF of the second-stage model that minimizes a weighted loss. Then, we calculate the gross-error sensitivity to show that the sample-size-weighted sensitivity upper-bounds the smooth sensitivity of the respective learner. 
\end{proof}

In the case of the R-learner, Theorem~\ref{thm:finite_queries} yields the following gross-error sensitivity $\gamma^\mathrm{R}\big(T, P\big)$: 
\scriptsize
\begin{align} 
    \gamma^\mathrm{R}\big(T, P\big) &= \sup_{z \in \mathcal{Z}} \bigg \lVert h({{g}}^\ast, \mathbf{x}, x, \hat{\eta}_\mathrm{DP}) \underbrace{(a - \hat{\pi}_\mathrm{DP}(x))^2}_{\rho(a,{\hat{\pi}_\mathrm{DP}})}  \bigg( \underbrace{\frac{y - \hat{\mu}_\mathrm{DP}(x,a)}{a - \hat{\pi}_\mathrm{DP}(x)} + \hat{\mu}_\mathrm{DP}(x,1) - \hat{\mu}_\mathrm{DP}(x,0)}_{\phi(z,{\hat{\eta}_\mathrm{DP}},\lambda(\hat{\pi}_\mathrm{DP}))} - g^\ast(x; \hat{\eta}_\mathrm{DP}) \bigg) \bigg \rVert_2.
\end{align}
\normalsize
In the following, we present two corollaries stating the form of the function $h({{g}}^\ast, \mathbf{x}, z, \hat{\eta}_\mathrm{DP})$ for parametric models such as neural networks, as well as the non-parametric kernel ridge regression estimator.
 
\begin{restatable}[Parametric second-stage regression]{corollary}{corrparam}\label{cor:parametric}
    If the second-stage regression is a smooth parametric model, namely $\mathcal{G} = \{g(x; \theta): \mathcal{X} \mapsto \mathbb{R}, \theta \in \Theta \subseteq \mathbb{R}^p \}$, then, in Theorem~\ref{thm:finite_queries}, we have
    \small
    \begin{align}
        g^\ast(\cdot; \hat{\eta}_\mathrm{DP})= g(\cdot; \theta^\ast) \quad \text{and} \quad h({{g}}^\ast, \mathbf{x}, x, \hat{\eta}_\mathrm{DP}) = 2 \,J_\theta \big[{g}^\ast(\mathbf{x}; \hat{\eta}_\mathrm{DP})\big] \cdot H_{{\theta}}^{-1} \cdot \nabla_\theta \big[ g^\ast(x; \hat{\eta}_\mathrm{DP})\big],
    \end{align}
    \normalsize
    where $\theta^\ast = \arg \min\limits_{\theta \in \Theta} R_P(g, \hat{\eta}_\mathrm{DP}, \lambda(\hat{\pi}_\mathrm{DP}))$; $J_\theta \in \mathbb{R}^{d\times p}$ is a Jacobian matrix wrt. $\theta$; $H_{{\theta}} = \nabla_\theta^2\big[R_P(g^\ast, \hat{\eta}_\mathrm{DP}, \lambda(\hat{\pi}_\mathrm{DP})) \big] \in  \mathbb{R}^{p\times p}$ is a Hessian matrix; and $\nabla_\theta$ is a gradient.
\end{restatable}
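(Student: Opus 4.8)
The plan is to recognize the population second-stage minimizer $\theta^\ast = \theta^\ast(P)$ as an M-estimator (Z-estimator) and to obtain the influence function of the prediction functional $T(P) = g(\mathbf{x}; \theta^\ast(P))$ by implicit differentiation of its first-order stationarity condition, followed by the chain rule. A key simplification I would state up front is that the nuisances $\hat{\eta}_\mathrm{DP}$ are estimated on the disjoint split $\tilde{D}$ and are held fixed under differentiation; hence $T(P)$ depends on the contaminated distribution $P_t = (1-t)P + t\delta_z$ only through the second-stage objective, and we never differentiate through the first-stage estimation. Writing the per-sample second-stage loss as $\ell(z, \theta) = \rho(a, \hat{\pi}_\mathrm{DP}(x))\,(\phi(z, \hat{\eta}_\mathrm{DP}, \lambda(\hat{\pi}_\mathrm{DP}(x))) - g(x; \theta))^2$, smoothness of $g(\cdot; \theta)$ gives the score
\begin{align}
\psi(z, \theta) := \nabla_\theta \ell(z, \theta) = -2\,\rho(a, \hat{\pi}_\mathrm{DP}(x))\,\big(\phi(z, \hat{\eta}_\mathrm{DP}, \lambda(\hat{\pi}_\mathrm{DP}(x))) - g(x; \theta)\big)\,\nabla_\theta g(x; \theta),
\end{align}
and $\theta^\ast(P)$ is characterized by the stationarity condition $\mathbb{E}_P[\psi(Z, \theta^\ast)] + \nabla_\theta \Lambda(g(\cdot;\theta^\ast)) = 0$.

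Next I would differentiate this stationarity identity along the contamination path $P_t$ at $t=0$. Since $\Lambda$ is deterministic (independent of the data measure), its gradient does not vary with $t$ and therefore contributes only to the second-derivative term, while the $z$-dependent contribution comes from $\partial_t \mathbb{E}_{P_t}[\psi] = \psi(z,\theta^\ast) - \mathbb{E}_P[\psi(Z,\theta^\ast)]$. Using that the data part of the estimating equation is mean-zero at an interior population optimum, implicit differentiation yields the standard M-estimator influence function
\begin{align}
\mathrm{IF}(z, \theta^\ast; P) = -H_\theta^{-1}\,\psi(z, \theta^\ast), \qquad H_\theta = \nabla_\theta^2\big[R_P(g^\ast, \hat{\eta}_\mathrm{DP}, \lambda(\hat{\pi}_\mathrm{DP}))\big],
\end{align}
where $H_\theta$ is exactly the Hessian of the full (regularized) population risk, matching the corollary's definition; note that the penalty enters only through $H_\theta$.

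Finally, because the released quantity is the $d$-vector of predictions $T(P) = g(\mathbf{x}; \theta^\ast)$, I would apply the chain rule $\mathrm{IF}(z, T; P) = J_\theta[g(\mathbf{x}; \theta^\ast)]\cdot \mathrm{IF}(z, \theta^\ast; P)$ with Jacobian $J_\theta[g(\mathbf{x}; \theta^\ast)] \in \mathbb{R}^{d\times p}$. Substituting $\psi$ and factoring out the scalar residual-weight term gives
\begin{align}
\mathrm{IF}(z, T; P) = \underbrace{2\,J_\theta[g^\ast(\mathbf{x}; \hat{\eta}_\mathrm{DP})]\,H_\theta^{-1}\,\nabla_\theta[g^\ast(x; \hat{\eta}_\mathrm{DP})]}_{h(g^\ast, \mathbf{x}, x, \hat{\eta}_\mathrm{DP})}\cdot \rho(a, \hat{\pi}_\mathrm{DP}(x))\,\big(\phi(z, \hat{\eta}_\mathrm{DP}, \lambda(\hat{\pi}_\mathrm{DP}(x))) - g^\ast(x; \hat{\eta}_\mathrm{DP})\big),
\end{align}
which is precisely the factorization of the integrand of $\gamma(T, P)$ in Theorem~\ref{thm:finite_queries}, with $g^\ast(\cdot; \hat{\eta}_\mathrm{DP}) = g(\cdot; \theta^\ast)$. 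Reading off the braced factor establishes the claimed form of $h$ and completes the corollary.

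The main obstacle is rigour rather than algebra: I expect the effort to lie in verifying the regularity conditions that legitimize the M-estimator influence-function formula — Gateaux differentiability of $\theta^\ast(\cdot)$ along contamination paths, invertibility (and positive-definiteness) of $H_\theta$, and dominated-convergence conditions permitting differentiation under $\mathbb{E}_P[\psi]$. A secondary subtlety is the bookkeeping of the regularizer $\Lambda$: one must confirm it affects only $H_\theta$ and not the $z$-dependent numerator, and that the centering term $\mathbb{E}_P[\psi(Z,\theta^\ast)]$ vanishes at the population optimum (or is otherwise immaterial to the gross-error sensitivity). Once these are in place, the chain-rule computation and the identification of $h$ are routine.
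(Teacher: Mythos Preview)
Your proposal is correct and follows essentially the same approach as the paper: recognize $\theta^\ast$ as an M-estimator, obtain its influence function via implicit differentiation of the first-order condition along the contamination path, and then apply the chain rule through the Jacobian $J_\theta[g(\mathbf{x};\theta^\ast)]$ to pass to the prediction functional. The paper's proof is terser---it invokes the abstract expression for $h$ derived in the proof of Theorem~\ref{thm:finite_queries} and simply identifies the two factors with $J_\theta[g^\ast(\mathbf{x})]\,H_\theta^{-1}$ and $\nabla_\theta g^\ast(x)$---whereas you re-derive the M-estimator influence function directly in the parametric coordinates, but the mathematical content is the same.
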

\vspace{-0.2cm}
\begin{proof}
    We prove Corollary~\ref{cor:parametric} in Supplement~\ref{sec:appendix_proofs}.
\end{proof}

\textbf{A note on applicability:}
Corollary~\ref{cor:parametric} requires access to and invertibility of $H_{{\theta}}$. This might restrict the applicability of certain deep neural networks for the second-stage regression. However, we note that, in Theorem~\ref{thm:finite_queries}, we are only interested in predicting the CATE at certain values $\mathbf{x}$. This motivates the local CATE estimation through kernel weighting. We present our alternative approach below.

\begin{restatable}[Non-parametric second-stage regression]{corollary}{corrnonparam} \label{cor:non_parametric}
    If the second-stage regression is a kernel ridge regression with $\Lambda(g) = \lambda \lVert g \rVert^2_{\mathcal{H}}$, where $\mathcal{G} = \mathcal{H}$ is a reproducing kernel Hilbert space (RKHS) induced by a kernel $K(\cdot, \cdot): \mathcal{X} \times \mathcal{X} \mapsto \mathbb{R}^+$, then, in  Theorem~\ref{thm:finite_queries}, we have
    \begin{align}
         g^\ast(\cdot; \hat{\eta}_\mathrm{DP}) = \big(\mathbb{L}_\rho + \lambda\mathbb{I}\big)^{-1} {S}(\cdot) \quad \text{and} \quad h({{g}}^\ast, \mathbf{x}, x, \hat{\eta}_\mathrm{DP}) = \big(\mathbb{L}_\rho + \lambda\mathbb{I}\big)^{-1} K(\cdot, x)(\mathbf{x}),
    \end{align}
    where $\mathbb{L}_\rho: \mathcal{H} \mapsto \mathcal{H}$ is a weighted covariance operator $\mathbb{L}_\rho f(\cdot) = \mathbb{E}\big[\rho(A, \hat{\pi}_\mathrm{DP}(X)) \, K(\cdot, X) f(X)\big]$; $\lambda\mathbb{I}f(\cdot) = \lambda f(\cdot)$ is a scaling operator; ${S} \in \mathcal{H}$ is a cross-covariance functional ${S}(\cdot) = \mathbb{E}\big[\rho(A, \hat{\pi}_\mathrm{DP}(X)) \, K(\cdot, X) \, \phi(Z,{\hat{\eta}_\mathrm{DP}},\lambda({\hat{\pi}_\mathrm{DP}(X)}))\big]$.
\end{restatable}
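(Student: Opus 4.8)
The plan is to (i) pin down the population minimizer $g^\ast(\cdot;\hat{\eta}_\mathrm{DP})$ from the first-order stationarity condition of the regularized weighted risk in the RKHS, and then (ii) obtain $h$ by differentiating the resulting closed form along the contamination path $P_t=(1-t)P+t\delta_z$, since by Definition~\ref{def:IF} this derivative at $t=0$ is exactly the influence function whose $z$-dependent factor defines $h$.

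First I would derive $g^\ast$. Because $\mathcal{G}=\mathcal{H}$ and $\Lambda(g)=\lambda\lVert g\rVert_{\mathcal{H}}^2$ with $\lambda>0$, the map $g\mapsto R_P(g,\hat{\eta}_\mathrm{DP},\lambda(\hat{\pi}_\mathrm{DP}))$ is strictly convex and coercive, so its unique minimizer is characterized by a vanishing G\^ateaux derivative. Writing $g(X)=\langle g,K(\cdot,X)\rangle_{\mathcal{H}}$ through the reproducing property and differentiating in a direction $f\in\mathcal{H}$ gives
\begin{align*}
    \left\langle -2\,\mathbb{E}\big[\rho(A,\hat{\pi}_\mathrm{DP}(X))\,\big(\phi(Z,\hat{\eta}_\mathrm{DP},\lambda(\hat{\pi}_\mathrm{DP}(X)))-g(X)\big)\,K(\cdot,X)\big]+2\lambda\,g,\ f\right\rangle_{\mathcal{H}}=0.
\end{align*}
Since this holds for every $f$, the bracketed element of $\mathcal{H}$ must vanish, which rearranges to the operator equation $(\mathbb{L}_\rho+\lambda\mathbb{I})\,g^\ast=S$, with $\mathbb{L}_\rho$ and $S$ exactly the weighted covariance operator and cross-covariance functional in the statement. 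As $\mathbb{L}_\rho$ is positive and self-adjoint and $\lambda>0$, the operator $\mathbb{L}_\rho+\lambda\mathbb{I}$ is boundedly invertible, yielding $g^\ast(\cdot;\hat{\eta}_\mathrm{DP})=(\mathbb{L}_\rho+\lambda\mathbb{I})^{-1}S(\cdot)$.

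Next I would compute the influence function. Let $A_t:=\mathbb{L}_\rho^{P_t}+\lambda\mathbb{I}$ and $S_t:=S^{P_t}$ be the operator and functional built from $P_t$, so $g^\ast_{P_t}=A_t^{-1}S_t$. Linearity of the defining expectations in the measure gives the contamination derivatives $\partial_t S_t|_{t=0}=-S(\cdot)+\rho(a,\hat{\pi}_\mathrm{DP}(x))\,\phi(z,\hat{\eta}_\mathrm{DP},\lambda(\hat{\pi}_\mathrm{DP}(x)))\,K(\cdot,x)$ and $\partial_t\mathbb{L}_\rho^{P_t}|_{t=0}f=-\mathbb{L}_\rho f(\cdot)+\rho(a,\hat{\pi}_\mathrm{DP}(x))\,f(x)\,K(\cdot,x)$. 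Differentiating $g^\ast_{P_t}=A_t^{-1}S_t$ with the resolvent identity $\partial_t A_t^{-1}=-A_t^{-1}(\partial_t A_t)A_t^{-1}$ and using $\partial_t A_t=\partial_t\mathbb{L}_\rho^{P_t}$ gives
\begin{align*}
    \mathrm{IF}(z,T;P)=(\mathbb{L}_\rho+\lambda\mathbb{I})^{-1}\Big(\partial_t S_t|_{t=0}-\big(\partial_t\mathbb{L}_\rho^{P_t}|_{t=0}\big)\,g^\ast\Big).
\end{align*}
Substituting the two derivatives and collecting the $z$-independent terms via $\mathbb{L}_\rho g^\ast=S-\lambda g^\ast$ (so $-S+\mathbb{L}_\rho g^\ast=-\lambda g^\ast$) yields
\begin{align*}
    \mathrm{IF}(z,T;P)(\cdot)=(\mathbb{L}_\rho+\lambda\mathbb{I})^{-1}\Big(\rho(a,\hat{\pi}_\mathrm{DP}(x))\,\big(\phi(z,\hat{\eta}_\mathrm{DP},\lambda(\hat{\pi}_\mathrm{DP}(x)))-g^\ast(x)\big)\,K(\cdot,x)-\lambda\,g^\ast(\cdot)\Big).
\end{align*}
Evaluating the contamination-dependent summand at the $d$ query points $\mathbf{x}$ factors as the scalar $\rho(a,\hat{\pi}_\mathrm{DP}(x))\,(\phi(z,\hat{\eta}_\mathrm{DP},\lambda(\hat{\pi}_\mathrm{DP}(x)))-g^\ast(x))$ times the vector $h({g}^\ast,\mathbf{x},x,\hat{\eta}_\mathrm{DP})=(\mathbb{L}_\rho+\lambda\mathbb{I})^{-1}K(\cdot,x)(\mathbf{x})\in\mathbb{R}^d$, which is exactly the form feeding $\gamma^\mathrm{R}$ in Theorem~\ref{thm:finite_queries}.

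The main obstacle is the operator-analytic bookkeeping rather than the algebra. I would need to justify that $t\mapsto g^\ast_{P_t}$ is differentiable at $t=0$ (Hadamard/G\^ateaux differentiability of the RKHS M-estimator), that $\partial_t$ commutes with the expectations defining $\mathbb{L}_\rho^{P_t}$ and $S^{P_t}$, and that the resolvent-derivative identity holds in operator norm — all of which rest on the bounded invertibility of $\mathbb{L}_\rho+\lambda\mathbb{I}$ guaranteed by $\lambda>0$ together with boundedness of $K$ on the compact domain $\mathcal{X}$. A secondary point is the residual $z$-independent term $-\lambda(\mathbb{L}_\rho+\lambda\mathbb{I})^{-1}g^\ast$: it does not depend on the contaminating sample and is consistent with Fisher consistency, since $\mathbb{E}_P[\mathrm{IF}(Z,T;P)]=0$ (the population average of the first summand reproduces $\lambda g^\ast$ and cancels it); hence it does not affect the factor $h$ extracted for the gross-error sensitivity in Theorem~\ref{thm:finite_queries}.
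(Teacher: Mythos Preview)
Your approach mirrors the paper's: both obtain $g^\ast=(\mathbb{L}_\rho+\lambda\mathbb{I})^{-1}S$ from first-order stationarity in $\mathcal{H}$ (the paper simply invokes the population representer theorem, you derive it via the G\^ateaux derivative) and then differentiate this closed form along $P_t=(1-t)P+t\delta_z$ using the resolvent identity $\partial_t A_t^{-1}=-A_t^{-1}(\partial_t A_t)A_t^{-1}$.

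The one substantive difference is bookkeeping around the regularizer. The paper's proof silently takes $\partial_t\mathbb{L}_{P_t}\big|_{t=0}=L_{\delta_z}$ and $\partial_t S_{P_t}\big|_{t=0}=S_{\delta_z}$, i.e.\ it drops the $-\mathbb{L}_\rho$ and $-S$ pieces; this is equivalent to scaling $\Lambda(g)$ by $(1-t)$ along with the data term, and it makes the residual vanish exactly. Your derivation keeps $\Lambda(g)$ fixed under contamination, which is the convention under which Fisher consistency $\mathbb{E}_P[\mathrm{IF}]=0$ holds (and your cancellation argument for that is correct). Either way the $z$-dependent summand factors identically and yields the same $h=(\mathbb{L}_\rho+\lambda\mathbb{I})^{-1}K(\cdot,x)(\mathbf{x})$, which is all the corollary claims. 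Just be aware that your final clause---that the residual ``does not affect'' the gross-error sensitivity---is right for the \emph{identification of $h$} in the corollary, but a $z$-independent offset does shift $\sup_z\lVert\mathrm{IF}\rVert_2$ numerically; the paper avoids this only through its contamination convention, not through any additional argument.
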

\vspace{-0.2cm}
\begin{proof}
    We prove Corollary~\ref{cor:non_parametric} in Supplement~\ref{sec:appendix_proofs}.
\end{proof}

\textbf{Scalability:} The complexity of calculating $\gamma(\cdot)$ is \emph{independent} of the size of the dataset once the second-stage model $\hat{g}_D(\cdot; \hat{\eta}_\mathrm{DP})$ is fitted. 

\begin{restatable}[Neyman-orthogonality and quasi-oracle efficiency of \framework]{theorem}{quasioracle}\label{thm:efficiency}
    The privatization of the second-stage model asymptotically preserves the property of Neyman-orthogonality, namely 
    \small
    \begin{align}
        \lVert g^\ast(\cdot; \eta) - \hat{g}_\mathrm{DP}(\cdot; \hat{\eta}_\mathrm{DP})\rVert_{L_2}^2 & \lesssim R_P\big({g}^\ast(\cdot; {\hat{\eta}_\mathrm{DP}}), {\hat{\eta}_\mathrm{DP}},  \lambda(\hat{\pi}_\mathrm{DP})\big) -R_P\big({g}^\ast(\cdot; {\eta}), {\hat{\eta}_\mathrm{DP}},  \lambda(\hat{\pi}_\mathrm{DP})\big) + R_2(\hat{\eta}_\mathrm{DP}, \eta) \nonumber \\ 
        & \quad + \underbrace{\lVert g^\ast(\cdot; \hat{\eta}_\mathrm{DP})   - \hat{g}_D(\cdot; \hat{\eta}_\mathrm{DP}) \rVert_{L_2}^2}_{\text{depends on the model class $\mathcal{G}$}} + \underbrace{o_P(n^{-1}).}_{\text{output perturbation}}
    \end{align}
    \normalsize
    Furthermore, under additional regularity conditions on the privatization of the nuisance functions (e.g., gradient perturbation), our \framework achieves quasi-oracle efficiency. Specifically, if the original estimation of the nuisance functions is at rate of at least $o_P(n^{-{1/4}})$, then the privatized estimation preserves this rate.  
\end{restatable}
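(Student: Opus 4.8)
The plan is to bound the oracle error $\lVert g^\ast(\cdot;\eta) - \hat g_\mathrm{DP}(\cdot;\hat\eta_\mathrm{DP})\rVert_{L_2}^2$ by a chain of triangle inequalities that separate three error sources — the privatization noise, the in-class estimation error, and the nuisance-induced bias — and then control each. Throughout I abbreviate $g^\ast_\eta := g^\ast(\cdot;\eta)$, $g^\ast_{\hat\eta} := g^\ast(\cdot;\hat\eta_\mathrm{DP})$, $\hat g_D := \hat g_D(\cdot;\hat\eta_\mathrm{DP})$ and $\hat g_\mathrm{DP} := \hat g_\mathrm{DP}(\cdot;\hat\eta_\mathrm{DP})$. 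Inserting the non-private estimator and its population counterpart and using $(a+b+c)^2 \lesssim a^2+b^2+c^2$,
\[
\lVert g^\ast_\eta - \hat g_\mathrm{DP}\rVert_{L_2}^2 \lesssim \underbrace{\lVert g^\ast_\eta - g^\ast_{\hat\eta}\rVert_{L_2}^2}_{\text{nuisance bias}} + \underbrace{\lVert g^\ast_{\hat\eta} - \hat g_D\rVert_{L_2}^2}_{\text{model class}} + \underbrace{\lVert \hat g_D - \hat g_\mathrm{DP}\rVert_{L_2}^2}_{\text{perturbation}}.
\]
The middle term is the model-class term appearing verbatim in the statement, so the work is to treat the nuisance-bias term and to show the perturbation term is $o_P(n^{-1})$.

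For the perturbation term I would use $\hat g_\mathrm{DP} - \hat g_D = \gamma(T,D)\,c(\varepsilon,\delta,n)\,\mathbf U$ from Theorem~\ref{thm:finite_queries}, so that its squared norm equals $\gamma(T,D)^2\,c(\varepsilon,\delta,n)^2\,\lVert\mathbf U\rVert_2^2$. Since $c(\varepsilon,\delta,n)^2 = O\!\big(\ln(n)\ln(1/\delta)/(\varepsilon^2 n^2)\big) = o(n^{-1})$ and $\lVert\mathbf U\rVert_2^2 \sim \chi^2_d = O_P(1)$, it remains only to argue $\gamma(T,D) = O_P(1)$. This is where the paper's boundedness assumptions enter: bounded $\mathcal X$ and $Y$, overlap (so the R-learner weight $\rho$ has strictly positive conditional mean), and the regularity of $h$ from Corollaries~\ref{cor:parametric}/\ref{cor:non_parametric} (a well-conditioned Hessian, resp.\ a bounded kernel with $\lambda>0$) together keep the gross-error sensitivity stochastically bounded. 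Hence the perturbation contributes $o_P(n^{-1})$, matching the final term.

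For the nuisance-bias term I would invoke strong convexity of the weighted quadratic risk at the true nuisances — valid in plain $L_2$ after using overlap to lower-bound the $\rho$-weights — to obtain $\lVert g^\ast_\eta - g^\ast_{\hat\eta}\rVert_{L_2}^2 \lesssim R_P(g^\ast_{\hat\eta},\eta) - R_P(g^\ast_\eta,\eta)$, and then telescope through the estimated-nuisance risk:
\[
R_P(g^\ast_{\hat\eta},\eta) - R_P(g^\ast_\eta,\eta) = \big[R_P(g^\ast_{\hat\eta},\hat\eta_\mathrm{DP}) - R_P(g^\ast_\eta,\hat\eta_\mathrm{DP})\big] + \big[\text{nuisance-switching remainder}\big].
\]
The first bracket is precisely the risk-difference term in the statement (negative, since $g^\ast_{\hat\eta}$ minimizes $R_P(\cdot,\hat\eta_\mathrm{DP})$, but carried along in the telescoping so the whole right-hand side stays nonnegative), and the remainder is the difference of two nuisance-switching increments evaluated at $g^\ast_{\hat\eta}$ and $g^\ast_\eta$. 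Here the Neyman-orthogonality of the loss in Eq.~(\ref{eq:orthogonal_loss}) does the decisive work: the directional (Gateaux) derivative of $\eta\mapsto R_P(g,\eta)$ vanishes at the minimizer, so the first-order effect of $\hat\eta_\mathrm{DP}-\eta$ cancels and only the second-order remainder $R_2(\hat\eta_\mathrm{DP},\eta)$ survives. Collecting the three bounds reproduces the displayed oracle inequality.

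Finally, for quasi-oracle efficiency I would substitute rates. Under the stated additional regularity conditions on the privatized first stage (e.g.\ DP-SGD preserving consistency), if $\hat\eta_\mathrm{DP}$ converges at $o_P(n^{-1/4})$, then $R_2(\hat\eta_\mathrm{DP},\eta)$ — a product of two nuisance errors — is $o_P(n^{-1/2})$, while the perturbation term $o_P(n^{-1})$ is strictly lower-order and hence negligible. Thus the privatized estimator attains the same rate as the oracle that uses the true nuisances. I expect the main obstacle to be the interplay of the two privatization layers: establishing that the gross-error sensitivity $\gamma(T,D)$ stays $O_P(1)$, so the output noise is genuinely lower-order, and that privatizing the nuisances does not degrade their $o_P(n^{-1/4})$ rate — the orthogonal-learning decomposition itself being standard and citable from \citet{Morzywolek.2023}.
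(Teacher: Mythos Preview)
Your proposal is correct and follows essentially the same route as the paper: a three-term triangle decomposition into nuisance bias, model-class error, and output-perturbation error; the perturbation term is controlled exactly as you describe (the paper packages this as an ``excess risk'' computation showing $r(\varepsilon,\delta,\hat g_D,\hat\eta_\mathrm{DP})^2\,\mathbb E\lVert U\rVert_{L_2}^2 = O(n^{-2}\ln n) \in o_P(n^{-1})$), and the nuisance-bias term is handled by directly citing the strong-convexity/orthogonality bound from \citet{Morzywolek.2023}, whose telescoping argument you spell out. For the quasi-oracle part, the paper likewise instantiates a concrete privatization (DP-SGD) and checks that its $O_P(\sqrt m/(n\varepsilon))$ rate is $o_P(n^{-1/4})$ under a parameter-count condition --- precisely the ``additional regularity conditions'' you flag as the main obstacle.
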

\vspace{-0.2cm}
\begin{proof}
    We prove Theorem~\ref{thm:efficiency} in Supplement~\ref{sec:appendix_proofs}.
\end{proof}

\subsection{\framework for complete CATE functions}
\label{sec:functional_queries}
\vspace{-0.2cm}
In this variant of \framework, we seek to privately release an estimate $\hat{g}_D(\cdot; \eta)$ of the complete CATE function $\tau(\cdot)$. Note that we cannot leverage Theorem~\ref{thm:finite_queries} because it is only applicable to CATE estimates but not to complete functions. Instead, we must now derive a tailored approach.

Intuitively, we need to find (I)~a type of noise and (II)~a calibration function that does not depend on $d$. More precisely, the added noise should be a function itself to guarantee the privacy of the CATE function. For (I), we propose to add a {calibrated Gaussian process} (GP) to the predicted CATE.~\footnote{We refer to \citet{Rasmussen.2006} for an in-depth introduction to Gaussian processes.} 

\begin{definition}[Gaussian process]
    A family of random variables $\{X_t\}_{t\in T}$ is a Gaussian process if for any subset $S \in T$, $\{X_t\}_{t\in S}$ has a Gaussian distribution. The process is entirely determined by its mean function $m(t):= \mathbb{E}[X_t]$ and covariance function $K(s,t):= \Cov(X_s, X_t)$. 
\end{definition}
\vspace{-0.2cm}
We are left with answering question (II) from above: \emph{how to calibrate the GP noise to fulfill DP?} We make the following important observation: If $\hat{g}_D(\cdot; \eta)$ lies in a reproducing kernel Hilbert space (RKHS), we can calibrate the GP noise with respect to the RKHS norm using results from functional analysis \citep{Hall.2013}. To ensure that $\hat{g}_D(\cdot; \eta)$ indeed lies in an RKHS, we can later follow prior research  \citep[e.g.,][]{Kennedy.2023, Singh.2024} and model the second-stage estimation in \framework framework as a Gaussian kernel regression.

We now want to bound the difference of CATE functions trained on neighboring datasets with respect to the norm of the Hilbert space. Recall that differential privacy of function $f$ under the Gaussian mechanism in Def.~\ref{def:gaussian} requires knowledge about $\sup _{D \sim D^{'}\!\!,\, \mathbf{x} \in \mathcal{X}^d} ||\hat{f}_D(\mathbf{x}) - \hat{f}_{D^{'}}(\mathbf{x})||_2$ to calibrate the Gaussian noise variable. Similarly, we now require knowledge of $\sup_{D \sim D^{'}} ||\hat{f}_D - \hat{f}_{D^{'}}||_\mathcal{H}$ where $\hat{f}_D$ specifies an RKHS regression to calibrate the Gaussian process. However, to the best of our knowledge, no closed-form solution for this quantity exists. We thus derive the following lemma as an extension of \citet{Hall.2013} in our setting.

\begin{restatable}{lemma}{rkhslemma}\label{lem:RKHS_norm}
    Let $\mathcal{H}$ denote the RKHS induced by the Gaussian kernel $K(x, x') = {(\sqrt{2\pi}h)^{-q}}\exp(-\lVert x-x' \rVert_2^2\big/ (2h^2))$ for $x,x' \in \mathcal{X} \subseteq \mathbb{R}^q$, and let $\hat{f}_D$ be the optimal solution to the RKHS regression
    \vspace{-0.2cm}
    \begin{align} \label{eq:lemma1-weighted-loss}
        \hat{f}_D(\cdot) =\arg\min_{f \in \mathcal{H}} \frac{1}{n}\sum_{i=1}^n w(X_i) \cdot \ell(f(X_i),Y_i) + \lambda \lVert f \rVert_\mathcal{H}^2,
    \end{align}
    where $w(\cdot) > 0$ is a weight function, $D$ is a dataset with $|D| = n$, and $\ell(\hat{y},y)$ is a convex and Lipschitz loss function in $\hat{y}$ with Lipschitz constant $L$. 
    Then, for $D \sim D'$, we have
    \begin{align}
       ||\hat{f}_D - \hat{f}_{D^{'}}||_\mathcal{H} \leq \sup_{x \in \mathcal{X}}[w(x)] \, \frac{L}{\lambda \, n} \left( \sqrt{(2\pi)}h \right)^{-q}.
    \end{align}
\end{restatable}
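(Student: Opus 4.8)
The plan is to read this as an \emph{algorithmic stability} bound for the regularized RKHS empirical risk minimizer and prove it by the standard strong-convexity argument (Bousquet--Elisseeff-style stability, adapted to functions as in \citet{Hall.2013}). Write $J_D(f) := \frac{1}{n}\sum_{i=1}^n w(X_i)\,\ell(f(X_i),Y_i) + \lambda\lVert f\rVert_\mathcal{H}^2$ for the objective in \Eqref{eq:lemma1-weighted-loss}, and let $L_D(f)$ denote its data-dependent first term, so $J_D = L_D + \lambda\lVert\cdot\rVert_\mathcal{H}^2$. First I would note that, since $w(\cdot)>0$, each evaluation map $f\mapsto f(X_i)$ is linear, and $\ell(\cdot,y)$ is convex, $L_D$ is convex; hence $J_D$ is $2\lambda$-strongly convex in the RKHS norm, entirely because of the regularizer. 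Writing $g:=\hat f_D$, $g':=\hat f_{D'}$, and $\Delta:=g'-g$, strong convexity together with the first-order optimality $\nabla J_D(g)=0$ gives the descent inequality $J_D(f)-J_D(g)\ge \lambda\lVert f-g\rVert_\mathcal{H}^2$ for all $f$, and likewise for $J_{D'}$ at its minimizer $g'$.

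Next I would apply these at the swapped points: take $f=g'$ in the inequality for $J_D$ and $f=g$ in the inequality for $J_{D'}$, then add the two. The quadratic regularizer contributions cancel, leaving $[L_D(g')-L_D(g)] + [L_{D'}(g)-L_{D'}(g')] \ge 2\lambda\lVert\Delta\rVert_\mathcal{H}^2$. The left-hand side is exactly $B(g')-B(g)$ with $B:=L_D-L_{D'}$, and because $D\sim D'$ differ at a single index (Hamming distance one), every summand of $B$ cancels except the two loss terms attached to the differing sample. I would then bound $\lvert B(g')-B(g)\rvert$ using the $L$-Lipschitzness of $\ell$ in its first argument to replace each loss gap by $L\lvert\Delta(X)\rvert$, and the reproducing property $\lvert\Delta(x)\rvert=\lvert\langle\Delta,K(\cdot,x)\rangle_\mathcal{H}\rvert\le\lVert\Delta\rVert_\mathcal{H}\sqrt{K(x,x)}$, evaluated with $\sup_x K(x,x)=(\sqrt{2\pi}h)^{-q}$ for the stated Gaussian kernel and with $\sup_x w(x)$ pulled out of the weights. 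This produces $2\lambda\lVert\Delta\rVert_\mathcal{H}^2 \lesssim \tfrac{1}{n}\sup_x w(x)\,L\,\lVert\Delta\rVert_\mathcal{H}\,(\sqrt{2\pi}h)^{-q}$, and dividing through by $\lVert\Delta\rVert_\mathcal{H}$ yields the claimed bound $\lVert\hat f_D-\hat f_{D'}\rVert_\mathcal{H}\le \sup_x w(x)\,\frac{L}{\lambda n}(\sqrt{2\pi}h)^{-q}$.

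The hard part will be the comparison of two \emph{distinct} minimizers: perturbing one sample shifts the whole representer expansion of $\hat f_D$, so tracking coefficients directly is hopeless; the symmetric device of summing the two strong-convexity inequalities is precisely what circumvents this, collapsing everything onto the single differing loss term. Getting that cancellation right, and correctly handling the replace-one-sample neighbor convention (which yields two changed summands rather than one, hence the factor absorbed above), is the delicate bookkeeping. A secondary point I would verify is that convexity and $L$-Lipschitzness of $\ell$ in $\hat y$, combined with the strictly convex regularizer, guarantee a unique minimizer in $\mathcal{H}$ and validate the Fréchet-derivative optimality condition invoked throughout.
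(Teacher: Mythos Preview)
Your approach is correct and, at bottom, the same as the paper's---just unpacked. The paper's proof is essentially a citation: it notes that the weighted per-sample loss $w(\cdot)\,\ell(\cdot,\cdot)$ is $(\sup_{x}w(x)\cdot L)$-Lipschitz, hence ``admissible'' in the sense of \citet{Hall.2013}, and then invokes their stability bound $\lVert\hat f_D-\hat f_{D'}\rVert_\mathcal{H}\le \sup_x w(x)\cdot\frac{L}{\lambda n}\sqrt{\sup_x K(x,x)}$, finishing by plugging in $K(x,x)=(\sqrt{2\pi}h)^{-q}$. What you have written is precisely the strong-convexity/Bousquet--Elisseeff argument that proves the cited Hall result, so the two routes coincide once the black box is opened; your version has the virtue of being self-contained.

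One small slip to watch: your own reproducing-property step gives $\lvert\Delta(x)\rvert\le\lVert\Delta\rVert_\mathcal{H}\sqrt{K(x,x)}$, so after dividing through you should land on $(\sqrt{2\pi}h)^{-q/2}$, not $(\sqrt{2\pi}h)^{-q}$. The paper's proof likewise arrives at $\sqrt{\sup_x K(x,x)}$, so the exponent $-q$ in the lemma statement itself appears to be a typo rather than something your argument needs to reproduce.
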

\begin{proof}
    We prove Lemma~\ref{lem:RKHS_norm} in Supplement~\ref{sec:appendix_proofs}.
\end{proof}

\vspace{-0.2cm}
We now use the above results and present our DP framework for CATE functions: (i)~Stage 1: We estimate the $({\varepsilon}/{2}, {\delta}/{2})$-differentially private nuisance functions $\hat{\mu}_{\mathrm{DP}}$ and $\hat{\pi}_{\mathrm{DP}}$ through \emph{any} parametric or non-parametric machine learning method and perturbation method. (ii)~Stage 2: We perform a Gaussian kernel regression to minimize \Eqref{eq:orthogonal_loss}. (iii)~We calibrate a suitably chosen Gaussian process based on Lemma~\ref{lem:RKHS_norm} and add the resulting GP to the CATE function. In the case $\ell$ is a squared loss, the quasi-oracle efficiency of our framework directly follows from Theorem~\ref{thm:efficiency} and \citep{Foster.2019}. We present the pseudo-code for \framework in Supplement~\ref{sec:appendix_algorithm}.

\begin{restatable}[\framework for functional queries]{theorem}{functionalqueries}
\label{thm:infinite_queries}
    Let $\hat{\mu}_{\mathrm{DP}}$ and $\hat{\pi}_{\mathrm{DP}}$ denote the $({\varepsilon}/{2}, {\delta}/{2})$-differentially private nuisance estimators trained in stage 1 on $\tilde{D}$. Let $z=(a,x,y)$ be a data sample from dataset $D$ with $|D| = n$ and $x \in \mathcal{X} \subseteq \mathbb{R}^q$. Let $\mathcal{H}$ denote the RKHS induced by the kernel $K(x, x') = {(\sqrt{2\pi}h)^{-q}}\exp(-\lVert x-x' \rVert_2^2\big/ 2h^2)$, and let $\ell(\cdot, \cdot)$ be a convex and Lipschitz loss function with Lipschitz constant $L$. We define $\hat{g}_{D}(\cdot; \hat{\eta}_{\mathrm{DP}})$ as the second-stage regression solving \Eqref{eq:orthogonal_loss} via
    \begin{align}
        &\hat{g}_{D}(\cdot; \hat{\eta}_{\mathrm{DP}}) = \arg \min\limits_{g \in \mathcal{H}} \frac{1}{n}\sum_{i=1}^n \rho(A_i,{\hat{\pi}_\mathrm{DP}(X_i)}) \, \ell \big(g(X_i) , \phi(Z_i, \hat{\eta}_{\mathrm{DP}}, \lambda(\hat{\pi}_{\mathrm{DP}}(X_i)))\big) + \lambda \lVert g\rVert_{\mathcal{H}}^2.
    \end{align}
    Furthermore, let $U(\cdot) \in \mathcal{H}$ be the sample path of a zero-centered Gaussian process with covariance function $K(x,x')$. Then, $(\varepsilon, \delta)$-differential privacy is guaranteed by 
    \begin{align}
        \hat{g}_{\mathrm{DP}}(\cdot; \hat{\eta}_{\mathrm{DP}}) := \hat{g}_D(\cdot; \hat{\eta}_{\mathrm{DP}}) + \underbrace{\sup_{(a, x) \in \{0,1\} \times\mathcal{X}}\big[\rho(a,{\hat{\pi}_\mathrm{DP}(x)})\big] \,\frac{4L\sqrt{2\ln{(2/\delta)}}}{\big( \sqrt{2\pi} h \big)^q \lambda n \varepsilon}}_{r(\varepsilon, \delta, \hat{g}_D, \hat{\eta}_\mathrm{DP})}\, \cdot \, U (\cdot).
    \end{align}
\end{restatable}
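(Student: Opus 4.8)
The plan is to combine a two-stage privacy-composition argument with the functional Gaussian mechanism of \citet{Hall.2013}, where Lemma~\ref{lem:RKHS_norm} supplies the RKHS-norm sensitivity needed to calibrate the Gaussian-process noise. Concretely, I would first reduce the theorem to an $(\varepsilon/2,\delta/2)$ statement about the second stage alone, then certify that second-stage guarantee, and finally recombine with the nuisance stage.

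First I would set up the composition. The nuisance estimators $\hat{\mu}_{\mathrm{DP}},\hat{\pi}_{\mathrm{DP}}$ are $(\varepsilon/2,\delta/2)$-differentially private by assumption and are computed on the separate split $\tilde{D}$. The second-stage release $\hat{g}_{\mathrm{DP}}$ is a function of $\hat{\eta}_{\mathrm{DP}}$ and of the disjoint sample $D$, and its calibration scale $r$ depends on $\tilde{D}$ only through $\hat{\pi}_{\mathrm{DP}}$ via a supremum over $(a,x)\in\{0,1\}\times\mathcal{X}$ rather than over the data. Hence a change in a record of $\tilde{D}$ acts on $\hat{g}_{\mathrm{DP}}$ purely by post-processing of $\hat{\eta}_{\mathrm{DP}}$ and inherits the $(\varepsilon/2,\delta/2)$ guarantee, so it remains to show that, for each fixed $\hat{\eta}_{\mathrm{DP}}$, the second-stage mechanism is $(\varepsilon/2,\delta/2)$-DP with respect to changing one record of $D$; adaptive composition of the two stages then yields $(\varepsilon,\delta)$-DP.

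Second, I would quantify the RKHS sensitivity. The optimization defining $\hat{g}_D(\cdot;\hat{\eta}_{\mathrm{DP}})$ is precisely the weighted RKHS regression of Lemma~\ref{lem:RKHS_norm}, with weight $w(x)=\rho(a,\hat{\pi}_{\mathrm{DP}}(x))$ and with the pseudo-outcome $\phi(Z_i,\hat{\eta}_{\mathrm{DP}},\lambda(\hat{\pi}_{\mathrm{DP}}(X_i)))$ in the role of the label; the assumed convexity and $L$-Lipschitzness of $\ell$ in its first argument, together with positivity of $\rho$ (which holds for the R-learner, where $\rho=(a-\pi)^2$), place us in the regime of the lemma. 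Since $\hat{\pi}_{\mathrm{DP}}$ is fixed under changes of $D$, the weight function is common to both neighbours, and Lemma~\ref{lem:RKHS_norm} gives
\begin{align}
  \lVert \hat{g}_D(\cdot;\hat{\eta}_{\mathrm{DP}}) - \hat{g}_{D'}(\cdot;\hat{\eta}_{\mathrm{DP}}) \rVert_{\mathcal{H}} \;\leq\; \sup_{(a,x)\in\{0,1\}\times\mathcal{X}}\!\big[\rho(a,\hat{\pi}_{\mathrm{DP}}(x))\big]\,\frac{L}{\lambda n}\big(\sqrt{2\pi}\,h\big)^{-q} \;=:\; \Delta .
\end{align}
I would then invoke the functional Gaussian mechanism. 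The key fact (the Cameron--Martin theory, as exploited by \citet{Hall.2013}) is that a zero-mean Gaussian process with covariance $K$ has $\mathcal{H}$ as its Cameron--Martin space, so shifting its law by the $\mathcal{H}$-element $\hat{g}_D$ produces an equivalent Gaussian measure whose log-density ratio against the shift by $\hat{g}_{D'}$ is governed by $\lVert \hat{g}_D-\hat{g}_{D'}\rVert_{\mathcal{H}}\leq\Delta$. Reducing to any finite set of query points, the joint law of $U$ is the finite-dimensional Gaussian of Definition~\ref{def:gaussian}, and the Mahalanobis distance between the mean vectors $(\hat{g}_D(x_j))_j$ and $(\hat{g}_{D'}(x_j))_j$ in the metric of the covariance matrix $[K(x_i,x_j)]_{i,j}$ is bounded by $\Delta$; applying the finite-dimensional Gaussian mechanism uniformly over all finite marginals and passing to the function shows that adding $r\cdot U$ with $r=\Delta\sqrt{2\ln(2/\delta')}/\varepsilon'$ is $(\varepsilon',\delta')$-DP. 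Taking $(\varepsilon',\delta')=(\varepsilon/2,\delta/2)$ requires scale $2\Delta\sqrt{2\ln(4/\delta)}/\varepsilon$, and since the theorem's calibration $r=\sup[\rho]\,4L\sqrt{2\ln(2/\delta)}\big/\big((\sqrt{2\pi}h)^q\lambda n\varepsilon\big)$ satisfies $r\geq 2\Delta\sqrt{2\ln(4/\delta)}/\varepsilon$ for every $\delta\leq 1$, the (larger) noise level only strengthens privacy; recombining with the first stage closes the argument.

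I expect the main obstacle to be the rigorous transfer of the finite-dimensional Gaussian mechanism to the infinite-dimensional function release: one must verify that the RKHS-norm bound $\Delta$ dominates the Mahalanobis distance uniformly over \emph{every} finite query set, and that certifying $(\varepsilon/2,\delta/2)$-DP on all finite marginals suffices to certify privacy of the released path $\hat{g}_{\mathrm{DP}}(\cdot;\hat{\eta}_{\mathrm{DP}})$ as a function, with the attendant measurability care on the function space. This is exactly where the extension of \citet{Hall.2013} to the weighted, Neyman-orthogonal loss is required. A secondary subtlety, handled by the post-processing observation above, is that $r$ itself depends on the privatized $\hat{\pi}_{\mathrm{DP}}$, so the cross-stage interaction must be shown to reduce to post-processing rather than to fresh privacy loss.
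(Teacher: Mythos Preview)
Your proposal is correct and follows essentially the same route as the paper: invoke Lemma~\ref{lem:RKHS_norm} to bound the RKHS-norm sensitivity of the weighted second-stage regression, feed that bound into the functional Gaussian mechanism of \citet{Hall.2013}, and combine with the first-stage guarantee via post-processing and composition. The only differences are in presentation: the paper cites Corollary~9 of \citet{Hall.2013} directly rather than re-deriving it via Cameron--Martin and finite marginals, and it is less explicit than you are about the $(\varepsilon/2,\delta/2)$ budget split and the resulting constant (your check that the stated factor $4\sqrt{2\ln(2/\delta)}$ dominates the tight $2\sqrt{2\ln(4/\delta)}$ is a nice detail the paper leaves implicit).
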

\vspace{-0.5cm}
\begin{proof}
    We prove Theorem~\ref{thm:infinite_queries} in Supplement~\ref{sec:appendix_proofs}.
\end{proof}
\vspace{-0.2cm}
Importantly, for both R- and DR-learner, $\sup_{(a, x) \in \{0,1\} \times\mathcal{X}}\big[\rho(a,{\hat{\pi}_\mathrm{DP}(x)})\big] \le 1$. Also, the above theorem requires a convex Lipschitz loss $\ell(\cdot, \cdot)$. There are many suitable loss functions (e.g., the squared loss on bounded domains, a trimmed squared loss, or the Huber loss). For many losses, the Lipschitz constant is data-independent and directly computable from the loss function.\footnote{For example, for the $l_1$ loss, the Lipschitz constant $L$ equals 1; for the Huber loss, $L$ equals the loss parameter $\delta$; and, for the truncated $l_2$ loss, the constant equals the gradient at the truncation value.} We further require the second-stage in the meta-learner to be a Gaussian kernel regression, which is widely used in causal inference \citep[e.g.,][]{Kennedy.2023, Singh.2024}. Nevertheless, our \framework is still fairly flexible in that any Neyman-orthogonal meta-learner can be used (e.g., R-learner, DR-learner) and that \emph{any} machine learning model can be used for nuisance estimation. In Supplement~\ref{sec:appendix_algorithm}, we present an algorithm for releasing private outputs of the function $\hat{g}_{\mathrm{DP}}(\cdot; \hat{\eta}_{\mathrm{DP}})$.

\vspace{-0.3cm}
\section{Experiments}
\label{sec:experiments}

\vspace{-0.2cm}
\textbf{Implementation:} Our \framework is model-agnostic and highly flexible. Therefore, we instantiate our \framework with multiple versions of the R-leaner \citep{Nie.2021} where we vary the underlying base learners. Hence, we implement the pseudo-outcome regression in the second stage via both a neural network (NN) and the Kernel regression estimator (KR). We estimate the nuisance functions through neural networks. This is recommended as one typically allows for flexibility in the nuisance functions \citep{Curth.2021}. Details on implementation and training are in Supplement~\ref{sec:appendix_experiments}. We emphasize again that there are \underline{no} suitable baselines for our task. 

\textbf{Performance metrics:} As explained in Sec.~\ref{sec:related_work}, there are no flexible CATE meta-learners ensuring DP. Thus, there is no suitable baseline for our task. Hence, we perform experiments to primarily show the applicability of our \framework across different privacy budgets. We expect that the prediction performance will increase with increasing privacy budget and then approach the prediction performance of a non-private CATE learner. We measure the performance via the \emph{precision in estimation of heterogeneous effects} (PEHE) with regard to the true CATE \citep[e.g.,][]{Hill.2011}.

\begin{wrapfigure}[9]{r}{0.5\textwidth}
\vspace{-1.2cm}
    \includegraphics[width=1\linewidth]{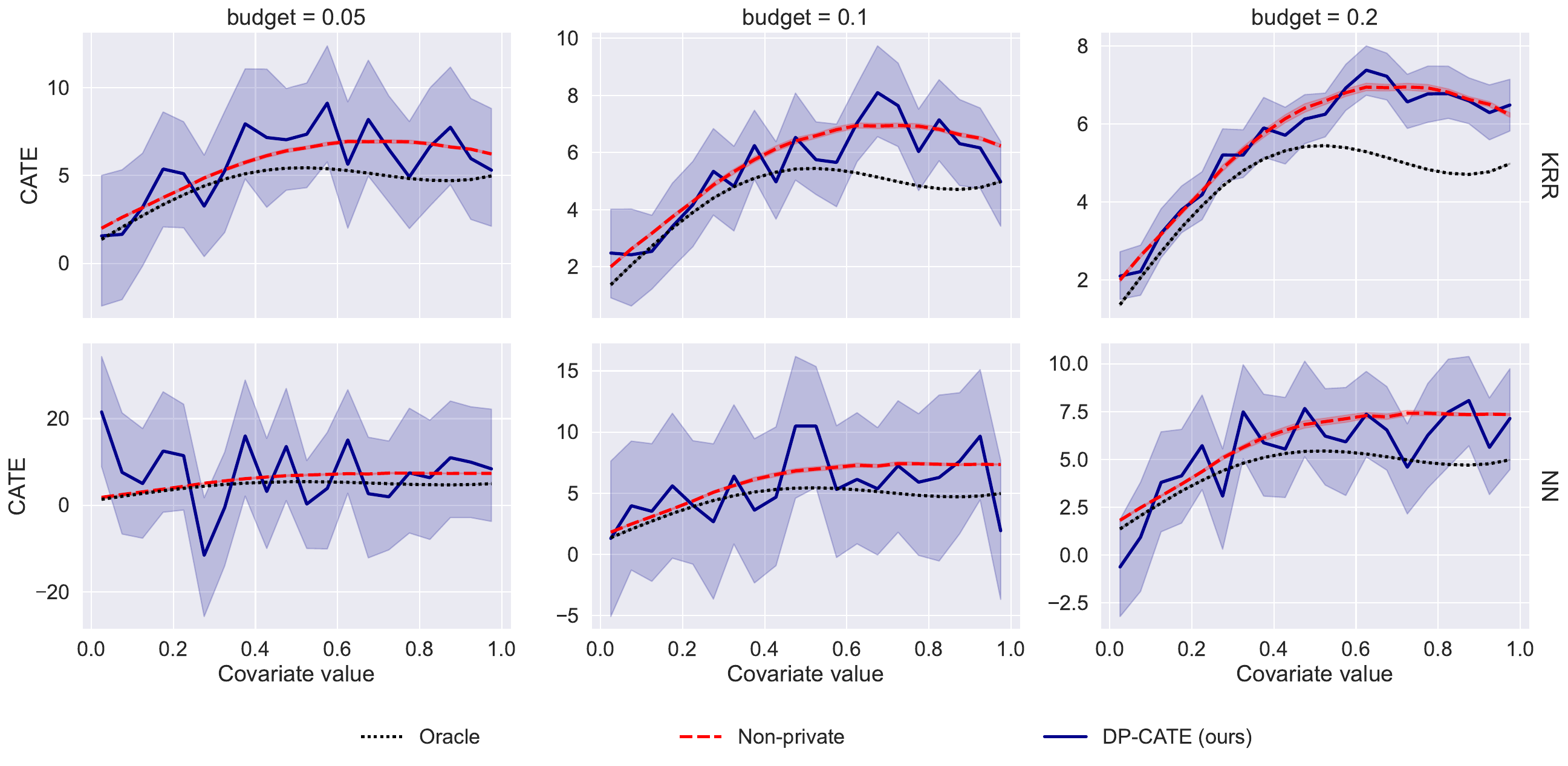}
    \vspace{-0.8cm}
    \caption{\textbf{Dataset 1} (finite queries). Predictions under different base learners and privacy budgets.}
    \label{fig:r_learner_dataset1}    
\end{wrapfigure}
\vspace{-0.2cm}
\subsection{Evaluation on synthetic datasets}
\vspace{-0.2cm}
\textbf{Synthetic datasets:}
Due to the fundamental problem of causal inference, counterfactual outcomes are never observed in real-world data. Thus, we follow common practice and evaluate \framework on synthetic data, which allows us to access the ground-truth CATE and compute the PEHE \citep[e.g.,][]{Kennedy.2023, Oprescu.2019}. We evaluate \framework on 300 queries.

We consider two settings with different treatment effect complexities following \citet{Oprescu.2019}. $\bullet$\,\textbf{Dataset 1} contains two observed confounders from which only one influences the CATE. This allows us to visualize the CATE function and the effect of privatization on the prediction for varying covariate values. $\bullet$\,\textbf{Dataset 2} contains 30 observed confounders and multiple dimensions influence the CATE. Details are in Supplement~\ref{sec:appendix_experiments}.

\begin{wrapfigure}[8]{r}{0.5\textwidth}
    \vspace{-0.4cm}
    \includegraphics[width=1\linewidth]{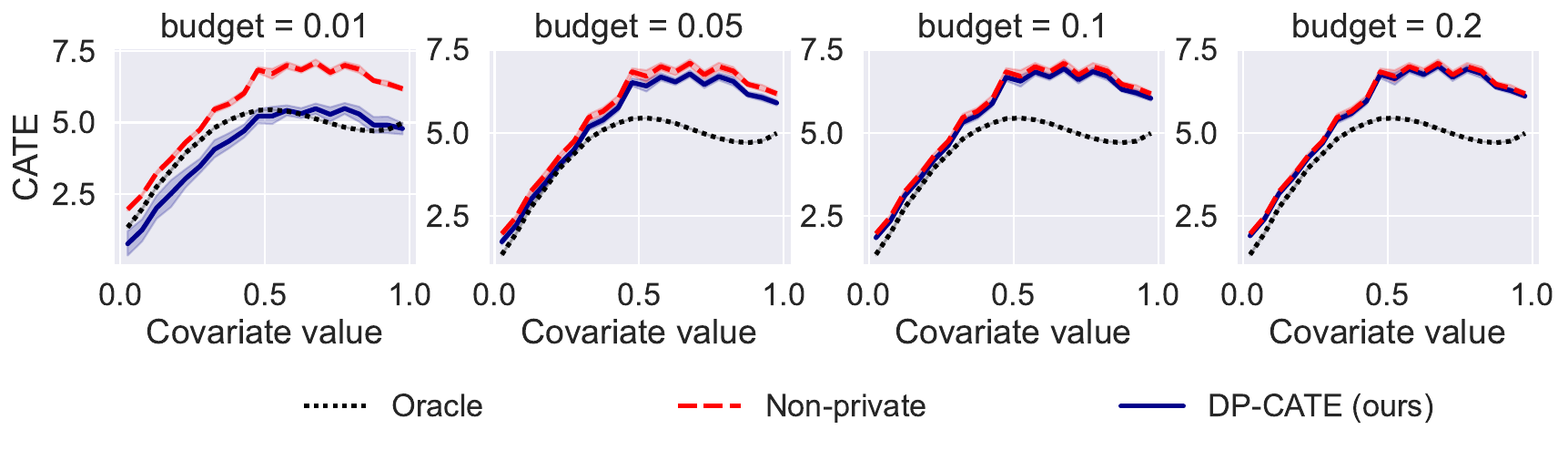}
    \vspace{-0.8cm}
    \caption{\textbf{Dataset 1} (functional queries). Predictions under different privacy budgets.}
    \label{fig:r_function_dataset1}    
\end{wrapfigure}
\textbf{Results for finite queries:} $\bullet$\,\textbf{Dataset~1}: Fig.~\ref{fig:r_learner_dataset1} shows the predictions for different base learners and different privacy budgets. We make the following observations: (1)~Our \framework performs as expected: with increasing privacy budget, the predictions become less `noisy' and converge to those of the non-private estimator. (2)~Our \framework shows consistent patterns for different base learners. For example, the predictions under both KR and NN are almost identical, showing the flexibility and robustness of our framework. $\bullet$\,\textbf{Dataset~2}: Fig.~\ref{fig:PEHE_R_dataset2} shows the PEHE. Note that we directly compare \framework for finite queries (combined with KR and NN) and \framework for functional queries. Again, we find that our \framework performs as expected: the PEHE decreases with increasing privacy budget and converges towards that of the non-private learner.

\textbf{Results for functional queries:} $\bullet$\,\textbf{Dataset~1}: Fig.~\ref{fig:r_function_dataset1} shows the predictions of \framework for functional queries across different privacy budgets. We observe similar behavior as in the case of finite queries: With increasing privacy budget, the predictions converge to those of the non-private learner. $\bullet$\,\textbf{Dataset~2}: Fig.~\ref{fig:PEHE_R_dataset2} shows  the results for more complex dataset. As before, we observe that \framework for functions behaves in the same way as \framework for finite queries. Overall, our findings are robust across datasets and base learner specifications.

\begin{wrapfigure}[9]{r}{0.5\textwidth}
    \vspace{-1.5cm}
    \includegraphics[width=\linewidth]{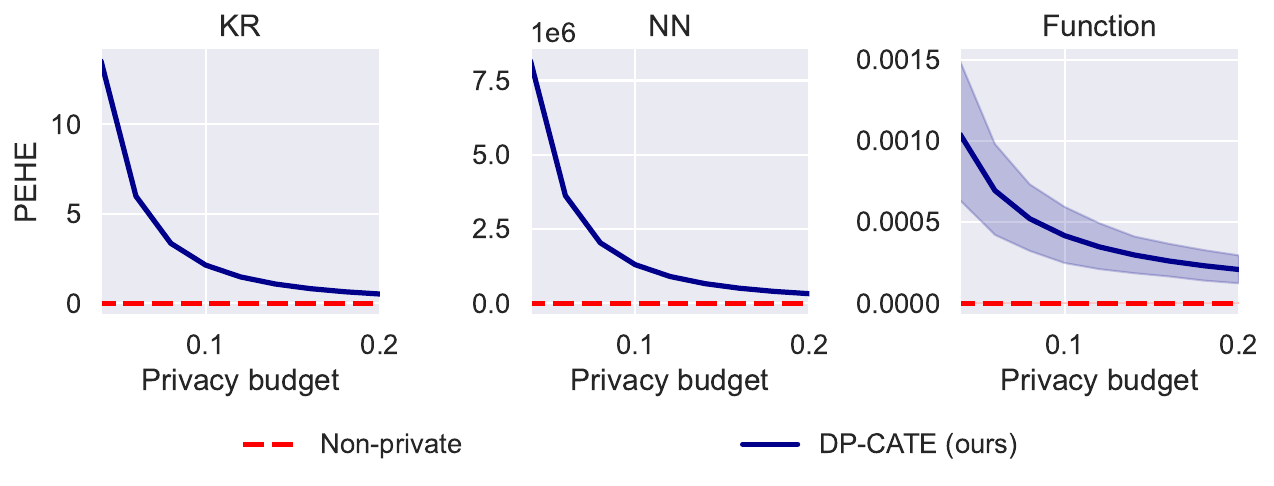}
    \vspace{-0.8cm}
    \caption{\textbf{Dataset 2}. Prediction errors under different privacy budgets of \framework (finite) on KR and NN and \framework (functional) over 10 runs. Plots centered at the PEHE of the original learner.}
    \label{fig:PEHE_R_dataset2}
\end{wrapfigure}
\vspace{-0.3cm}
\subsection{Evaluation on medical datasets}
\vspace{-0.2cm}
\textbf{Medical datasets:} We demonstrate the applicability of \framework to medical datasets by using the \textbf{MIMIC-III} dataset \citep{Johnson.2016} and the \textbf{TCGA} dataset \citep{Weinstein.2013}. MIMIC-III contains real-world health records from patients admitted to intensive care units at large hospitals. We aim to predict a patient's red blood cell count after being treated with mechanical ventilation. 
The Cancer Genome Atlas (TCGA) dataset contains a large collection of gene expression data from patients with different cancer types. We assign a treatment indicator based on the gene expression level and aim to predict a constant effect across all expression levels. Details are in Supplement~\ref{sec:appendix_experiments}.

\textbf{Results:} 
$\bullet$~\textbf{MIMIC-III:}
Fig.~\ref{fig:mimic_semi} reports the predictions of the CATE against different levels of hematocrit and different privacy budgets. Here, we have $d = 1312$ queries (i.e., the size of the test set). Our \framework framework works as desired: for smaller privacy budgets, more noise should be added, which is also reflected in a larger variation of the predictions. $\bullet$~\textbf{TCGA:} Fig.~\ref{fig:TCGA} shows again that our \framework is effective on a large number of queries (i.e., $d = 2659$). Overall, the loss in precision due to DP (i.e., when comparing \framework to the non-private learner) is fairly small. 

\vspace{-0.5cm}
\begin{minipage}{\linewidth}
    \centering
    \begin{minipage}{0.475\linewidth}
        \begin{figure}[H]
            \includegraphics[width=\linewidth]{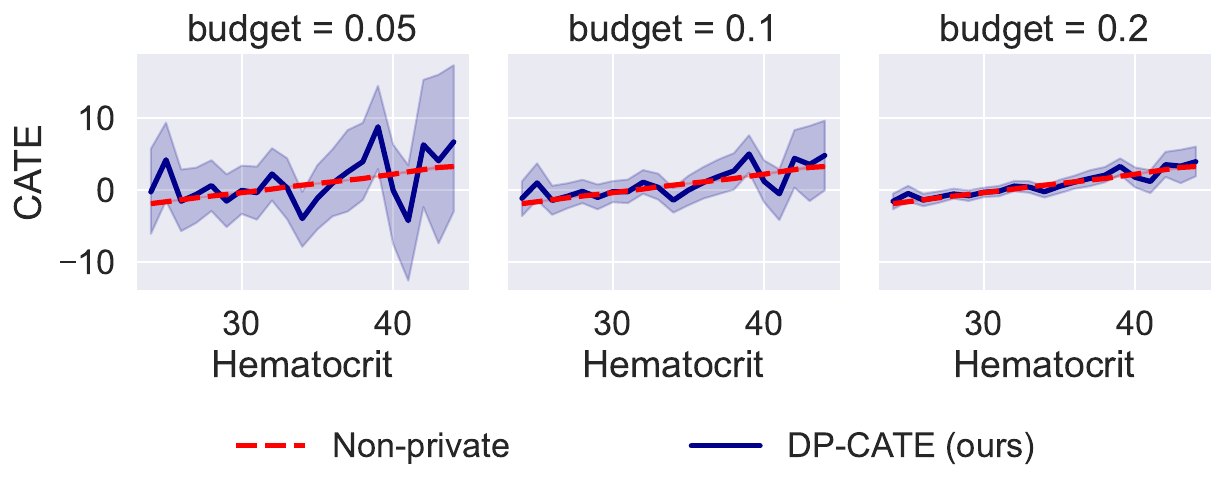}
            \vspace{-0.8cm}
            \caption{\textbf{MIMIC-III} (finite queries). Our \framework generates private estimates of the effect of ventilation for different levels of hematocrit.}
            \label{fig:mimic_semi}
        \end{figure}
    \end{minipage}
    \hspace{0.02\linewidth}
    \begin{minipage}{0.475\linewidth}
        \begin{figure}[H]
            \includegraphics[width=\linewidth]{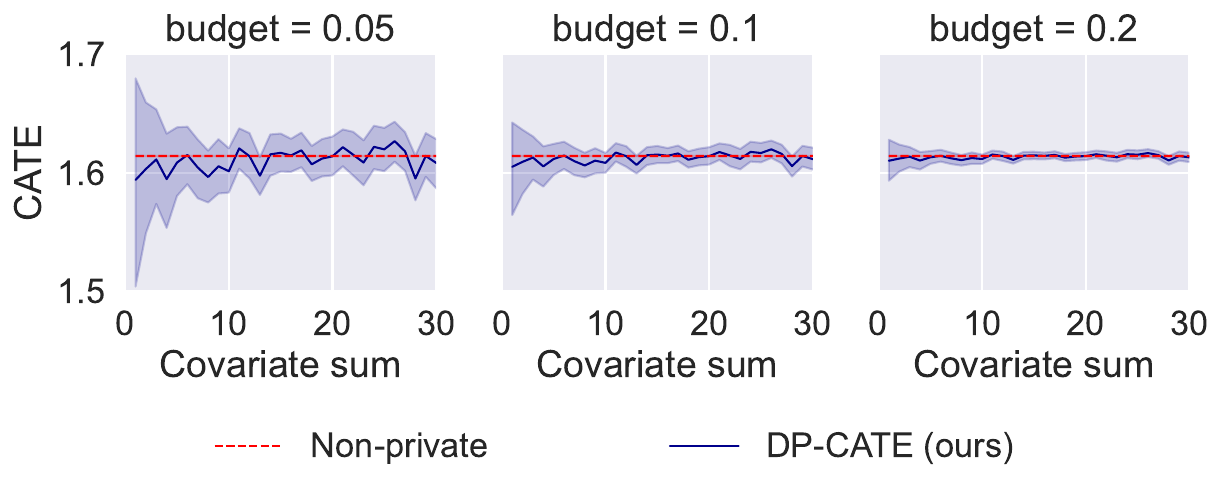}
            \vspace{-0.8cm}
            \caption{\textbf{TCGA} (finite queries). \framework consistently estimates the constant treatment effect across the sum of all covariate values.}
            \label{fig:TCGA}
        \end{figure}
    \end{minipage}
 \end{minipage}
 
\textbf{Takeaways:} \emph{The prediction error of \framework decreases with larger privacy budgets as desired and converges to the non-private error, confirming that our framework makes precise CATE predictions.}

\vspace{-0.3cm}
\section{Discussion}
\label{sec:discussion}
\vspace{-0.3cm}
\textbf{Applicability:} We provide a general framework for differentially private and Neyman-orthogonal CATE estimation from observational data. First, our \framework is carefully designed for observational data, which are common in medical applications \citep{Feuerriegel.2024}. Second, \framework is applicable to various meta-learners (e.g., R-learner, DR-learner), which are widely used in practice. Third, \framework allows different use cases: one can release either a certain number of CATE estimates or even the complete CATE function (e.g., as in clinical decision support systems).

\textbf{Extension to the DR-learner:} Our derivations focus on the popular R-learner due to its favorable theoretical properties. Nevertheless, our \framework can be applied to any other Neyman-orthogonal meta-learner. In Supplement~\ref{sec:appendix_dr_learner}, we thus provide an extension to the DR-learner. Therein, we also provide additional numerical experiments. At a technical level, the weight function for the DR-learner simplifies to $\lambda^\mathrm{DR}(\pi(x)) = 1$ in contrast to the weight function of the R-learner $\lambda^\mathrm{R}(\pi(x)) = \pi(x) \, (1-\pi(x))$. However, $\lambda^\mathrm{DR}(\pi(x))$ is less suitable for DP because it needs a larger noise term during the perturbation (see Supplement~\ref{sec:appendix_dr_learner} for a detailed discussion), which hinders downstream performance. Hence, we recommend employing \framework with the R-learner in practice. 

\textbf{Conclusion:} Ensuring the privacy of sensitive information in treatment effect estimation is mandated for ethical and legal reasons. Here, we provide the first framework for differentially private CATE estimation from observational data using meta-learners.

\subsubsection*{Acknowledgements}
We thank Dennis Frauen and Lars van der Laan for their helpful feedback on our manuscript.
Our research was supported by the DAAD program Konrad Zuse Schools of Excellence in Artificial Intelligence, sponsored by the Federal Ministry of Education and Research.

\bibliography{bibliography}

\begin{thebibliography}{66}
\providecommand{\natexlab}[1]{#1}
\providecommand{\url}[1]{\texttt{#1}}
\expandafter\ifx\csname urlstyle\endcsname\relax
  \providecommand{\doi}[1]{doi: #1}\else
  \providecommand{\doi}{doi: \begingroup \urlstyle{rm}\Url}\fi

\bibitem[Abadi et~al.(2016)Abadi, Chu, Goodfellow, McMahan, Mironov, Talwar, and Zhang]{Abadi.2016}
Martin Abadi, Andy Chu, Ian Goodfellow, H.~Brendan McMahan, Ilya Mironov, Kunal Talwar, and Li~Zhang.
\newblock Deep learning with differential privacy.
\newblock In \emph{Conference on Computer and Communications Security}, 2016.

\bibitem[Agarwal \& Singh(2024)Agarwal and Singh]{Agarwal.2024}
Anish Agarwal and Rahul Singh.
\newblock Causal inference with corrupted data: Measurement error, missing values, discretization, and differential privacy.
\newblock \emph{arXiv preprint}, 2107.02780, 2024.

\bibitem[Avella-Medina(2021)]{AvellaMedina.2021}
Marco Avella-Medina.
\newblock Privacy-preserving parametric inference: A case for robust statistics.
\newblock \emph{Journal of the American Statistical Association}, 116\penalty0 (534):\penalty0 969--983, 2021.

\bibitem[Baiardi \& Naghi(2024)Baiardi and Naghi]{Baiardi.2024}
Anna Baiardi and Andrea~A. Naghi.
\newblock The value added of machine learning to causal inference: evidence from revisited studies.
\newblock \emph{The Econometrics Journal}, 27\penalty0 (2):\penalty0 213--234, 2024.

\bibitem[Ballmann(2015)]{Ballmann.2015}
Karla~V. Ballmann.
\newblock Biomarker: Predictive or prognostic?
\newblock \emph{Journal of Clinical Oncology}, 33\penalty0 (33):\penalty0 3968--3971, 2015.

\bibitem[Bassily et~al.(2014)Bassily, Smith, and Thakurta]{Bassily.2014}
Raef Bassily, Adam Smith, and Abhradeep Thakurta.
\newblock Private empirical risk minimization: Efficient algorithms and tight error bounds.
\newblock In \emph{IEEE Annual Symposium on Foundations of Computer Science}, 2014.

\bibitem[Berlinet \& Thomas-Agnan(2004)Berlinet and Thomas-Agnan]{Berlinet.2004}
Alain Berlinet and Christine Thomas-Agnan.
\newblock \emph{Reproducing Kernel Hilbert Spaces in Probability and Statistics}.
\newblock Springer, New York, NY, 2004.

\bibitem[Betlei et~al.(2021)Betlei, Gregoir, Rahier, Bissuel, Diemert, and Amini]{Betlei.2021}
Artem Betlei, Th{\'e}ophane Gregoir, Thibaud Rahier, Alois Bissuel, Eustache Diemert, and Massih-Reza Amini.
\newblock Differentially private individual treatment effect estimation from aggregated data.
\newblock \emph{hal}, 2021.

\bibitem[Braun \& Schwartz(2024)Braun and Schwartz]{Braun.2024}
Michael Braun and Eric~M. Schwartz.
\newblock Where {A}-{B} testing goes wrong: How divergent delivery affects what online experiments cannot (and can) tell you about how customers respond to advertising.
\newblock \emph{arXiv preprint}, 2024.

\bibitem[Brothers \& Rothstein(2015)Brothers and Rothstein]{Brothers.2015}
Kyle~B. Brothers and Mark~A. Rothstein.
\newblock Ethical, legal and social implications of incorporating personalized medicine into healthcare.
\newblock \emph{Personalized Medicine}, 12\penalty0 (1):\penalty0 43--51, 2015.

\bibitem[Chaudhuri et~al.(2011)Chaudhuri, Monteleoni, and Sarwate]{Chaudhuri.2011}
Kamalika Chaudhuri, Claire Monteleoni, and Anand~D. Sarwate.
\newblock Differentially private empirical risk minimization.
\newblock \emph{Journal of Machine Learning Research}, 12:\penalty0 1069--1109, 2011.

\bibitem[Chen et~al.(2020)Chen, Wu, and Hong]{Chen.2020}
Xiangyi Chen, Zhiwei~Steven Wu, and Mingyi Hong.
\newblock Understanding gradient clipping in private sgd: A geometric perspective.
\newblock In \emph{Conference on Neural Information Processing Systems (NeurIPS)}, 2020.

\bibitem[Curth \& {van der Schaar}(2021)Curth and {van der Schaar}]{Curth.2021}
Alicia Curth and Mihaela {van der Schaar}.
\newblock Nonparametric estimation of heterogeneous treatment effects: From theory to learning algorithms.
\newblock In \emph{Conference on Artificial Intelligence and Statistics (AISTATS)}, 2021.

\bibitem[Dwork(2006)]{Dwork.2006}
Cynthia Dwork.
\newblock Differential privacy.
\newblock In \emph{International Colloquium on Automata, Languages, and Programming}, 2006.

\bibitem[Dwork \& Lei(2009)Dwork and Lei]{Dwork.2009}
Cynthia Dwork and Jing Lei.
\newblock Differential privacy and robust statistics.
\newblock In \emph{ACM Symposium on Theory of Computing}, 2009.

\bibitem[Dwork \& Roth(2014)Dwork and Roth]{Dwork.2014}
Cynthia Dwork and Aaron Roth.
\newblock The algorithmic foundations of differential privacy.
\newblock \emph{Foundations and Trends in Theoretical Computer Science}, 9\penalty0 (3-4):\penalty0 211--407, 2014.

\bibitem[Ellickson et~al.(2023)Ellickson, Kar, and Reeder]{Ellickson.2023}
Paul~B. Ellickson, Wreetabrata Kar, and James~C. Reeder, III.
\newblock Estimating marketing component effects: Double machine learning from targeted digital promotions.
\newblock \emph{Marketing Science}, 42\penalty0 (4):\penalty0 704--728, 2023.

\bibitem[Feuerriegel et~al.(2024)Feuerriegel, Frauen, Melnychuk, Schweisthal, Hess, Curth, Bauer, Kilbertus, Kohane, and {van der Schaar}]{Feuerriegel.2024}
Stefan Feuerriegel, Dennis Frauen, Valentyn Melnychuk, Jonas Schweisthal, Konstantin Hess, Alicia Curth, Stefan Bauer, Niki Kilbertus, Isaac~S. Kohane, and Mihaela {van der Schaar}.
\newblock Causal machine learning for predicting treatment outcomes.
\newblock \emph{Nature Medicine}, 30\penalty0 (4):\penalty0 958--968, 2024.

\bibitem[Foster \& Syrgkanis(2019)Foster and Syrgkanis]{Foster.2019}
Dylan~J. Foster and Vasilis Syrgkanis.
\newblock Orthogonal statistical learning.
\newblock \emph{arXiv preprint}, 1901.09036v4, 2019.

\bibitem[Frauen et~al.(2024)Frauen, Melnychuk, and Feuerriegel]{Frauen.2023a}
Dennis Frauen, Valentyn Melnychuk, and Stefan Feuerriegel.
\newblock Fair off-policy learning from observational data.
\newblock In \emph{International Conference on Machine Learning (ICML)}, 2024.

\bibitem[Frauen et~al.(2025)Frauen, Hess, and Feuerriegel]{Frauen.2025}
Dennis Frauen, Konstantin Hess, and Stefan Feuerriegel.
\newblock Model-agnostic meta-learners for estimating heterogeneous treatment effects over time.
\newblock In \emph{International Conference on Learning Representations (ICLR)}, 2025.

\bibitem[Fukuchi et~al.(2017)Fukuchi, Tran, and Sakuma]{Fukuchi.2017}
Kazuto Fukuchi, Quang~Khai Tran, and Jun Sakuma.
\newblock Differentially private empirical risk minimization with input perturbation.
\newblock In \emph{Discovery Science}, 2017.

\bibitem[Guha \& Reiter(2024)Guha and Reiter]{Guha.2024}
Sharmistha Guha and Jerome~P. Reiter.
\newblock Differentially private estimation of weighted average treatment effects for binary outcomes.
\newblock \emph{arXiv preprint}, 2408.14766, 2024.

\bibitem[Hall et~al.(2013)Hall, Rinaldo, and Wasserman]{Hall.2013}
Rob Hall, Alessandro Rinaldo, and Larry Wasserman.
\newblock Differential privacy for functions and functional data.
\newblock \emph{Journal of Machine Learning Research}, 14:\penalty0 703--727, 2013.

\bibitem[Hill(2011)]{Hill.2011}
Jennifer~L. Hill.
\newblock Bayesian nonparametric modeling for causal inference.
\newblock \emph{Journal of Computational and Graphical Statistics}, 20\penalty0 (1):\penalty0 217--240, 2011.

\bibitem[Hirano et~al.(2003)Hirano, Imbens, and Ridder]{Hirano.2003}
Keisuke Hirano, Guido~W. Imbens, and Geert Ridder.
\newblock Efficient estimation of average treatment effects using the estimated propensity score.
\newblock \emph{Econometrica}, 71\penalty0 (4):\penalty0 1161--1189, 2003.

\bibitem[Huang \& Ascara(2023)Huang and Ascara]{Huang.2023b}
Ta-Wei Huang and Eva Ascara.
\newblock Debiasing treatment effect estimation for privacy-protected data: A model auditing and calibration approach.
\newblock \emph{SSRN}, 2023.

\bibitem[Iyengar et~al.(2019)Iyengar, Near, Song, Thakkar, Thakurta, and Wang]{Iyengar.2019}
Roger Iyengar, Joseph~P. Near, Dawn Song, Om~Thakkar, Abhradeep Thakurta, and Lun Wang.
\newblock Towards practical differentially private convex optimization.
\newblock In \emph{IEEE Symposium on Security and Privacy}, 2019.

\bibitem[Javanmard et~al.(2024)Javanmard, Mirrokni, and Pouget-Abadie]{Javanmard.2024}
Adel Javanmard, Vahab Mirrokni, and Jean Pouget-Abadie.
\newblock Causal inference with differentially private (clustered) outcomes.
\newblock \emph{arXiv preprint}, 2308.00957, 2024.

\bibitem[Jesson et~al.(2020)Jesson, Mindermann, Shalit, and Gal]{Jesson.2020}
Andrew Jesson, S{\"o}ren Mindermann, Uri Shalit, and Yarin Gal.
\newblock Identifying causal-effect inference failure with uncertainty-aware models.
\newblock In \emph{Conference on Neural Information Processing Systems (NeurIPS)}, 2020.

\bibitem[Johnson et~al.(2016)Johnson, Pollard, Shen, Lehman, Feng, Ghassemi, Moody, Szolovits, Celi, and Mark]{Johnson.2016}
Alistair E.~W. Johnson, Tom~J. Pollard, Lu~Shen, Li-Wei~H. Lehman, Mengling Feng, Mohammad Ghassemi, Benjamin Moody, Peter Szolovits, Leo~Anthony Celi, and Roger~G. Mark.
\newblock {MIMIC-III}, a freely accessible critical care database.
\newblock \emph{Scientific Data}, 3:\penalty0 160035, 2016.

\bibitem[Kennedy(2023{\natexlab{a}})]{Kennedy.2023}
Edward~H. Kennedy.
\newblock Semiparametric doubly robust targeted double machine learning: a review.
\newblock \emph{arXiv preprint}, 2203.06469, 2023{\natexlab{a}}.

\bibitem[Kennedy(2023{\natexlab{b}})]{Kennedy.2023b}
Edward~H. Kennedy.
\newblock Towards optimal doubly robust estimation of heterogeneous causal effects.
\newblock \emph{Electronic Journal of Statistics}, 17\penalty0 (2):\penalty0 3008--3049, 2023{\natexlab{b}}.

\bibitem[Kifer et~al.(2012)Kifer, Smith, and Abhradeep]{Kifer.2012}
Daniel Kifer, Adam Smith, and Thakurta Abhradeep.
\newblock Private convex empirical risk minimization and high-dimensional regression.
\newblock In \emph{Conference on Learning Theory (COLT)}, 2012.

\bibitem[Lee et~al.(2019)Lee, Gresele, Park, and Muandet]{Lee.2019}
Si~Kai Lee, Luigi Gresele, Mijung Park, and Krikamol Muandet.
\newblock Privacy-preserving causal inference via inverse probability weighting.
\newblock \emph{arXiv preprint}, 1905.12592, 2019.

\bibitem[Ma et~al.(2023)Ma, Frauen, Melnychuk, and Feuerriegel]{Ma.2023}
Yuchen Ma, Dennis Frauen, Valentyn Melnychuk, and Stefan Feuerriegel.
\newblock Counterfactual fairness for predictions using generative adversarial networks.
\newblock \emph{arXiv preprint}, 2310.17687v1, 2023.

\bibitem[Mackey et~al.(2018)Mackey, Syrgkanis, and Zadik]{Mackey.2018}
Lester Mackey, Vasilis Syrgkanis, and Ilias Zadik.
\newblock Orthogonal machine learning: Power and limitations.
\newblock In \emph{International Conference on Machine Learning (ICML)}, 2018.

\bibitem[Martens(2010)]{Martens.2010}
James Martens.
\newblock Deep learning via hessian-free optimization.
\newblock In \emph{International Conference on Machine Learning (ICML)}, 2010.

\bibitem[Melnychuk et~al.(2024)Melnychuk, Feuerriegel, and {van der Schaar}]{Melnychuk.2024}
Valentyn Melnychuk, Stefan Feuerriegel, and Mihaela {van der Schaar}.
\newblock Quantifying aleatoric uncertainty of the treatment effect: A novel orthogonal learner.
\newblock In \emph{Conference on Neural Information Processing Systems (NeurIPS)}, 2024.

\bibitem[Melnychuk et~al.(2025)Melnychuk, Frauen, Schweisthal, and Feuerriegel]{Melnychuk.2025}
Valentyn Melnychuk, Dennis Frauen, Jonas Schweisthal, and Stefan Feuerriegel.
\newblock Orthogonal representation learning for estimating causal quantities.
\newblock \emph{arXiv preprint}, 2502.04274, 2025.

\bibitem[Morzywolek et~al.(2023)Morzywolek, Decruyenaere, and Vansteelandt]{Morzywolek.2023}
Pawel Morzywolek, Johan Decruyenaere, and Stijn Vansteelandt.
\newblock On weighted orthogonal learners for heterogeneous treatment effects.
\newblock \emph{arXiv preprint}, 2303.12687, 2023.

\bibitem[Nie et~al.(2021)Nie, Ye, Liu, and Nicolae]{Nie.2021}
Lizhen Nie, Mao Ye, Qiang Liu, and Dan Nicolae.
\newblock {VCNet} and functional targeted regularization for learning causal effects of continuous treatments.
\newblock In \emph{International Conference on Learning Representations (ICLR)}, 2021.

\bibitem[Nie \& Wager(2020)Nie and Wager]{Nie.2020}
Xinkun Nie and Stefan Wager.
\newblock Quasi-oracle estimation of heterogeneous treatment effects.
\newblock \emph{Biometrika}, 108\penalty0 (2):\penalty0 299--319, 2020.

\bibitem[Nissim et~al.(2007)Nissim, Raskhodnikova, and Smith]{Nissim.2007}
Kobbi Nissim, Sofya Raskhodnikova, and Adam Smith.
\newblock Smooth sensitivity and sampling in private data analysis.
\newblock In \emph{ACM Symposium on Theory of Computing}, 2007.

\bibitem[Niu et~al.(2022)Niu, Nori, Quistorff, Caruana, Ngwe, and Kannan]{Niu.2022}
Fengshi Niu, Harsha Nori, Brian Quistorff, Rich Caruana, Donald Ngwe, and Aadharsh Kannan.
\newblock Differentially private estimation of heterogeneous causal effects.
\newblock In \emph{Conference on Causal Learning and Reasoning (CLeaR)}, 2022.

\bibitem[Ohnishi \& Awan(2023)Ohnishi and Awan]{Ohnishi.2023}
Yuki Ohnishi and Jordan Awan.
\newblock Locally private causal inference for randomized experiments.
\newblock \emph{arXiv preprint}, 2301.01616, 2023.

\bibitem[Oprescu et~al.(2019)Oprescu, Syrgkanis, and Wu]{Oprescu.2019}
Miruna Oprescu, Vasilis Syrgkanis, and Zhiwei~Steven Wu.
\newblock Orthogonal random forest for causal inference.
\newblock In \emph{International Conference on Machine Learning (ICML)}, 2019.

\bibitem[Oprescu et~al.(2023)Oprescu, Dorn, Ghoummaid, Jesson, Kallus, and Shalit]{Oprescu.2023}
Miruna Oprescu, Jacob Dorn, Marah Ghoummaid, Andrew Jesson, Nathan Kallus, and Uri Shalit.
\newblock B-learner: Quasi-oracle bounds on heterogeneous causal effects under hidden confounding.
\newblock In \emph{International Conference on Machine Learning (ICML)}, 2023.

\bibitem[Pillonetto et~al.(2022)Pillonetto, Chen, Chiuso, de~Nicolao, and Ljung]{Pillonetto.2022}
Gianluigi Pillonetto, Tianshi Chen, Alessandro Chiuso, Giuseppe de~Nicolao, and Lennart Ljung.
\newblock \emph{Regularized system identification: Learning dynamic models from data}.
\newblock Springer eBook Collection. Springer, Cham, Switzerland, 2022.

\bibitem[Plecko \& Bareinboim(2024)Plecko and Bareinboim]{Plecko.2022}
Drago Plecko and Elias Bareinboim.
\newblock Causal fairness analysis: A causal toolkit for fair machine learning.
\newblock \emph{Foundations and Trends in Machine Learning}, 17\penalty0 (3):\penalty0 304--589, 2024.

\bibitem[Rasmussen \& Williams(2006)Rasmussen and Williams]{Rasmussen.2006}
Carl~Edward Rasmussen and Christopher K.~I. Williams.
\newblock \emph{Gaussian process for machine learning}.
\newblock Adaptive computation and machine learning. {The MIT Press}, London, England, 3. print edition, 2006.

\bibitem[Redberg et~al.(2023)Redberg, Koskela, and Wang]{Redberg.2023}
Rachel Redberg, Antti Koskela, and Yu-Xiang Wang.
\newblock Improving the privacy and practicality of objective perturbation for differentially private linear learners.
\newblock In \emph{Conference on Neural Information Processing Systems (NeurIPS)}, 2023.

\bibitem[Rubin(2005)]{Rubin.2005}
Donald.~B. Rubin.
\newblock Causal inference using potential outcomes: Design, modeling, decisions.
\newblock \emph{Journal of the American Statistical Association}, 100\penalty0 (469):\penalty0 322--331, 2005.

\bibitem[Schr{\"o}der et~al.(2024{\natexlab{a}})Schr{\"o}der, Frauen, and Feuerriegel]{Schroder.2024b}
Maresa Schr{\"o}der, Dennis Frauen, and Stefan Feuerriegel.
\newblock Causal fairness under unobserved confounding: a neural sensitivity framework.
\newblock In \emph{International Conference on Learning Representations (ICLR)}, 2024{\natexlab{a}}.

\bibitem[Schr{\"o}der et~al.(2024{\natexlab{b}})Schr{\"o}der, Frauen, Schweisthal, He{\ss}, Melnychuk, and Feuerriegel]{Schroder.2024}
Maresa Schr{\"o}der, Dennis Frauen, Jonas Schweisthal, Konstantin He{\ss}, Valentyn Melnychuk, and Stefan Feuerriegel.
\newblock Conformal prediction for causal effects of continuous treatments.
\newblock \emph{arXiv preprint}, 2024{\natexlab{b}}.

\bibitem[Schweisthal et~al.(2024)Schweisthal, Frauen, {van der Schaar}, and Feuerriegel]{Schweisthal.2024}
Jonas Schweisthal, Dennis Frauen, Mihaela {van der Schaar}, and Stefan Feuerriegel.
\newblock Meta-learners for partially-identified treatment effects across multiple environments.
\newblock In \emph{International Conference on Machine Learning (ICML)}, 2024.

\bibitem[Singh et~al.(2024)Singh, Xu, and Gretton]{Singh.2024}
Rahul Singh, Liyuan Xu, and Arthur Gretton.
\newblock Kernel methods for causal functions: dose, heterogeneous and incremental response curves.
\newblock \emph{Biometrika}, 111\penalty0 (2):\penalty0 497--516, 2024.

\bibitem[Tran et~al.(2021)Tran, Dinh, and Fioretto]{Tran.2021}
Cuong Tran, My~H. Dinh, and Ferdinando Fioretto.
\newblock Differentially empirical risk minimization under the fairness lens.
\newblock In \emph{Conference on Neural Information Processing Systems (NeurIPS)}, 2021.

\bibitem[{van der Laan}(2006)]{vanderLaan.2006}
Mark~J. {van der Laan}.
\newblock Statistical inference for variable importance.
\newblock \emph{The International Journal of Biostatistics}, 2\penalty0 (1):\penalty0 2, 2006.

\bibitem[Wang et~al.(2017)Wang, Eblan, Deal, Lipner, Zagar, Wang, Mavroidis, Lee, Jensen, Rosenman, Socinski, Stinchcombe, and Marks]{Wang.2017}
Kyle Wang, Michael~J. Eblan, Allison~M. Deal, Matthew Lipner, Timothy~M. Zagar, Yue Wang, Panayiotis Mavroidis, Carrie~B. Lee, Brian~C. Jensen, Julian~G. Rosenman, Mark~A. Socinski, Thomas~E. Stinchcombe, and Lawrence~B. Marks.
\newblock Cardiac toxicity after radiotherapy for stage {III} non-small-cell lung cancer: Pooled analysis of dose-escalation trials delivering 70 to 90 gy.
\newblock \emph{Journal of Clinical Oncology}, 35\penalty0 (13):\penalty0 1387--1394, 2017.

\bibitem[Wang et~al.(2024)Wang, Lei, Ying, and Zhou]{Wang.2024b}
Puyu Wang, Yunwen Lei, Yiming Ying, and Ding-Xuan Zhou.
\newblock Differentially private stochastic gradient descent with low-noise.
\newblock \emph{Neurocomputing}, 585\penalty0 (C):\penalty0 127557, 2024.

\bibitem[Wang et~al.(2020)Wang, McDermott, Chauhan, Ghassemi, Hughes, and Naumann]{Wang.2020}
Shirly Wang, Matthew B.~A. McDermott, Geeticka Chauhan, Marzyeh Ghassemi, Michael~C. Hughes, and Tristan Naumann.
\newblock {MIMIC-E}xtract.
\newblock In \emph{Conference on Health, Inference, and Learning (CHIL)}, 2020.

\bibitem[Wang et~al.(2019)Wang, Kifer, and Lee]{Wang.2019}
Yue Wang, Daniel Kifer, and Jaewoo Lee.
\newblock Differentially private confidence intervals for empirical risk minimization.
\newblock \emph{Journal of Privacy and Confidentiality}, 9\penalty0 (1), 2019.

\bibitem[Weinstein et~al.(2013)Weinstein, Collisson, Mills, {Mills Shaw}, Ozenberger, Ellrott, Shmulevich, Sander, and Stuart]{Weinstein.2013}
John~N. Weinstein, Eric~A. Collisson, Gordon~B. Mills, Kenna~R. {Mills Shaw}, Brad~A. Ozenberger, Kyle Ellrott, Ilya Shmulevich, Chris Sander, and Joshua~M. Stuart.
\newblock The cancer genome atlas pan-cancer analysis project.
\newblock \emph{Nature Genetics}, 45\penalty0 (10):\penalty0 1113--1120, 2013.

\bibitem[Yao et~al.(2024)Yao, Li, and Lu]{Yao.2024}
Leon Yao, Paul~Yiming Li, and Jiannan Lu.
\newblock Privacy-preserving quantile treatment effect estimation for randomized controlled trials.
\newblock \emph{arXiv preprint}, 2401.14549, 2024.

\bibitem[Zhang et~al.(2022)Zhang, Thekumparampil, Oh, and He]{Zhang.2022c}
Liang Zhang, Kiran~Koshy Thekumparampil, Sewoong Oh, and Niao He.
\newblock Bring your own algorithm for optimal differentially private stochastic minimax optimization.
\newblock In \emph{Conference on Neural Information Processing Systems (NeurIPS)}, 2022.

\end{thebibliography}
\bibliographystyle{iclr2025_conference}


\newpage
\appendix

\section{Additional background material}
\label{sec:appendix_background}

\subsection{Differential privacy}

\begin{wrapfigure}[11]{r}{0.6\textwidth}
    \vspace{-1cm}
    \includegraphics[width=1\linewidth]{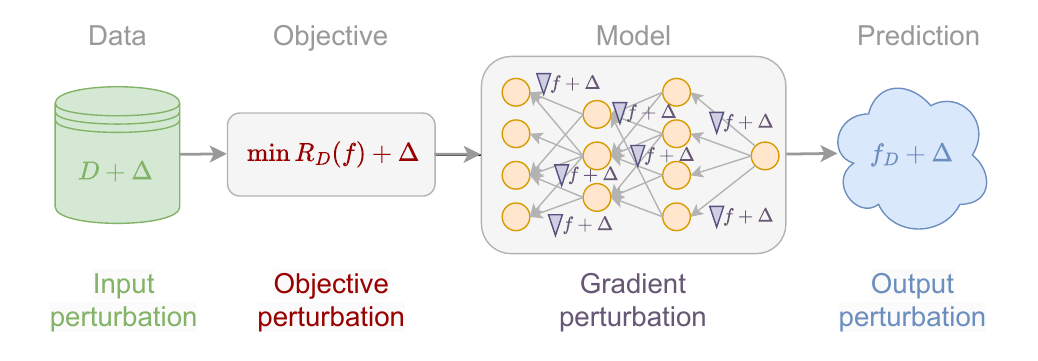}
    \caption{Privacy mechanisms in the machine learning workflow.}
    \label{fig:mechanisms}    
\end{wrapfigure}
\textbf{DP mechanisms:} There are four main strategies of DP mechanisms (see Fig.~\ref{fig:mechanisms}): (i)~\emph{Input perturbation} independently randomizes each data sample before model training \citep[e.g.,][]{Fukuchi.2017}. (ii)~\emph{Objective perturbation} adds a random term to the objective and releases the respective minimizer \citep[e.g.,][]{Iyengar.2019, Kifer.2012, Redberg.2023}. The mechanisms in this field commonly make strong assumptions on the smoothness or convexity of the objective. (iii)~\emph{Gradient perturbation} clips, aggregates, and adds noise to the gradient updates in each step of gradient descent methods during model training \citep[e.g.,][]{Abadi.2016, Wang.2017, Wang.2019}. (iv)~\emph{Output perturbation} adds noise to the non-private model prediction before its release \citep[e.g.,][]{Chaudhuri.2011, Zhang.2022c}. All stated mechanisms are general strategies that must be carefully adapted to our CATE estimation setting. In our work, we employ output perturbation for the reasons explained below.

\textbf{Choice of DP mechanism:}
Our \framework framework employs \emph{output perturbation} to achieve DP. Output perturbation is highly suitable for our setting since (i)~it ensures that causal assumptions are fulfilled even after perturbation, and (ii)~it retains the power of existing CATE estimation methods for addressing the fundamental problem of causal inference. The other DP strategies discussed above fail to fulfill the requirements.

In contrast, input perturbation might introduce confounding bias or violate the consistency assumption. For gradient and objective perturbation, the convergence of the model might be unclear. Furthermore, objective perturbation might fail to achieve the targeted privacy guarantee if the model does not converge to the exact global minimum in finite time \citep{Iyengar.2019}. Gradient perturbation results in a non-trivial privacy overhead and does not align with our goal of providing a model-agnostic meta-learning framework \citep{Redberg.2023}.

\subsection{Extended related work}

The only existing method designed for our setting was proposed by \citet{Niu.2022}. The authors provide an algorithm for differentially private CATE estimation through existing CATE meta-learners. However, the method necessitates special private base learners for the separate sub-algorithms in each stage of the meta-learner. It is thus \emph{not} agnostic to the choice of ML method for the first- and second-stage regressions, meaning that arbitrary choices are \emph{not} supported. Furthermore, it has been shown that privatizing different parts of causal estimators separately can result in biased causal estimates ~\citep{Ohnishi.2023}.

A different line of work proposes \emph{locally differentially private} (LDP) algorithms \citep{Agarwal.2024, Huang.2023b, Ohnishi.2023}. This notion of privacy becomes necessary if the central data curator cannot be trusted. During data collection, calibrated noise is added to each sample before adding it to the database. However, the perturbed data might violate the assumptions to identify causal treatment effects. Furthermore, this notion of privacy significantly reduces the predictive accuracy of the estimators \citep{Huang.2023b}. Thus, whenever the data curator is a trusted party (as assumed in our work), global differential privacy is sufficient and should be the notion of choice as it is less accuracy-compromising than its local counterpart.

\subsection{Future research directions for privacy in causal ML}
\label{sec:appendix_future_work}

Applying causal machine learning methods to real-world problems requires the methods to adhere to guidelines on ML safety and ethical ML. The privacy of predictions is only one aspect of the former. Of note, privacy has a direct effect on other ethical and safety-related aspects, such as the uncertainty or the fairness in the predictions. Hence, there are many impactful directions for future research. 

Uncertainty in causal ML, including causality-specific types of uncertainty such as unobserved confounding, have been studied in the literature \citep[e.g.][]{Jesson.2020, Melnychuk.2024, Schroder.2024b}. However, so far, there is no method that quantifies the uncertainty added through the privatization of the causal model or the predictions. Developing such methods is an important step for future research. 

Ethical causal ML has been mostly studied through the lens of causal fairness 
\citep[e.g.,][]{Frauen.2023a, Ma.2023, Plecko.2022, Schroder.2024}. However, the trade-off between fair and private causal ML is unexplored. Research outside the field of causal ML suggests opposing effects of privatization and fairness \citep[e.g.,][]{Tran.2021}. Therefore, investigating this relationship is an interesting avenue for future research and may help in providing ethical, causal treatment effect estimation.

\subsection{Theory on CATE estimation}
\label{sec:appendix_cate_estimation}

The estimation of causal quantities, such as the conditional average treatment effect $\tau(x) = \mathbb{E}[Y(1) -Y(0) \mid X = x]$, involves counterfactual quantities $Y(a)$, since only one outcome per individual can be observed. Here, $Y(a)$ is the potential outcome that would hypothetically be observed if a decision $a$ is taken.

Due to the above, identification of causal effects from observational data necessitates the following three assumptions that are common in the literature \citep[e.g.,][]{Curth.2021,Feuerriegel.2024}:
\begin{enumerate}
    \item Consistency: The potential outcome $Y(a)$ equals the observed factual outcome $Y$ when the individual was assigned treatment $A = a$.
    \item Positivity/overlap: The treatment assignment is not deterministic. Specifically, there exists a positive probability for each possible combination of features to be assigned to both the treated and the untreated group, i.e., there exists $ \kappa > 0$ such that $\kappa <\pi(x) < 1-\kappa$ for all $x \in \mathcal{X}$.
    \item Unconfoundedness: Conditioned on the observed covariates, the treatment assignment is independent of the potential outcomes, i.e., $Y(0), Y(1) \indep A \mid X$. Specifically, there are no unobserved variables (confounders) influencing both the treatment assignment and the outcome.
\end{enumerate}
Importantly, the above assumptions are standard in the literature. Further, the assumptions are necessary for consistent causal effect estimation for \emph{any} machine learning model. Then, CATE is identifiable as
\begin{equation}
    \tau(x) := \mathbb{E}[Y(1) -Y(0) \mid X = x] = \mu(x, 1) - \mu(x, 0),
\end{equation}
where $\mu(x,a) = \mathbb{E}[Y \mid X=x, A=a]$. To estimate $\tau$, one could thus train a machine learning model that estimates the aforementioned conditional expectation and then calculates the difference in conditional expectations for a given $X=x$. This is commonly referred to as \emph{plug-in} method, yet which has several drawbacks, as we outline below. Rather, the preferred way to estimate the CATE is through meta-learners. 

Meta-learners define model-agnostic algorithms, which can be implemented with arbitrary machine learning algorithms. Therefore, meta-learners are flexible and commonly employed in practice. CATE meta-learners can be classified into four different categories, depending on the ways they leverage the data: (1)~one-step plug-in learners, (2)~two-stage regression-adjusted learners, (3)~two-stage propensity-weighted learners, and (4)~two-stage Neyman-orthogonal learners~\citep{Curth.2021}. Meta-learners for specialized tasks have also been proposed recently, such as partial identification or treatment effects over time \citep[e.g.,][]{Frauen.2025, Oprescu.2023, Schweisthal.2024}

We now discuss each type of meta-learner and their potential drawbacks in more detail:
\begin{enumerate}
    \item \emph{One-step plug-in learner:} Here, ML models are trained to predict $\mu(x,a)$, either one single model for both treatment values or two different models, i.e., $\hat{\mu}(x,1)$ and $\hat{\mu}(x,0)$. Then, the CATE is estimated directly as $\hat{\tau}(x) = \hat{\mu}(x,1) - \hat{\mu}(x,0)$.
    \item \emph{Two-stage regression-adjusted learner:} In the observed data, the difference between factual and counterfactual outcomes is never present. Therefore, two-stage learners construct \emph{pseudo-outcomes} as surrogates, which equal the CATE in expectation. The regression-adjusted learner designs the pseudo-outcome through a reweighting based on the function ${\mu}$, which is estimated by $\hat{\mu}$ in the first stage. A misspecification of $\hat{\mu}$ results in a biased CATE estimator, where the error of $\hat{\mu}$ propagates with the same order into the final estimator $\hat{\tau}$.
    \item \emph{Two-stage propensity-weighted learner:} Here, the pseudo-outcome is constructed based on the Horvitz-Thompson transformation. Only the propensity function $\pi$ needs to be estimated in the first step by $\hat{\pi}$. A misspecification of $\hat{\pi}$ results in a biased CATE estimator. Here, again, the error of $\hat{\pi}$ propagates with the same order into the final estimator $\hat{\tau}$. 
    \item \emph{Two-stage Neyman-orthogonal learners:} Different from the above two-stage learners, these learners have a \emph{lower error} if either the propensity function $\pi$ or the outcome regressions $\mu$ are correctly specified. Specifically, the final estimation is first-order insensitive to small errors in the nuisance functions (known as quasi-oracle efficiency). This property is achieved through Neyman-orthogonal losses such as \Eqref{eq:orthogonal_loss}. 
\end{enumerate}
Hence, we focus on the two-stage Neyman-orthogonal learners throughout our work due to the clear advantages.

\newpage
\section{Extension to the DR-learner}
\label{sec:appendix_dr_learner}

In our main paper, we focused on the R-learner for the derivations and the experiments. However, \framework is applicable to any weighted two-stage Neyman-orthogonal CATE meta-learner. Here, we now provide an extension to the DR-learner.

For the DR-learner, the weight function simplifies to $\lambda^\mathrm{DR}(\pi(x)) = 1$. Then, the $d$-dimensional $(\varepsilon, \delta)$-differentially private CATE estimated through the DR-learner is given by
\begin{align}
    \hat{g}_{\mathrm{DP}}(\mathbf{x}; \hat{\eta}_{\mathrm{DP}}) = \hat{g}_D(\mathbf{x}; \hat{\eta}_{\mathrm{DP}}) + \gamma^\mathrm{DR}\big(T, D\big) \cdot c(\varepsilon, \delta,n) \cdot \mathbf{U},
\end{align}
where $\gamma^\mathrm{DR}\big(T, D\big)$ is a sample version of the gross-error sensitivity of $\gamma^\mathrm{DR}\big(T, P\big)$, $c(\varepsilon, \delta,n):= {5\sqrt{2\ln(n)\ln{(2/\delta)}}}\big/{(\varepsilon n)}$, and $\mathbf{U} \sim \mathcal{N}(0, \mI_d)$. Specifically, $\gamma^\mathrm{DR}\big(T, P\big)$ is given by
\scriptsize
\begin{align} 
    \gamma^\mathrm{DR}\big(T, P\big) &= \sup_{z \in \mathcal{Z}} \bigg \lVert h({{g}}^\ast, \mathbf{x}, x, \hat{\eta}_\mathrm{DP}) \, \bigg( \underbrace{\frac{(a - \hat{\pi}_\mathrm{DP}(x)) (y - \hat{\mu}_\mathrm{DP}(x,a))}{\hat{\pi}_\mathrm{DP}(x)(1-\hat{\pi}_\mathrm{DP}(x))} + \hat{\mu}_\mathrm{DP}(x,1) - \hat{\mu}_\mathrm{DP}(x,0)}_{\phi(z,{\hat{\eta}_\mathrm{DP}},\lambda(\hat{\pi}_\mathrm{DP}))} - g^\ast(x; \hat{\eta}_\mathrm{DP}) \bigg) \bigg \rVert_2.
\end{align}
\normalsize

\textbf{Observation:} 
The noise calibration necessary for the privatization requires maximizing the influence function. For the DR-learner, the IF includes the inverse of $\pi(x)(1 - \pi(x))$. Although we assume $\pi(x)$ to be bounded away from zero and one (due to the overlap assumption), maximizing over this term can lead to a very large calibration factor. This might limit the applicability of the DR-learner for differentially private CATE estimation.

Below, we evaluate the private DR-learner in the same manner as the R-learner in the main paper. We observe the expected behavior in Fig.~\ref{fig:dr_learner}.

\begin{figure}[h]
    \centering
    \includegraphics[width=0.7\linewidth]{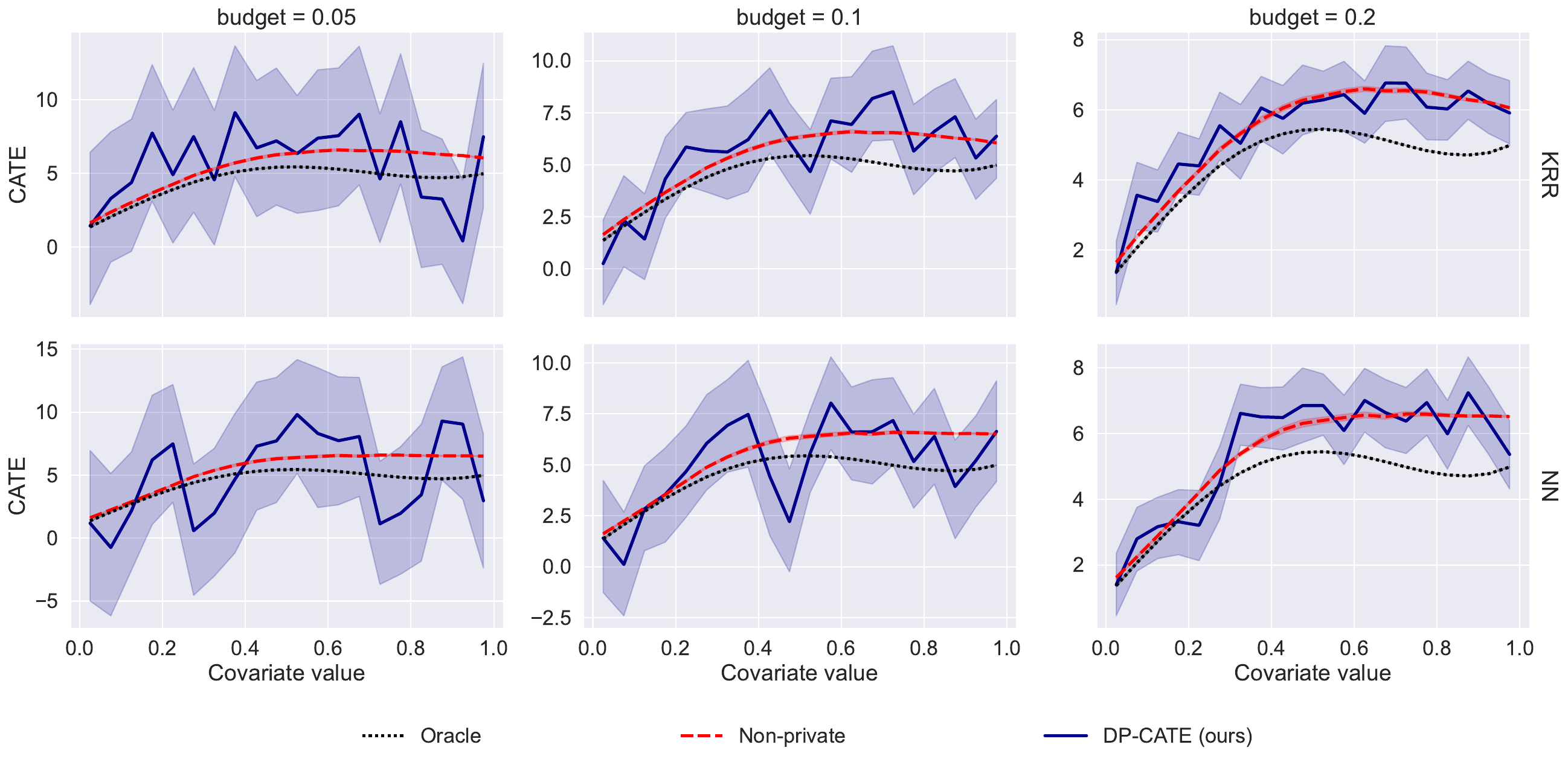}
    \caption{Evaluation of \framework for finite queries the on DR-learner for different base-learner specifications on dataset 1.}
    \label{fig:dr_learner}
\end{figure}

\newpage
\section{Similarities and differences of the finite and functional \framework framework}
\label{sec:appendix_algorithm}

\subsection{Discussion}

If one only wants to release private CATE estimates once, both approaches \cases{1} and \cases{2} are applicable. Nevertheless, the second approach, which we call the ``functional approach'', can also be employed for iteratively querying the function, which is especially of interest to medical practitioners aiming to assess the treatment effect of a drug for various patients with different characteristics. Put simply, the functional approach is relevant when companies want to release a decision support system to guide treatment decisions of individual patients. Since such treatment decisions are made based on the entire CATE model, the functional approach is preferred. In contrast, the first approach (which we call ``finite-query approach'') is preferred whenever only a few CATE values should be released. This is relevant for researchers (or practitioners) who may want to share the treatment effectiveness for a certain number of subgroups (but not for individual patients).

The functional approach requires sampling from a Gaussian process. Depending on whether one aims to report finitely many queries once through this approach or iteratively query the function, the sampling procedure from the Gaussian process $U(\cdot)$ differs. We highlight the differences in the type of queries in the following:
\begin{itemize}
    \item \textbf{Simultaneous finitely many queries:} When querying the function only once with a finite number of queries, sampling from a Gaussian process implies sampling from the prior distribution of the Gaussian process. In empirical applications, this means that one samples from a multivariate normal distribution. Therefore, the noise added in the functional approach is similar to the finite-query approach. However, the approaches \cases{1} and \cases{2} are not the same, as the noise added in the functional approach is correlated, whereas the noise variables in the finite-query approach are independent. Therefore, the functional approach might result in a consistent under- or overestimation of the target. Still, both approaches guarantee privacy.
    \item \textbf{Iteratively querying the function:} In this setting, sampling from a Gaussian process implies sampling from the posterior distribution of the process. Specifically, if no query has been made to the private function yet, the finite-query approach proceeds by providing the first private CATE estimate of query $x_1$. Observe that the privatization of every further iterative query $x_i$ needs to account for the information leakage through answering former queries. Thus, sampling from a Gaussian process now relates to sampling from the posterior distribution. To do so, it is necessary to keep track of and store former queries $x_1,\ldots,x_{i-1}$ and the privatized outputs. This setting is entirely different from our finite setting approach, in which we propose adding Gaussian noise scaled by gross-error sensitivity.
\end{itemize}

\subsection{Algorithms for \framework functions}

Private outputs of the function $\hat{g}_{\mathrm{DP}}(\cdot; \hat{\eta}_\mathrm{DP})$ in Theorem~\ref{thm:infinite_queries} can be released in two ways: (i)~the standard \emph{batch} setting presented in Algorithm \ref{alg:algorithm}, in which the private function outputs a private vector of CATE estimates \emph{once} and (ii)~the \emph{iterative} (or \emph{online}) setting, in which the function is queried iteratively, outputting one private CATE estimate at a time. Below, we provide an alternative algorithm to apply Theorem~\ref{thm:infinite_queries} in an iterative way.

\clearpage

\begin{algorithm}[H]
    \label{alg:algorithm}
    \footnotesize
    \caption{\footnotesize Pseudo-code of out \framework for functions}
    \KwIn{CATE meta-learner $\hat{g}_D(\cdot; \hat{\eta}_\mathrm{DP})$ trained on dataset $D$ with $|D| = n$, Gaussian kernel matrix $(K(x_i,x_j))_{i,j=1}^{d}$, query $\mathbf{x}_\mathrm{query} \in \mathcal{X}^d$, privacy budget $\varepsilon$, $\delta$, Lipschitz const. $L$, ridge regularization $\lambda$ 
    }
    \KwOut{Privatized CATE function $\hat{g}_\mathrm{DP}(\cdot; \hat{\eta}_\mathrm{DP})$}
    \tcc{Calculate calibration term $r$}
    $r \gets \sup_{(a, x) \in \{0,1\} \times\mathcal{X}}\big[\rho(a,{\hat{\pi}_\mathrm{DP}(x)})\big] \,(4L\sqrt{2\log(2/\delta)}) / ((\sqrt{2\pi}h)^q \cdot \lambda n \varepsilon)$\;
    \tcc{Sample from Gaussian process}
    $\mathbf{U} \sim \mathcal{N}(\mathbf{0}_d, (K(x_i,x_j))_{i,j=1,\ldots,d})$\;
    \tcc{Return private estimates}
    $\hat{g}_\mathrm{DP}(\mathbf{x}_\mathrm{query}; \hat{\eta}_\mathrm{DP}) \gets \hat{g}_D(\mathbf{x}_\mathrm{query}; \hat{\eta}_\mathrm{DP}) + r\cdot \mathbf{U}$\;
\end{algorithm}

\emph{Iterative approach:} If no query has been made to the private function yet, we can employ Algorithm~\ref{alg:algorithm_iterative} to provide a private CATE estimate $\hat{g}_\mathrm{DP}(x_1; \hat{\eta}_\mathrm{DP})$, where $x_1$ denotes the first query. Specifically, Algorithm~\ref{alg:algorithm_iterative} samples from a Gaussian Process \emph{prior} by sampling a suitable multivariate Gaussian noise variable. Observe that the privatization of every further iterative query $x_i$ needs to account for the information leakage by answering former queries. Thus, sampling from a Gaussian process now relates to sampling from the \emph{posterior} distribution. To do so, it is necessary to keep track and store former queries $x_1,\ldots,x_{i-1}$ and the privatized outputs $G_i = (\hat{g}_\mathrm{DP}(x_1; \hat{\eta}_\mathrm{DP}), \ldots, \hat{g}_\mathrm{DP}(x_{i-1}; \hat{\eta}_\mathrm{DP}))^T$.

\begin{algorithm}[H]
    \label{alg:algorithm_iterative}
    \caption{Pseudo-code of \framework for functions (iterative setting)}
    \footnotesize
    \KwIn{CATE meta-learner $\hat{g}_D(\cdot; \hat{\eta}_\mathrm{DP})$ trained on dataset $D$ with $|D| = n$, Gaussian kernel matrix conditioned on the former queries $x_1,\ldots, x_{i-1}$, former private outputs $G_i = (\hat{g}_\mathrm{DP}(x_1; \hat{\eta}_\mathrm{DP}), \ldots, \hat{g}_\mathrm{DP}(x_{i-1}; \hat{\eta}_\mathrm{DP}))^T$, new query $x_{i}$, privacy budget $\varepsilon$, $\delta$, Lipschitz const. $L$, ridge regularization $\lambda$ 
    }
    \KwOut{Privatized new prediction $\hat{g}_\mathrm{DP}(x_i; \hat{\eta}_\mathrm{DP})$}
    \If{i=1}{ 
    \tcc{Apply Algorithm~\ref{alg:algorithm}}
    }
    \Else{
    \tcc{Calculate pairwise kernel vector}
    $V_i \gets (K(x_1,x_{i}), \ldots, K(x_{i-1},x_i))^T$\;
    \tcc{Sample from Gaussian process posterior}
    $C_i \gets (K(x_k,x_l))_{k,l=1}^{i-1}$\;
    $s \sim \mathcal{N}(V_i^TC_i^{-1}G_i, K(x_i, x_i) - V_i^TC_i^{-1}V_i)$\;
    \tcc{Return private estimate}
    $\hat{g}_\mathrm{DP}(x_i; \hat{\eta}_\mathrm{DP}) \gets s$\; }
\end{algorithm}

\textbf{A note on complexity:}
Algorithm~\ref{alg:algorithm_iterative} requires storing and iterative updating the outcome vector $G_i$ and the inverse matrix $C_i^{-1}$. The computational complexity of Alg.~\ref{alg:algorithm_iterative} will thus grow with an increasing number of queries. This poses a limitation of the above approach for settings with many iterative queries.

\newpage

\section{Theoretical evaluation of excess risk}
\label{sec:appendix_excess_risk}

Let $\hat{g}_\mathrm{DP}(\cdot; \hat{\eta}_\mathrm{DP})$ and $ \hat{g}_\mathrm{D}(\cdot; \hat{\eta}_\mathrm{DP})$ denote the private and non-private CATE estimates, respectively. Also, let $ g^\ast(\cdot; \hat{\eta}_\mathrm{DP})$ denote the second-stage population minimizer. We follow the existing literature \citep[e.g.,][]{Bassily.2014} and measure success of our method through the worst-case expected \emph{excess empirical risk} with respect to dataset $D$, defined as
\begin{align}
    \xi_D (\hat{g}_\mathrm{DP}, \hat{\eta}_\mathrm{DP}) := \mathbb{E}_A\Big[\big\lVert\hat{g}_\mathrm{DP}(\cdot; \hat{\eta}_\mathrm{DP}) -  g^\ast(\cdot; \hat{\eta}_\mathrm{DP}) \big\rVert_{L_2}^2 - \big\lVert\hat{g}_D(\cdot; \hat{\eta}_\mathrm{DP}) -  g^\ast(\cdot; \hat{\eta}_\mathrm{DP}) \big\rVert_{L_2}^2\Big],
\end{align}
where the expectation is taken over the randomness in the second-stage privatization algorithm.

First, observe that, for a suitable calibration factor $r(\varepsilon, \delta, \hat{g}_D, \hat{\eta}_\mathrm{DP})$ and noise variable $U$ specified in Theorems~\ref{thm:finite_queries} and \ref{thm:infinite_queries}, we can bound the general formula of the excess risk by
\begin{align}
    \xi_D (\hat{g}_\mathrm{DP}, \hat{\eta}_\mathrm{DP}) &= \mathbb{E}_U\Big[\big\lVert \hat{g}_D(\cdot; \hat{\eta}_\mathrm{DP}) + r(\varepsilon, \delta, \hat{g}_D, \hat{\eta}_\mathrm{DP})\cdot U -  g^\ast(\cdot; \hat{\eta}_\mathrm{DP}) \big\rVert_{L_2}^2 \\
    & \qquad\quad  - \big\lVert\hat{g}_D(\cdot; \hat{\eta}_\mathrm{DP}) -  g^\ast(\cdot; \hat{\eta}_\mathrm{DP}) \big\rVert_{L_2}^2\Big] \nonumber \\
    &\leq r(\varepsilon, \delta, \hat{g}_D, \hat{\eta}_\mathrm{DP})^2 \cdot \mathbb{E}_U[\lVert U \rVert_{L_2}^2]
\end{align}

\textbf{Finitely many queries:}
From Theorem~\ref{thm:finite_queries}, we have $U = \mathbf{U} \sim \mathcal{N}(0, \mI_d)$ and, by setting $d = 1$, we yield
\begin{align}
    r(\varepsilon, \delta, \hat{g}_D, \hat{\eta}_\mathrm{DP}) = \gamma(T, D) \cdot \frac{5\sqrt{2\ln(n)\ln{(2/\delta)}}}{\varepsilon \, n},
\end{align}
where $n$ is the sample size of the training data $D$. As $\mathbb{E}_U[\lVert U \rVert_{L_2}^2] = \mathbb{E}_U[U^2] = 1$, the excess risk can be bounded by
\begin{align}
    \xi_D (\hat{g}_\mathrm{DP}, \hat{\eta}_\mathrm{DP}) \leq \gamma(T, D) \cdot \frac{50d\ln(n)\ln{(2/\delta)}}{(\varepsilon \, n)^2} = O(n^{-2} \ln(n)) \in o(n^{-1}).
\end{align}

\textbf{Functional queries:} From Theorem~\ref{thm:infinite_queries}, we have
\begin{align}
    r(\varepsilon, \delta, \hat{g}_D, \hat{\eta}_\mathrm{DP}) = \sup_{(a, x) \in \{0,1\} \times\mathcal{X}}\big[\rho(a,{\hat{\pi}_\mathrm{DP}(x)})\big] \,\frac{4L\sqrt{2\ln{(2/\delta)}}}{\big( \sqrt{2\pi} h \big)^q \lambda n \varepsilon},
\end{align}
where $h$ denotes the kernel bandwidth and $q$ the dimension of the covariates. Instead of perturbing a random variable $U$ as above, Theorem~\ref{thm:infinite_queries} perturbs the functional output by a sample path $U = U(\cdot) \in \mathcal{H}$ of a zero-centered Gaussian process with covariance function $K(x,x') = {(\sqrt{2\pi}h)^{-q}}\exp(-{\lVert x-x' \rVert_2^2}/{(2h^2}))$. First, we note that
\begin{align}
    \mathbb{E}_U[\mathbb{E}_X (U(X)^2)] = \mathbb{E}_X[\mathbb{E}_U (U(X)^2)] = \mathbb{E}_X[K(X, X)] = (\sqrt{2\pi}h)^{-q}.
\end{align}
Therefore, we get
\begin{align}
    \xi_D (\hat{g}_\mathrm{DP}, \hat{\eta}_\mathrm{DP}) &\leq \bigg(\sup_{(a, x) \in \{0,1\} \times\mathcal{X}}\big[\rho(a,{\hat{\pi}_\mathrm{DP}(x)})\big]\bigg)^2 \, \frac{32L^2\ln{(2/\delta)}}{(\sqrt{2\pi}h)^{3q} (\lambda n \varepsilon)^2} = O(n^{-2}) \in o(n^{-1}).
\end{align}

\newpage

\section{Supporting Lemmas and Definitions}
\label{sec:appendix_lemmas}

Our proofs are based on important properties of differentially private algorithms, which we introduce below. For proofs and further details, see \citep{AvellaMedina.2021, Dwork.2014, Nissim.2007}.

\begin{lemma}[Post-processing theorem]
\label{lem:post_processing}
    Let the mechanism $\mathcal{M}$ satisfy $(\varepsilon, \delta)$-DP. For any function $f$, it then holds that $f(\mathcal{M})$ satisfies $(\varepsilon, \delta)$-DP as well.
\end{lemma}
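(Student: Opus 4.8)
The plan is to reduce the claim to a single application of Definition~\ref{def:diff_privacy}, using the fact that post-processing cannot introduce any new dependence on the data. I would first handle the case of a \emph{deterministic} measurable map $f$, since this already contains the essential idea, and then lift the argument to randomized $f$ by conditioning on the auxiliary randomness. Throughout, the crucial structural point is that $f$ acts only on the output of $\mathcal{M}$ and never touches the dataset $D$ directly.

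For deterministic $f$, I would fix neighboring datasets $D \sim D'$ and an arbitrary measurable set $T$ in the range of $f$. The key step is the preimage identity: the event $\{f(\mathcal{M}(D)) \in T\}$ coincides with $\{\mathcal{M}(D) \in S\}$, where $S := f^{-1}(T)$ is measurable because $f$ is. Hence
\[
P\big(f(\mathcal{M}(D)) \in T\big) = P\big(\mathcal{M}(D) \in S\big).
\]
Applying the $(\varepsilon,\delta)$-DP guarantee of $\mathcal{M}$ to the set $S$ and then translating back through the same identity gives
\[
P\big(\mathcal{M}(D) \in S\big) \le \exp(\varepsilon)\, P\big(\mathcal{M}(D') \in S\big) + \delta = \exp(\varepsilon)\, P\big(f(\mathcal{M}(D')) \in T\big) + \delta,
\]
which is exactly the inequality required of $f \circ \mathcal{M}$ in Definition~\ref{def:diff_privacy}.

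For randomized $f$, I would model it as $f(r) = \tilde{f}(r,\omega)$, where $\tilde{f}$ is deterministic and $\omega$ is an auxiliary random variable independent of the data and of $\mathcal{M}$. Conditioning on $\omega = w$ reduces to the deterministic case and yields the inequality with the \emph{same} factor $\exp(\varepsilon)$ and additive term $\delta$ for each fixed $w$; integrating over the law of $\omega$ then preserves the inequality, since the bound is affine in the conditional probabilities. The main obstacle here is not difficulty but bookkeeping: one must verify that $f^{-1}(T)$ is measurable and, more importantly, that the post-processing randomness $\omega$ is genuinely independent of $D$, so that Tonelli's theorem justifies interchanging the integration over $\omega$ with the probability over the output of $\mathcal{M}$. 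Once these routine measurability and independence checks are in place, the result follows with no loss in either $\varepsilon$ or $\delta$.
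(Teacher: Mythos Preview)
Your argument is correct and is precisely the standard textbook proof of post-processing for $(\varepsilon,\delta)$-DP: pull back the event through $f^{-1}$, apply the DP inequality of $\mathcal{M}$ to the preimage, and handle randomized $f$ by conditioning on its independent auxiliary randomness. There is nothing to criticize in the reasoning.

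As for comparison with the paper: the paper does \emph{not} actually prove Lemma~\ref{lem:post_processing}. It merely states the result as a well-known property of differentially private mechanisms and refers the reader to the literature (Avella-Medina 2021; Dwork \& Roth 2014; Nissim et al.\ 2007) for a proof. Your write-up is essentially the argument one finds in Dwork \& Roth, so in that sense you are supplying exactly what the cited references contain rather than diverging from the paper's approach.
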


\begin{lemma}[Sequential composition property]
\label{lem:sequential_composition}
    Let the set of privacy mechanisms $\mathcal{M}_j$, $j=1,\ldots, k$ satisfy $(\varepsilon_j, \delta_j)$-DP. When applying the mechanisms $\mathcal{M}_j$ on the same dataset, the resulting overall mechanism (i.e., the concatenation of all the $\mathcal{M}_j$) guarantees $(\sum_{j=1}^{k}\varepsilon_j, \sum_{j=1}^{k}\delta_j)$-DP.
\end{lemma}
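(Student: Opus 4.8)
The statement is the standard basic (sequential) composition theorem for $(\varepsilon,\delta)$-DP, so the plan is to reduce to two mechanisms and then argue by conditioning on the first output. First I would induct on the number $k$ of mechanisms: the base case $k=1$ is the hypothesis itself, and in the inductive step I would regard the tuple $(\mathcal{M}_1,\dots,\mathcal{M}_{k-1})$ as a single mechanism, which by the induction hypothesis is $(\sum_{j<k}\varepsilon_j,\sum_{j<k}\delta_j)$-DP, and compose it with $\mathcal{M}_k$. Thus everything reduces to showing that composing an $(\varepsilon_1,\delta_1)$-DP mechanism $\mathcal{M}_1$ with an $(\varepsilon_2,\delta_2)$-DP mechanism $\mathcal{M}_2$ yields an $(\varepsilon_1+\varepsilon_2,\delta_1+\delta_2)$-DP mechanism. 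Here I would make explicit the structural assumption hidden in ``applied to the same dataset'': the internal randomness of the mechanisms is independent, so that for the composed output the joint density factorizes as $p_D(y_1,y_2)=p_D^1(y_1)\,p_D^2(y_2)$, with the analogous identity on a neighbour $D'$.

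\textbf{Core engine: conditioning plus a layer-cake upgrade.} For the two-mechanism case I would fix neighbours $D\sim D'$ and a measurable target $S$ in the product output space, and condition on the first coordinate. Writing $S_{y_1}:=\{y_2:(y_1,y_2)\in S\}$ for the section of $S$ and $g(y_1):=P(\mathcal{M}_2(D)\in S_{y_1})$, $g'(y_1):=P(\mathcal{M}_2(D')\in S_{y_1})$, the guarantee for $\mathcal{M}_2$ applied to each fixed section gives the pointwise bound $g(y_1)\le e^{\varepsilon_2}g'(y_1)+\delta_2$, while $P(\mathcal{M}(D)\in S)=\mathbb{E}_{y_1\sim\mathcal{M}_1(D)}[g(y_1)]$. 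The one genuinely reusable tool I would isolate as a sub-lemma is that $(\varepsilon,\delta)$-DP, although stated only for indicator events, automatically upgrades to arbitrary $[0,1]$-valued test functions: for measurable $h:\mathcal{R}\to[0,1]$ the layer-cake identity $\mathbb{E}[h(\mathcal{M}(D))]=\int_0^1 P(h(\mathcal{M}(D))>t)\,\mathrm{d}t$ applied to each super-level set $\{h>t\}$ yields $\mathbb{E}[h(\mathcal{M}(D))]\le e^{\varepsilon}\mathbb{E}[h(\mathcal{M}(D'))]+\delta$. Applying this with $\mathcal{M}=\mathcal{M}_1$ and $h=g'$ transfers the first mechanism's guarantee, and combining the two steps produces the desired multiplicative factor $e^{\varepsilon_1+\varepsilon_2}$ in front of $P(\mathcal{M}(D')\in S)$.

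\textbf{Main obstacle and closing.} The step I expect to be delicate is the exact accounting of the $\delta$ terms. Carried out naively, the conditioning argument leaves a residual factor, producing the weaker slack $\delta_1+e^{\varepsilon_1}\delta_2$ rather than the clean $\delta_1+\delta_2$, because the $e^{\varepsilon_1}$ from $\mathcal{M}_1$ multiplies the additive $\delta_2$ from $\mathcal{M}_2$; the same loss appears if one bounds the joint hockey-stick divergence directly from the factorization via the identity
\begin{align*}
    p_D^1 p_D^2 - e^{\varepsilon_1+\varepsilon_2}p_{D'}^1 p_{D'}^2 = p_D^2\big(p_D^1 - e^{\varepsilon_1}p_{D'}^1\big) + e^{\varepsilon_1}p_{D'}^1\big(p_D^2 - e^{\varepsilon_2}p_{D'}^2\big)
\end{align*}
together with $[\,u+v\,]_+\le[u]_+ +[v]_+$ and $\int p_D^2=\int p_{D'}^1=1$. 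To recover the stated $\sum_j\delta_j$ I would instead use the ``good event'' reformulation of approximate DP: each $\mathcal{M}_i$ agrees with a pure $\varepsilon_i$-DP core except on a bad event of probability at most $\delta_i$ (equivalently, the $\delta$-approximate max-divergence characterization $\int[p_D-e^{\varepsilon}p_{D'}]_+\le\delta$ in both orderings of $D,D'$); a union bound over the two bad events caps the total failure mass at $\delta_1+\delta_2$, and on the complementary good event exact multiplicativity of the pure-DP likelihood ratios delivers $\varepsilon_1+\varepsilon_2$. Beyond this bookkeeping the remaining points are routine: measurability of the sections $S_{y_1}$ and of $g,g'$ (Fubini/Tonelli justifies the conditioning), and the symmetric direction $D'\!\leftrightarrow\! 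D$, which is identical. The induction then closes to give $(\sum_j\varepsilon_j,\sum_j\delta_j)$-DP.
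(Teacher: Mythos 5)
Your proposal is correct, but there is nothing in the paper to compare it against: Lemma~\ref{lem:sequential_composition} is stated in Supplement~\ref{sec:appendix_lemmas} as a known supporting fact, with its proof deferred entirely to the cited references (Avella-Medina et al.; Dwork \& Roth; Nissim et al.) --- the paper never proves it. Your blind reconstruction is essentially the standard argument from that literature, and you correctly isolate the one genuine subtlety: the natural conditioning/layer-cake route (and equally the direct hockey-stick factorization you display) only yields $(\varepsilon_1+\varepsilon_2,\;\delta_1+e^{\varepsilon_1}\delta_2)$, so an extra idea is needed to get the clean additive $\delta$'s claimed in the lemma. Your fix --- decomposing each pair $(\mathcal{M}_i(D),\mathcal{M}_i(D'))$ as $(1-\delta_i)$ times an $(\varepsilon_i,0)$-indistinguishable core plus $\delta_i$ times arbitrary residual mass, then multiplying the pure-DP likelihood-ratio bounds on the product of the cores and union-bounding the residual mass by $\delta_1+\delta_2$ (in fact $1-(1-\delta_1)(1-\delta_2)$) --- is exactly how the clean basic composition theorem is proved in the literature (Dwork--Rothblum--Vadhan; Kairouz--Oh--Viswanath). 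Two remarks. First, your explicit independence assumption (non-adaptive composition) is consistent with how the paper uses the lemma: in the proof of Theorem~\ref{thm:finite_queries}, the two $(\varepsilon/2,\delta/2)$-private nuisance estimators $\hat{\mu}_\mathrm{DP}$ and $\hat{\pi}_\mathrm{DP}$ are composed into an $(\varepsilon,\delta)$-private $\hat{\eta}_\mathrm{DP}$, so adaptivity is not required. Second, the mixture (``good event'') characterization you invoke is itself a nontrivial lemma whose ``only if'' direction you assert rather than prove; in that sense your write-up, like the paper, ultimately rests on a citation --- but it identifies the correct auxiliary result and the reduction to it is sound, so the proposal stands as a valid proof sketch of a result the paper treats as a black box.
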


\begin{definition}[$\xi$-smooth and local sensitivities]
    
    Let $\hat{f}_D$ be a learned function on dataset $D$ with $|D|=n$ and let $\mathbf{x} \in \mathcal{X}^d$. Then, local sensitivity of $\hat{f}$ is defined as
    \begin{align}
        \mathit{LS}(\hat{f}, D):= \sup_{D^{'}\sim D, \,\mathbf{x} \in \mathcal{X}^d} \lVert \hat{f}_D(\mathbf{x}) - \hat{f}_{D^{'}}(\mathbf{x})\rVert_{2}.
    \end{align}
    where the supremum is taken wrt. $D'$ (unlike $\Delta_2$, where the supremum is taken wrt. to both $D$ and $D'$).
    Furthermore, a $\xi$-smooth sensitivity of $\hat{f}$ is defined as
    \begin{align}
        \mathit{SS}_{\xi}(\hat{f}, D):= \sup_{D^{'} \in \mathcal{Z}^n}\Big[ \exp(-\xi d_{\mathrm{H}}(D, D^{'})) \, LS(\hat{f}, D^{'})\Big],
    \end{align}
     where $\mathcal{Z}^n$ denotes the data domain.
\end{definition}

\begin{lemma}[Differential privacy via $\xi$-smooth sensitivity \citep{AvellaMedina.2021, Nissim.2007}]\label{lem:smooth_sensitivity_dp}
     Let $\hat{f}_D$ be a learned function on dataset $D$ with $|D|=n$ and let $\mathbf{x} \in \mathcal{X}^d$. Then, $\hat{f}_\mathrm{DP}(\mathbf{x})$ fulfills $(\varepsilon, \delta)$-differential privacy:
     \begin{align}
         \hat{f}_\mathrm{DP}(\mathbf{x}) = \hat{{f}}_D(\mathbf{x}) + \frac{5\sqrt{2\log(2/\delta)}}{\varepsilon} \mathit{SS}_{\xi}(\hat{f}, D)\cdot \mathbf{U},
     \end{align}
     where $\mathbf{U} \sim \mathcal{N}(0, \mI_d)$ and $\xi = \frac{\varepsilon}{4(d+2\log(2/\delta))}$.
\end{lemma}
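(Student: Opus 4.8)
The plan is to prove the claim directly from Definition~\ref{def:diff_privacy}, following the admissible-noise framework of \citet{Nissim.2007} as instantiated for the Gaussian density in \citet{AvellaMedina.2021}. Fix neighboring datasets $D \sim D'$, a query $\mathbf{x} \in \mathcal{X}^d$, and a measurable $S \subseteq \mathbb{R}^d$. Writing $\sigma_D := \frac{5\sqrt{2\log(2/\delta)}}{\varepsilon}\,\mathit{SS}_{\xi}(\hat{f}, D)$, the release $\hat{f}_\mathrm{DP}(\mathbf{x})$ is distributed as $\mathcal{N}\big(\hat{f}_D(\mathbf{x}),\, \sigma_D^2 \mI_d\big)$ under $D$ and as $\mathcal{N}\big(\hat{f}_{D'}(\mathbf{x}),\, \sigma_{D'}^2 \mI_d\big)$ under $D'$, so the two laws differ in both their center and their scale. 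The strategy is to introduce the intermediate law $p_{\mathrm{mid}} := \mathcal{N}\big(\hat{f}_{D'}(\mathbf{x}),\, \sigma_D^2 \mI_d\big)$ and split the privacy-loss variable as $\ln\frac{p_D(y)}{p_{D'}(y)} = \ln\frac{p_D(y)}{p_{\mathrm{mid}}(y)} + \ln\frac{p_{\mathrm{mid}}(y)}{p_{D'}(y)}$, the first summand a pure \emph{shift} at fixed scale $\sigma_D$ and the second a pure \emph{dilation} at fixed mean $\hat{f}_{D'}(\mathbf{x})$. I then budget $(\varepsilon/2,\delta/2)$ to each and recombine.

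First I would record two structural facts about smooth sensitivity. (i)~\emph{Local-sensitivity domination:} taking $D'=D$ in the defining supremum gives $\lVert \hat{f}_D(\mathbf{x}) - \hat{f}_{D'}(\mathbf{x}) \rVert_2 \le \mathit{LS}(\hat{f}, D) \le \mathit{SS}_{\xi}(\hat{f}, D)$. (ii)~\emph{$\xi$-smoothness:} for $D \sim D'$ one has $\mathit{SS}_{\xi}(\hat{f}, D) \le e^{\xi}\,\mathit{SS}_{\xi}(\hat{f}, D')$, since every $D''$ entering the supremum for $D$ obeys $d_{\mathrm{H}}(D',D'') \le d_{\mathrm{H}}(D,D'') + 1$, so its exponentially weighted local sensitivity inflates by at most $e^{\xi}$ when the base dataset is swapped. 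Fact (i) bounds the shift displacement and fact (ii) bounds the dilation ratio $\sigma_D/\sigma_{D'} \in [e^{-\xi}, e^{\xi}]$.

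For the \textbf{shift} I freeze the scale at $\sigma_D$ and invoke the classical Gaussian-mechanism analysis (Definition~\ref{def:gaussian} with $\Delta_2 = \mathit{SS}_{\xi}(\hat{f}, D)$). With $\Delta := \hat{f}_D(\mathbf{x}) - \hat{f}_{D'}(\mathbf{x})$, the loss $\ln\frac{p_D}{p_{\mathrm{mid}}}$ is Gaussian with mean $\lVert\Delta\rVert_2^2/(2\sigma_D^2)$ and standard deviation $\lVert\Delta\rVert_2/\sigma_D$; since $\lVert\Delta\rVert_2 \le \mathit{SS}_{\xi}(\hat{f}, D) = \frac{\varepsilon\,\sigma_D}{5\sqrt{2\log(2/\delta)}}$, a Gaussian tail bound shows this loss exceeds $\varepsilon/2$ only on an event of probability at most $\delta/2$. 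The generous prefactor $5\sqrt{2\log(2/\delta)}/\varepsilon$ leaves ample slack over the $\sqrt{2\log(1.25/\delta)}/\varepsilon$ that a bare $(\varepsilon/2,\delta/2)$ Gaussian mechanism would demand, which is precisely what allows both the shift and the dilation to fit inside the total budget.

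The \textbf{dilation} is the main obstacle, because the noise scale is itself data-dependent and cannot be conditioned away. Writing $\sigma_{D'} = e^{t}\sigma_D$ with $|t|\le\xi$ by fact (ii), the loss at a draw $y = \sigma_D\mathbf{U}$ is $\ln\frac{p_{\mathrm{mid}}(y)}{p_{D'}(y)} = d\,t - \tfrac{1}{2}\big(1 - e^{-2t}\big)\lVert\mathbf{U}\rVert_2^2$, which to first order in $t$ equals $t\big(d - \lVert\mathbf{U}\rVert_2^2\big)$, i.e., $t$ times the centered chi-squared fluctuation. Controlling it therefore reduces to a chi-squared tail bound: $\big|\lVert\mathbf{U}\rVert_2^2 - d\big| = O\!\big(\sqrt{d\log(2/\delta)} + \log(2/\delta)\big) \lesssim d + 2\log(2/\delta)$ outside an event of probability $\le \delta/2$. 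On that bulk event the dilation loss is at most $\xi\big(d + 2\log(2/\delta)\big)$ up to absolute constants, so the stated choice $\xi = \frac{\varepsilon}{4\,(d + 2\log(2/\delta))}$ makes it at most $\varepsilon/2$. Taking a union bound over the shift-failure and dilation-failure events (each of mass $\delta/2$) and summing the two $\varepsilon/2$ contributions on their intersection yields $P(\hat{f}_{\mathrm{DP},D}(\mathbf{x}) \in S) \le e^{\varepsilon}\, P(\hat{f}_{\mathrm{DP},D'}(\mathbf{x}) \in S) + \delta$, which is the assertion. The delicate point throughout is tracking the joint dependence on $d$ and $\delta$ in the chi-squared tail so that the linearization $1-e^{-2t}\approx 2t$ and the deviation bound together reproduce exactly the constant appearing in $\xi$.
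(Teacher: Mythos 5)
You should first note the ground truth here: the paper never proves Lemma~\ref{lem:smooth_sensitivity_dp}. It is imported from the literature, and the preamble of Supplement~\ref{sec:appendix_lemmas} explicitly defers to \citet{Nissim.2007} and \citet{AvellaMedina.2021} for proofs. Your argument is, in structure, exactly the proof those references give: the admissible-noise (sliding-plus-dilation) decomposition of the privacy loss, with the mean shift controlled via $\lVert \hat{f}_D(\mathbf{x})-\hat{f}_{D'}(\mathbf{x})\rVert_2 \le \mathit{LS}(\hat{f},D) \le \mathit{SS}_{\xi}(\hat{f},D)$, and the scale ratio controlled via the smoothness property $\mathit{SS}_{\xi}(\hat{f},D) \le e^{\xi}\,\mathit{SS}_{\xi}(\hat{f},D')$ for $D\sim D'$. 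That second fact is precisely the reason smooth sensitivity, rather than local sensitivity, must be used (noise calibrated to $\mathit{LS}$ would itself leak through its magnitude), and you state and justify both facts correctly. Your recombination via the privacy-loss random variable (bound each component by $\varepsilon/2$ off a $\delta/2$-probability bad event, take a union bound, and integrate over the good event) is also the clean way to assemble the two halves, and in particular avoids the $e^{\varepsilon/2}\delta/2$ inflation that a naive event-wise composition of two $(\varepsilon/2,\delta/2)$ guarantees would incur.

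Two caveats. First, there is a centering slip in your dilation step: under the mechanism run on $D$, the output satisfies $y - \hat{f}_{D'}(\mathbf{x}) = \Delta + \sigma_D\mathbf{U}$, not $\sigma_D\mathbf{U}$, so the variable entering the dilation loss is a \emph{noncentral} chi-squared. This is repairable either by absorbing the noncentrality $\lVert\Delta\rVert_2^2/\sigma_D^2$ and the cross term $2\langle\Delta,\mathbf{U}\rangle/\sigma_D$ (both are small enough that, after multiplication by $|t|\le\xi$, they fit in your slack), or, more cleanly, by ordering the decomposition the other way: dilate about $\hat{f}_D(\mathbf{x})$ first, so the dilation loss depends exactly on $\lVert\mathbf{U}\rVert_2^2$, and then shift at scale $\sigma_{D'}$. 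Second, your sketch defers the verification that the specific constants ($5$ in the noise multiplier, $4$ in $\xi$) actually close the budget; since the lemma as stated is nothing but those constants, a self-contained proof would need the explicit Gaussian and chi-squared tail computations, which is exactly what the cited references carry out. As a reconstruction of the proof the paper points to, your approach is the right one and contains no wrong ideas.
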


Finally, we show that in our case, the $\xi$-smooth sensitivity can be upper-bounded by the (appropriately scaled) gross-error sensitivity.

\begin{lemma}\label{lem:smooth_sensitivity_bound} \textnormal{\citep{AvellaMedina.2021}}
    Let $\hat{f}_D$ be a learned function on dataset $D$ with $|D|=n$ and let $\mathbf{x} \in \mathcal{X}^d$. Furthermore, let $\gamma(T, D)$ denote the sample gross error sensitivity of the parameter of the interest $T(D)=\hat{f}_D(\mathbf{x})$. Under minimal regularity and boundedness conditions for the $\xi$-smooth sensitivity $\mathit{SS}_{\xi}(\hat{f}, D)$ with $\xi = \frac{\varepsilon}{4(d+2\log(2/\delta))}$,
    it holds that
    \begin{align}
        \mathit{SS}_{\xi}(\hat{f}, D) \leq \frac{\sqrt{\log(n)}}{n}\gamma(T, D).
    \end{align}
\end{lemma}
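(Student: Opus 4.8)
The plan is to argue directly from the definition of $\xi$-smooth sensitivity and reduce everything to a first-order (von Mises) expansion of the vector-valued functional $T$, which here takes values in $\mathbb{R}^d$ through $T(D) = \hat f_D(\mathbf{x})$. Organizing the supremum in the definition of $\mathit{SS}_\xi$ by Hamming distance gives
\begin{align}
    \mathit{SS}_{\xi}(\hat f, D) = \sup_{k \ge 0} \exp(-\xi k)\, \sup_{D' :\, d_{\mathrm H}(D,D') = k} \mathit{LS}(\hat f, D'),
\end{align}
so it suffices to bound the local sensitivity $\mathit{LS}(\hat f, D')$ at each neighboring dataset $D'$ and then control how that bound grows with $k$. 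First I would fix $D'$ and observe that its Hamming-$1$ neighbors $D''$ correspond to replacing a single record, i.e.\ to perturbing the empirical measure $P_{D'}$ by $\tfrac1n(\delta_{z''} - \delta_{z'})$, a contamination of size $t = 1/n$ in the direction of a Dirac difference exactly as in Definition~\ref{def:IF}.

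Next I would expand $T$ around $P_{D'}$ along this direction. Writing the increment in integral (mean-value) form yields $T(D'') - T(D') = \tfrac1n\big(\mathrm{IF}(z''; T, P_{D'}) - \mathrm{IF}(z'; T, P_{D'})\big) + \text{rem}$, so the leading linear term is bounded by $\tfrac{2}{n}\sup_{z}\lVert \mathrm{IF}(z; T, P_{D'})\rVert_2 = \tfrac{2}{n}\gamma(T, P_{D'})$, precisely the gross-error sensitivity evaluated at the contaminated measure $P_{D'}$. The regularity and boundedness hypotheses enter here in two ways: to guarantee that the second-order remainder is $O(1/n^2)$ \emph{uniformly} over the one-point perturbation (hence negligible against the $1/n$ leading term), and to ensure that $z \mapsto \mathrm{IF}(z; T, \cdot)$ is stable enough that $\gamma(T, P_{D'})$ stays comparable to the sample value $\gamma(T, D)$ as $D'$ ranges over datasets within Hamming distance $k$ of $D$.

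Finally I would reinsert these per-$D'$ bounds into the displayed supremum. The discount $\exp(-\xi k)$, with $\xi = \varepsilon/\big(4(d + 2\log(2/\delta))\big)$ fixed in $n$, ensures that only Hamming distances up to order $\log n$ contribute non-trivially; balancing the controlled growth of $\sup_{d_{\mathrm H}(D,D')=k}\gamma(T,P_{D'})$ against this discount, and optimizing over $k$, produces the advertised $\tfrac{\sqrt{\log n}}{n}$ prefactor multiplying $\gamma(T, D)$. The main obstacle is exactly this uniform control step: one must verify, under the ``minimal regularity and boundedness conditions,'' both that the von Mises remainder is $o(1/n)$ uniformly over all one-point perturbations and that the gross-error sensitivity at the contaminated empirical measures $P_{D'}$ does not grow faster than the factor $\exp(\xi k)$ that the discount can absorb. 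It is the interplay of these two controls --- rather than the naive $O(1/n)$ bound one gets at $k=0$ alone --- that pins down the $\sqrt{\log n}$ factor. This is the content established by \citet{AvellaMedina.2021}, whose argument I would follow.
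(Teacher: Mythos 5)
Your proposal is correct and takes essentially the same route as the paper: the paper never proves this lemma itself but imports it from \citet{AvellaMedina.2021} (invoking its Lemmas 1 and 2 under the regularity conditions spelled out in the remark following Corollary~\ref{cor:parametric}), and your sketch---decomposing the smooth sensitivity over Hamming distance, bounding each local sensitivity by a first-order von Mises expansion so that the leading term is $\tfrac{2}{n}\gamma$, and using the $\sqrt{\log n}$ slack to absorb the uniform remainder and growth controls---is a faithful outline of that same cited argument. Since you explicitly defer to \citet{AvellaMedina.2021} for exactly the steps the paper also delegates to that reference, the two treatments coincide.
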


\newpage

\section{Proofs}
\label{sec:appendix_proofs}

\subsection{Proofs of Theorem~\ref{thm:finite_queries}, Corollaries~\ref{cor:parametric}, \ref{cor:non_parametric}, Theorem~\ref{thm:efficiency}}

\finitequeries*
\begin{proof}
    First, observe that by Lemma~\ref{lem:sequential_composition}, the first-stage nuisance estimation $\hat{\eta}_\mathrm{DP}$ overall fulfills $(\varepsilon, \delta)$-DP. By Lemma~\ref{lem:post_processing}, the privacy of the nuisances is not affected by the second-stage regression. Note that we estimate both stages on disjoint subsets of the data, i.e., $\bar{D} = \tilde{D} \cup D$.
    
    If we had access to the $\xi$-smooth sensitivity $\mathit{SS}_{\xi}(g, D)$ of the meta-learner with $\xi = \frac{\varepsilon}{4(d+2\log(2/\delta))}$, the estimator $\hat{g}_\mathrm{DP}(\mathbf{x}; \hat{\eta}_\mathrm{DP})$ with
    \begin{align}
        \hat{g}_\mathrm{DP}(\mathbf{x}; \hat{\eta}_\mathrm{DP}) = \hat{{g}}_D(\mathbf{x}; \hat{\eta}_\mathrm{DP}) + \frac{5\sqrt{2\log(2/\delta)}}{\varepsilon} \mathit{SS}_{\xi}(\hat{g}, D)\cdot \mathbf{U}, 
    \end{align}
    where $\mathbf{U} \sim \mathcal{N}(0, \mI_d)$, $\mathbf{x} \in \mathbb{R}^d$, would fulfill $(\varepsilon, \delta)$-differential privacy by Lemma~\ref{lem:smooth_sensitivity_dp} and the parallel composition property of DP. 
    
    However, the $\xi$-smooth sensitivity is highly difficult, or even impossible, to compute for general function classes such as neural networks. Therefore, we seek an upper bound on $\mathit{SS}_{\xi}(g, D)$ to ensure that the privacy guarantees stay valid while making it feasible to compute the calibration function $r(\varepsilon, \delta, \hat{g}_D, \hat{\eta}_\mathrm{DP})$. By Lemma~\ref{lem:smooth_sensitivity_bound}, we find such an upper bound through
    \begin{align}
        \mathit{SS}_{\xi}(\hat{g}, D) \leq \frac{\sqrt{\log(n)}}{n}\gamma(T, D) = \frac{\sqrt{\log(n)}}{n} \sup_{z \in \mathcal{Z}} \lVert \mathrm{IF}(z,T;D) \rVert_2,
    \end{align}
    where $\gamma(T, D)$ denotes the sample gross-error sensitivity with the parameter of interest $T$ defined as $T(P) = {g}^\ast(\mathbf{x}; \hat{\eta}_\mathrm{DP})$ (population version) or $T(D) = \hat{g}_D(\mathbf{x}; \hat{\eta}_\mathrm{DP})$ (sample version). 
    
    Now, the influence function of $T(P) = {g}^\ast(\mathbf{x}; \hat{\eta}_\mathrm{DP}) $ where $ {g}^\ast(\cdot; \hat{\eta}_\mathrm{DP}) = \arg \min_{g \in \mathcal{G}} R_P(g, \hat{\eta}_\mathrm{DP}, \lambda(\hat{\pi}_\mathrm{DP}))$ is as follows: 
    \scriptsize
    \begin{align} 
        \mathrm{IF}(z,T;P) &= \frac{\diff}{\diff t}\bigg[\big(\arg \min_{g \in \mathcal{G}} R_{(1 - t) P + t\delta_z}(g, \hat{\eta}_\mathrm{DP}, \lambda(\hat{\pi}_\mathrm{DP}))\big)(\mathbf{x}) \bigg] \bigg\vert_{t = 0} \\
         &= \frac{\diff}{\diff t}\bigg[\big(\arg \min_{g \in \mathcal{G}} \big\{ (1 - t) R_{P}(g, \hat{\eta}_\mathrm{DP}, \lambda(\hat{\pi}_\mathrm{DP})) + t \rho(a,\hat{\pi}_\mathrm{DP}(x))(\phi(z,\hat{\eta}_\mathrm{DP},\lambda(\hat{\pi}_\mathrm{DP})- g(x))^2 \big\}\big)(\mathbf{x}) \bigg] \bigg\vert_{t = 0}. 
    \end{align}
    \normalsize
    Here, the $\arg \min$ is achieved at $g^\ast_t(\cdot; \hat{\eta}_\mathrm{DP})$, and, for smooth models $\mathcal{G}$, the following holds: 
    \footnotesize
    \begin{align}
        &\frac{\diff}{\diff g} \big\{ (1 - t) R_{P}(g, \hat{\eta}_\mathrm{DP}, \lambda(\hat{\pi}_\mathrm{DP})) + t \rho(a,\hat{\pi}_\mathrm{DP}(x))(\phi(z,\hat{\eta}_\mathrm{DP},\lambda(\hat{\pi}_\mathrm{DP})- g(x))^2 \big\} \Big\vert_{g = g^\ast_t(\cdot; \hat{\eta}_\mathrm{DP})} = 0, 
    \end{align}
    \normalsize
    if and only if
    \footnotesize
    \begin{align}
        \underbrace{(1 - t) \frac{\diff}{\diff g} \big\{ R_{P}(g, \hat{\eta}_\mathrm{DP}, \lambda(\hat{\pi}_\mathrm{DP}))\big\} - 2 t \rho(a,\hat{\pi}_\mathrm{DP}(x))(\phi(z,\hat{\eta}_\mathrm{DP},\lambda(\hat{\pi}_\mathrm{DP}))- g(x)) \frac{\diff g(x)}{\diff g}}_{F(g,t)}  \Big\vert_{g = g^\ast_t(\cdot; \hat{\eta}_\mathrm{DP})} = 0.
    \end{align}
    \normalsize
    Therefore, by the implicit function theorem:
    \footnotesize
    \begin{align}
        & \frac{\diff}{\diff t}\Big[ g^\ast_t(\mathbf{x}; \hat{\eta}_\mathrm{DP})\Big] \bigg\vert_{t = 0} =  \bigg(\frac{\diff F(g, t)}{\diff g} \bigg\vert_{t = 0; \, g = g^\ast(\mathbf{x}; \hat{\eta}_\mathrm{DP})} \bigg)^{-1} \frac{\diff F(g, t)}{\diff t} \bigg\vert_{t = 0; \, g = g^\ast(\mathbf{x}; \hat{\eta}_\mathrm{DP})} \\
        & \qquad= \bigg(\frac{\diff^2}{\diff^2 g} \big\{ R_{P}(g, \hat{\eta}_\mathrm{DP}, \lambda(\hat{\pi}_\mathrm{DP}))\big\} \bigg\vert_{g = g^\ast(\mathbf{x}; \hat{\eta}_\mathrm{DP})} \bigg)^{-1} \bigg(- \underbrace{\frac{\diff}{\diff g} \big\{ R_{P}(g, \hat{\eta}_\mathrm{DP}, \lambda(\hat{\pi}_\mathrm{DP}))\big\}\bigg\vert_{g = g^\ast(\mathbf{x}; \hat{\eta}_\mathrm{DP})}}_{=0}  \nonumber \\
        & \qquad\qquad - 2 \rho(a,\hat{\pi}_\mathrm{DP}(x))(\phi(z,\hat{\eta}_\mathrm{DP},\lambda(\hat{\pi}_\mathrm{DP}))- g^\ast(x; \hat{\eta}_\mathrm{DP})) \cdot \frac{\diff g(x)}{\diff g} \bigg\vert_{g = g^\ast({x}; \hat{\eta}_\mathrm{DP})} \bigg).
    \end{align}
    \normalsize
    Now, by regrouping and setting $h(g^\ast, \mathbf{x}, x, \hat{\eta}_\mathrm{DP})$ as
    \begin{align}
        h(g^\ast, \mathbf{x}, x, \hat{\eta}_\mathrm{DP}) = 2\bigg(\frac{\diff^2}{\diff^2 g} \big\{ R_{P}(g, \hat{\eta}_\mathrm{DP}, \lambda(\hat{\pi}_\mathrm{DP}))\big\} \bigg\vert_{g = g^\ast(\mathbf{x}; \hat{\eta}_\mathrm{DP})} \bigg)^{-1} \cdot \frac{\diff g(x)}{\diff g}\bigg\vert_{g = g^\ast({x}; \hat{\eta}_\mathrm{DP})},
    \end{align}
    we obtain the desired influence function:
    \begin{align}
        \mathrm{IF}(z,T;P) = - h(g^\ast, \mathbf{x}, x, \eta) \cdot \rho(a,\hat{\pi}_\mathrm{DP}(x))(\phi(z,\hat{\eta}_\mathrm{DP},\lambda(\hat{\pi}_\mathrm{DP}))- g^\ast(x; \hat{\eta}_\mathrm{DP})).
    \end{align}
    
    Here, $h(g^\ast, \mathbf{x}, x, \hat{\eta}_\mathrm{DP}) \in \mathbb{R}^d$ depends on the machine learning model employed for the second-stage regression (see Corollaries \ref{cor:parametric} and \ref{cor:non_parametric}).
    Finally, taking the supremum over the data space $\mathcal{Z}$ and evaluating $\mathrm{IF}(z, T, P)$ at the empirical distribution $D$ with trained functions $\hat{\eta}_{\mathrm{DP}}$ and $\hat{g}_D(\cdot; \hat{\eta}_{\mathrm{DP}})$ states the desired result. 
\end{proof}

\corrparam*
\begin{proof}
    First, we note, that $\theta^*$ is an example of the M-estimator. Therefore, $\mathrm{IF}(z,T;P)$ yields an influence function of the M-estimator. Specifically, it can be shown that 
    \begin{align}
        \bigg(\frac{\diff^2}{\diff^2 g} \big\{ R_{P}(g, \hat{\eta}_\mathrm{DP}, \lambda(\hat{\pi}_\mathrm{DP}))\big\} \bigg\vert_{g = g^\ast(\mathbf{x}; \hat{\eta}_\mathrm{DP})} \bigg)^{-1} &= J_\theta \big[{g}^\ast(\mathbf{x}; \hat{\eta}_\mathrm{DP})\big] \cdot H_{{\theta}}^{-1} \text{ and} \\
        \frac{\diff g(x)}{\diff g}\bigg\vert_{g = g^\ast({x}; \hat{\eta}_\mathrm{DP})} &= \nabla_\theta \big[ g^\ast(x; \hat{\eta}_\mathrm{DP})\big].
    \end{align}
    Therefore, we yield the desired result
    \begin{align}
        h({{g}}^\ast, \mathbf{x}, x, \hat{\eta}_\mathrm{DP}) = 2 \,J_\theta \big[{g}^\ast(\mathbf{x}; \hat{\eta}_\mathrm{DP})\big] \cdot H_{{\theta}}^{-1} \cdot \nabla_\theta \big[ g^\ast(x; \hat{\eta}_\mathrm{DP})\big].
    \end{align}
\end{proof}

\begin{remark}
    Importantly, for the sample gross error sensitivity, $\gamma(T, D) = \sup_{z \in \mathcal{Z}} \lVert \mathrm{IF}(z,T;D) \rVert_2$, to be finite, we need additional regularity conditions.
    Specifically, we denote the score function of \Eqref{eq:orthogonal_loss} by 
    \begin{equation}
    \psi(z,g,\hat{\eta}_\mathrm{DP}) := \rho(a,\pi(x)) \, (\phi(z,\hat{\eta}_\mathrm{DP},\lambda(\hat{\pi}_\mathrm{DP})) - g(x)).
    \end{equation}
    First, note that $\psi(z,g,\hat{\eta}_\mathrm{DP})$ is differentiable w.r.t. $g(x)$ for all $z \in \mathcal{Z}$. Since we assume that our data stems from a bounded domain, there exist constants $K$ and $L$ such that
    $\psi(z,g,\hat{\eta}_\mathrm{DP})$ and the derivative $\psi^{'}(z,g,\hat{\eta}_\mathrm{DP})$ are uniformly bounded in $\mathcal{Z}$ by $K$ and $L$.
    Additionally, we assume that $\lVert \nabla_{\theta}\hat{g}_D(x;\hat{\eta}_\mathrm{DP}) \rVert_2 \leq \tilde{K}$. The constant $\Tilde{K}$ can be regulated through, e.g., gradient clipping. Furthermore, the Hessian $H_{\theta}$ of the loss w.r.t. $\theta$ is positive semi-definite by assumption.

    If $H_{\theta}$ is further restricted to be positive definite, we can employ Lemmas 1 and 2 from \citet{AvellaMedina.2021} to show the desired upper bound of the smooth sensitivity for a sufficiently large sample size $n$.\footnote{For details, we refer to \citet{AvellaMedina.2021}.} Specifically, the minimum sample size depends inversely on the privacy budget $\varepsilon$ and $\delta$ and grows in the bounds $L, K, \Tilde{K}$ and the ratio of the maximum and minimum eigenvalue of $H_{\theta}$. 

    Note: If the Hessian does not fulfill the above criteria and has negative eigenvalues, one can employ the ``damping'' trick \citep{Martens.2010} by approximating $H_{\theta}$ by $\Tilde{H}_{\theta}:= H_{\theta} + \alpha \mathbf{I}$ for $\alpha > 0$. The parameter $\alpha$ defines the conservativeness of the approximation.
\end{remark}

\corrnonparam*
\begin{proof}
    By the representer theorem (population version), the optimal solution of the weighted kernel ridge regression is given by $g^\ast(\cdot; \hat{\eta}_\mathrm{DP}) = \big(\mathbb{L}_\rho + \lambda\mathbb{I}\big)^{-1} {S}(\cdot)$, where $\mathbb{L}_\rho f(\cdot) = \mathbb{E}\big[\rho(A, \hat{\pi}_\mathrm{DP}(X)) \, K(\cdot, X) f(X)\big]$ and ${S}(\cdot) = \mathbb{E}\big[\rho(A, \hat{\pi}_\mathrm{DP}(X)) \, K(\cdot, X) \, \phi(Z,{\hat{\eta}_\mathrm{DP}},\lambda({\hat{\pi}_\mathrm{DP}}))\big]$. For a description, see, e.g., \citet{Berlinet.2004, Pillonetto.2022}.

    We now turn to deriving the influence function of $g^\ast(\mathbf{x}; \hat{\eta}_\mathrm{DP})$ at $z=(a, x, y)$, $\mathrm{IF}(z, T=g^\ast(\mathbf{x}; \hat{\eta}_\mathrm{DP}); P)$, to then receive the explicit form of $h({{g}}^\ast, \mathbf{x}, x, \hat{\eta}_\mathrm{DP})$.

    Consider the point-mass perturbation of the distribution $P$ in the direction of $z$ denoted as $P_{t} = (1-t)P + t \delta_z$. Then, the solution to the kernel ridge regression at $P_{t}$ is given by 
    \begin{align}
        g^\ast_t(\mathbf{x}; \hat{\eta}_\mathrm{DP}) = \big(\mathbb{L}_{P_{t}} + \lambda\mathbb{I}\big)^{-1} {S}_{P_{t}}(\mathbf{x}), 
    \end{align}
    where $\mathbb{L}_{P_{t}} = (1-t)L_\rho + t L_{\delta_z}$ and $S_{P_{t}} = (1-t)S + t S_{\delta_z}$ with $L_{\delta_z}f(\mathbf{x}) = \rho(a, \hat{\pi}_\mathrm{DP}(x)) \, K(\mathbf{x}, x) f(z)$ and $S_{\delta_z}(\mathbf{x}) = \rho(a, \hat{\pi}_\mathrm{DP}(x)) \, K(\mathbf{x}, x) \, \phi(z,{\hat{\eta}_\mathrm{DP}},\lambda({\hat{\pi}_\mathrm{DP}}))$.
    
    Then we get
    \footnotesize
    \begin{align}
        \mathrm{IF}(z, T = g^\ast(\mathbf{x}; \hat{\eta}_\mathrm{DP}); P) &= \frac{\diff}{\diff t} g^\ast_t(\mathbf{x}; \hat{\eta}_\mathrm{DP}) \Big\vert_{t=0}\\
        &= - \big(\mathbb{L}_\rho + \lambda\mathbb{I}\big)^{-1} \, L_{\delta_z}(\mathbf{x}) \, \big(\mathbb{L}_\rho + \lambda\mathbb{I}\big)^{-1}\, S(\mathbf{x}) + \big(\mathbb{L}_\rho + \lambda\mathbb{I}\big)^{-1} \, S_{\delta_z}(\mathbf{x}) \\
        &= - \rho(a, \hat{\pi}_\mathrm{DP}(x)) \cdot \Big( \big(\mathbb{L}_\rho + \lambda\mathbb{I}\big)^{-1} K(\cdot, x) \Big)(\mathbf{x}) \cdot  g^\ast(x; \hat{\eta}_\mathrm{DP}) \\
        & \qquad \qquad + \rho(a, \hat{\pi}_\mathrm{DP}(x)) \cdot \Big( \big(\mathbb{L}_\rho + \lambda\mathbb{I}\big)^{-1} K(\cdot, x) \Big)(\mathbf{x}) \cdot \phi(z,{\hat{\eta}_\mathrm{DP}},\lambda({\hat{\pi}_\mathrm{DP}})).
    \end{align}
    \normalsize
    With $h({{g}}^\ast, \mathbf{x}, x, \hat{\eta}_\mathrm{DP}) = \Big( \big(\mathbb{L}_\rho + \lambda\mathbb{I}\big)^{-1} K(\cdot, x) \Big)(\mathbf{x})$ we get
    \begin{align}
        \mathrm{IF}(z, g^\ast(\mathbf{x}; \hat{\eta}_\mathrm{DP}); P) = h({{g}}^\ast, \mathbf{x}, x, \hat{\eta}_\mathrm{DP}) \cdot \rho(a,{\hat{\pi}_\mathrm{DP}})
        \big(\phi(z,{\hat{\eta}_\mathrm{DP}},\lambda({\hat{\pi}_\mathrm{DP}}))- {{{g}}}^\ast(x; \hat{\eta}_\mathrm{DP})\big) \big),
    \end{align}
    which shows the desired result. 
\end{proof}

\quasioracle*
\begin{proof}
    First, we show that the privatization at the second stage of learning, namely the output perturbation, asymptotically does not affect the Neyman-orthogonality. Specifically, we show that the $L_2$ error can be upper-bounded as
    \begin{align}
        & \lVert g^\ast(\cdot; \eta) - \hat{g}_\mathrm{DP}(\cdot; \hat{\eta}_\mathrm{DP})\rVert_{L_2}^2 \\
        =& \lVert g^\ast(\cdot; \eta) - \hat{g}_D(\cdot; \hat{\eta}_\mathrm{DP}) - r(\varepsilon, \delta, \hat{g}_D, \hat{\eta}_\mathrm{DP})\cdot U \rVert_{L_2}^2 \\
        \le & \lVert g^\ast(\cdot; \eta)   - \hat{g}_D(\cdot; \hat{\eta}_\mathrm{DP}) \rVert_{L_2}^2 + r(\varepsilon, \delta, \hat{g}_D, \hat{\eta}_\mathrm{DP})\cdot \lVert U \rVert_{L_2}^2 \\
        \le & \lVert g^\ast(\cdot; \eta)   - {g}^\ast(\cdot; \hat{\eta}_\mathrm{DP}) \rVert_{L_2}^2 + \underbrace{\lVert g^\ast(\cdot; \hat{\eta}_\mathrm{DP})   - \hat{g}_D(\cdot; \hat{\eta}_\mathrm{DP}) \rVert_{L_2}^2}_{\text{depends on the model class $\mathcal{G}$}} + \underbrace{o_P(n^{-1})}_{\text{output perturbation}},
    \end{align}
    where the last inequality holds due to the excess risk of the output perturbation (see Appendix~\ref{sec:appendix_excess_risk}). Now, we can also employ the result from \cite{Morzywolek.2023}:
    \footnotesize
    \begin{align}
        \lVert g^\ast(\cdot; \eta)   - {g}^\ast(\cdot; \hat{\eta}_\mathrm{DP}) \rVert_{L_2}^2 \lesssim R_P\big({g}^\ast(\cdot; {\hat{\eta}_\mathrm{DP}}), {\hat{\eta}_\mathrm{DP}},  \lambda(\hat{\pi}_\mathrm{DP})\big) - R_P\big({g}^\ast(\cdot; {\eta}), {\hat{\eta}_\mathrm{DP}},  \lambda(\hat{\pi}_\mathrm{DP})\big) + R_2(\hat{\eta}_\mathrm{DP}, \eta),
    \end{align}
    \normalsize
    where $R_2(\hat{\eta}_\mathrm{DP}, \eta)$ is the higher-order error of misspecifying the nuisance functions. Specifically, the $R_2(\hat{\eta}_\mathrm{DP}, \eta)$ takes a different form for different learners:
    \begin{align}
        \text{DR-learner:} \quad & R_2(\hat{\eta}_\mathrm{DP}, \eta) = \sum_{a \in \{0,1\}} \lVert \mu(\cdot, a) - \hat{\mu}_{\mathrm{DP}}(\cdot, a)\rVert^2_{L_2} \cdot \lVert \pi - \hat{\pi}_{\mathrm{DP}}\rVert^2_{L_2}; \\
        \text{R-learner:} \quad & R_2(\hat{\eta}_\mathrm{DP}, \eta) = \sum_{a \in \{0,1\}} \lVert \mu(\cdot, a) - \hat{\mu}_{\mathrm{DP}}(\cdot, a)\rVert^2_{L_2} \cdot \lVert \pi - \hat{\pi}_{\mathrm{DP}}\rVert^2_{L_2} + \lVert \pi - \hat{\pi}_{\mathrm{DP}}\rVert^4_{L_4}.
    \end{align}
    
    Second, we want to demonstrate when the quasi-oracle efficiency holds for our \framework. Specifically, given that the non-private estimators $\hat{\mu}_{\tilde{D}}$ and $\hat{\pi}_{\tilde{D}}$ are estimated at the rate of at least $o_P(n^{-1/4})$, we need to show when the privatized nuisance functions $\hat{\eta}_{\mathrm{DP}}$ can also achive this rate.

    We focus on the smooth parametric models for $\hat{\eta}_{\tilde{D}}$ and a basic gradient perturbation method for privatization, namely DP-SGD \citep{Bassily.2014, Abadi.2016}. Let $m$ denote the number of model parameters. From the literature, \citep[e.g.,][]{Chen.2020}, we know that the convergence rate $r$ for $(\varepsilon, \delta)$-DP-SGD for convex and Lipschitz losses holds
    \begin{align}
        r^\textrm{DP-SGD} = O_P\bigg(\frac{\sqrt{m}}{n\varepsilon}\bigg),
    \end{align}
    where $n$ denotes the sample size. 

    Recall that in our setup we only allocate the budget $({\varepsilon}/{2}, {\delta}/{2})$ to the privatization of each $\hat{\pi}_{\tilde{D}}$ and $\hat{\mu}_{\tilde{D}}$. We yield the following result by straightforward calculation:
    For $\hat{\pi}$ and $\hat{\mu}$ with $m \leq \sqrt{n}^3 \varepsilon^2$, $r^\textrm{DP-SGD} \in o_P(n^{-{1}/{4}})$.

    For other losses (e.g., non-Lipschitz, non-convex) as well as advanced versions of DP-SGD, an upper bound of $O_P(\frac{1}{\sqrt{n}} + \frac{\sqrt{m\log(1/\delta)}}{n\varepsilon})$ has been derived in the literature. For an overview, see \citep{Wang.2024b}. In this case, we achieve 
    \begin{align}
        r^\textrm{DP-SGD} \in o_P(n^{-{1}/{4}}) \iff m\log(1/\delta) \leq n \varepsilon^2 (n^{1/4} -1)^2.
    \end{align}
    Overall, we thus face a trade-off between the model size in terms of the number of parameters, privacy budget, and sample size. For appropriately chosen factors, the first-stage nuisances achieve rate $o_P(n^{-{1}/{4}})$, rendering \framework oracle-efficient.
\end{proof}

\subsection{Proofs of Lemma~\ref{lem:RKHS_norm}, Theorem~\ref{thm:infinite_queries}}

\rkhslemma*
\begin{proof}    
    Observe that, for all $x \in \mathbb{R}^q$, the Gaussian kernel norm is given by $K(x,x) = \frac{1}{(\sqrt{2\pi}h)^q}$. Since the loss $l$ is convex and Lipschitz with constant $L$, the $w(\cdot)$-weighted loss is $\big(\sup_{x \in \mathcal{X}}[w(x)] \cdot L \big)$-Lipschitz. Thus, the overall loss in \Eqref{eq:lemma1-weighted-loss} is $\big( \sup_{x \in \mathcal{X}}[w(x)] \cdot L \big)$-admissable \citep[see][]{Hall.2013}. Therefore, we can employ a result from \citet{Hall.2013}, stating that the RKHS norm of minimizers of neighboring datasets can be bounded as 
    \begin{align}
        ||\hat{f}_D - \hat{f}_{D^{'}}||_\mathcal{H} \leq \sup_{x \in \mathcal{X}}[w(x)] \cdot \frac{L}{\lambda n} \sqrt{\sup_x K(x,x)}.
    \end{align}
    With our observation above, the result follows.
\end{proof}

\functionalqueries*
\begin{proof}
    Let $U(\cdot)$ be the sample path of a zero-centered Gaussian process with covariance function $K(x,x')$. For our proof, we make use of Corollary 9 from \citet{Hall.2013}: for $\hat{f} \in \mathcal{H}$, where $\mathcal{H}$ is the RKHS corresponding to the kernel $K$, the release of 
    \begin{align}
        \tilde{f}_D(\cdot) = \hat{f}_D(\cdot) + \frac{\Delta \cdot c(\delta)}{\varepsilon} \cdot U(\cdot)
    \end{align}
    is $(\varepsilon, \delta)$-differentially private for
    \begin{align}
        c(\delta) \geq \sqrt{2\log(\frac{2}{\delta})}
    \end{align}
    and
    \begin{align}
        \Delta \geq \sup_{D\sim D^{'}} \lVert \hat{f}_D - \hat{f}_{D^{'}}\rVert_{\mathcal{H}}.
    \end{align}
    Therefore, for $\Delta_\mathcal{H} := \sup_{D\sim D^{'}} \lVert \hat{g}_D(\cdot; \hat{\eta}_{\mathrm{DP}}) - \hat{g}_{D^{'}}(\cdot; \hat{\eta}_{\mathrm{DP}})\rVert_{\mathcal{H}}$, 
    \begin{align}
        \hat{g}_{\mathrm{DP}}(\cdot; \hat{\eta}_{\mathrm{DP}}) = \hat{g}_D(\cdot; \hat{\eta}_{\mathrm{DP}}) + \frac{\Delta_\mathcal{H} \sqrt{2\log(2/\delta)}}{\varepsilon}\cdot U(\cdot)
    \end{align}
    is $(\varepsilon, \delta)$-differentially private. Note that here, again, we implicitly make use of Lemma~\ref{lem:post_processing} and Lemma~\ref{lem:sequential_composition} in the same manner as in the proof of Theorem~\ref{thm:finite_queries}.
    
    Finally, from Lemma~\ref{lem:RKHS_norm}, we know that 
    \begin{align}
        \sup_{D\sim D^{'}} \lVert \hat{g}_D(\cdot; \hat{\eta}_{\mathrm{DP}}) - \hat{g}_{D^{'}}(\cdot; \hat{\eta}_{\mathrm{DP}})\rVert_{\mathcal{H}} \leq \frac{L}{\lambda n} (\sqrt{(2\pi)}h)^{-q}.
    \end{align}
    Thus, the desired result follows.
\end{proof}

\newpage

\section{Experiments}
\label{sec:appendix_experiments}

\subsection{Synthetic dataset generation}

We consider two different data-generation settings with different complexity. Both settings follow the mechanism described in \citep{Oprescu.2019}:
\begin{align}
    X_i &\sim \mathcal{U}[0,1]^p,\\
    A_i &= \mathbf{1}\{(X^T\beta)_i \geq \eta_i\},\\
    Y_i &= \theta(X_i)A_i + (X^T\gamma)_i + \epsilon_i,
\end{align}
where $\eta_i, \epsilon_i \sim \mathcal{U}[-1,1]$ and $\beta, \gamma$ have support with values drawn from $\mathcal{U}[0,0.3]$ and $\mathcal{U}[0,1]$. The dimension of the covariates is set to $p=2$ for Dataset 1 and $p=30$ for Dataset 2. In Dataset 1, the conditional treatment effect $\theta(x)$ is defined as
\begin{align}
    \theta(x) = \exp (2x_0) + 3\sin(4x_0).
\end{align}
In Dataset 2, $\theta(x)$ is defined as
\begin{align}
    \theta(x) = \exp (2x_0) + 3\sin(4x_1).
\end{align}
For each setting, we draw 3000 samples, which we split into train (90\%) and test (10\%) sets.

\subsection{Medical datasets}

\textbf{MIMIC-III:}
We showcase \framework on the MIMIC-III dataset \citep{Johnson.2016}, which includes electronic health records (EHRs) from patients admitted to intensive care units. We extract 8 confounders (heart rate, sodium, red blood cell count, glucose, hematocrit, respiratory rate, age, gender) and a binary treatment (mechanical ventilation) using an open-source preprocessing pipeline \citep{Wang.2020}. We define the outcome variable as the red blood cell count after treatment which we adapt to be more responsive to the treatment ventilation. To extract features from the patient trajectories in the EHRs, we sample random time points and average the value of each variable over the ten hours prior to the sampled time point. All samples with missing values and outliers are removed from the dataset. We define samples with values smaller than the  0.1st percentile or larger than the 99.9th percentile of the corresponding variable as outliers. Our final dataset contains 14719 samples, which we split into train (90\%) and test (10\%) sets.

\textbf{TCGA:}
The Cancer Genome Atlas (TCGA) dataset \citep{Weinstein.2013} contains a comprehensive and diverse collection of gene expression data collected from patients with different types of cancer. We consider the gene expression measurements of the 4,000 genes with the highest variability which we employ as our features $X$. The study cohort of consisted of 9659 patients. We model the binary treatment based on the sum of the 10 covariates with the highest variance and assign a constant treatment effect in the sum of the covariates.

\subsection{Implementation details}

Our experiments are implemented in Python. We provide our code in our GitHub repository: \url{https://github.com/m-schroder/DP-CATE}.

Our DP-CATE is model-agnostic and highly flexible. Therefore, we implement multiple versions of the R- and the DR-leaner with varying base learner instantiations. For the outcome and the propensity estimation, we always employ a multilayer perceptron regression and classification model, respectively. The models consist of one layer of width 32 with ReLu activation function and were optimized via Adam at a learning rate of 0.01 and batch size 128. 

For our experiments with the finite-query \framework, we implement the pseudo-outcome regression in the second stage as (a)~a kernel ridge regression model with a Gaussian kernel and default parameter specifications (KR) and (b)~a neural network (NN) with two hidden layers of width 32 with tanh activation function trained in the same manner as the nuisance models. In the experiments for our functional \framework, we employ a Gaussian kernel ridge regression with $m=50$ basis functions and default regularization parameter $\lambda = 1$. Furthermore, our functional \framework requires the specification of the Lipschitz constant $L$. This constant is either known based on the employed loss, e.g., $L=1$ for the $l_1$ loss or can be upper-bounded. In our settings, we employ the $l_2$ loss on a bounded domain. Therefore, although the $l_2$ loss itself is not Lipschitz, we can calculate $L$ numerically as the upper bound of the domain. We did not perform hyperparameter optimization, as our model-agnostic framework is applicable to any prediction model. 

Our framework requires calculating the supremum of the influence functions. We implemented the maximization problem through mathematical optimization using the L-BFGS-B, a limited-memory Broyden–Fletcher–Goldfarb–Shanno algorithm for solving nonlinear optimization problems with bounded variables. The solver was run with default parameters.

\end{document}